\documentclass{article}

\usepackage{graphicx,amsmath,amsfonts,amssymb,bm,url,breakurl,epsfig,epsf,color,fullpage,MnSymbol,mathbbol,fmtcount,semtrans,cite,caption,subcaption,multirow,comment}
\usepackage{algpseudocode}

\usepackage[utf8]{inputenc} 
\usepackage[T1]{fontenc}    
\usepackage{url}            
\usepackage{booktabs}       
\usepackage{amsfonts}       
\usepackage{nicefrac}       
\usepackage{microtype}      

  \makeatletter
\def\BState{\State\hskip-\ALG@thistlm}
\makeatother

  \usepackage{mathtools}

\usepackage{titlesec}

\usepackage{tikz}
\usepackage{pgfplots}
\usetikzlibrary{pgfplots.groupplots}

\setcounter{secnumdepth}{4}

\titleformat{\paragraph}
{\normalfont\normalsize\bfseries}{\theparagraph}{1em}{}
\titlespacing*{\paragraph}
{0pt}{3.25ex plus 1ex minus .2ex}{1.5ex plus .2ex}

\usepackage{movie15}

\usepackage{cite}
\usepackage{caption}
\usepackage[bottom,hang,flushmargin]{footmisc} 

\setlength{\captionmargin}{30pt}

\usepackage{hyperref}
\definecolor{darkred}{RGB}{150,0,0}
\definecolor{darkgreen}{RGB}{0,150,0}
\definecolor{darkblue}{RGB}{0,0,200}
\hypersetup{colorlinks=true, linkcolor=darkred, citecolor=darkgreen, urlcolor=darkblue}

\newtheorem{theorem}{Theorem}[section]

\newtheorem{lemma}[theorem]{Lemma}
\newtheorem{corollary}[theorem]{Corollary}

\newtheorem{definition}[theorem]{Definition}


\newcommand{\eps}{\varepsilon}

\newcommand{\beq}{\begin{equation}}
\newcommand{\eeq}{\end{equation}}

\newcommand{\nn}{\nonumber}

\newcommand{\A}{{\mtx{A}}}



\newcommand{\Ub}{{\mtx{U}}}
\newcommand{\bN}{{\bar{N}}}

\newcommand{\V}{{\mtx{V}}}

\newcommand{\B}{{{\mtx{B}}}}

\newcommand{\Sb}{{{\mtx{S}}}}
\newcommand{\Gb}{{\mtx{G}}}

\newcommand{\Ac}{{\cal{A}}}

\newcommand{\TK}{{T}}
\newcommand{\NW}{{N_w}}

\newcommand{\Ginf}{\boldsymbol{\Gamma}_\infty}
\newcommand{\Binf}{\boldsymbol{\bar{\Gamma}}_\infty}
\newcommand{\cba}[1]{\Gb_{#1}}

\newcommand{\cbe}[1]{\hat{\Gb}_{#1}}
\newcommand{\ca}[1]{\F_{#1}}

\newcommand{\Rbo}[1]{{\mtx{R}_{#1}}}

\newcommand{\Cb}{{\mtx{C}}}
\newcommand{\Cbb}{{\mtx{\bar{C}}}}

\newcommand{\Ab}{{\mtx{\bar{A}}}}
\newcommand{\HK}{{\text{Ho-Kalman}}}
\newcommand{\Ah}{{\mtx{\hat{A}}}}
\newcommand{\Bh}{{\mtx{\hat{B}}}}
\newcommand{\Ch}{{\mtx{\hat{C}}}}
\newcommand{\Bb}{{\mtx{\bar{B}}}}
\newcommand{\Tb}{{\mtx{T}}}
\newcommand{\Hb}{{\mtx{H}}}



\newcommand{\Hbp}{\mtx{H}^+}  

\newcommand{\Hbm}{\mtx{H}^-}  

\newcommand{\Hbh}{\mtx{{\hat{L}}}} 

\newcommand{\Hbhp}{\mtx{\hat{H}}^+} 

\newcommand{\Hbhm}{\mtx{\hat{H}}^-} 
\newcommand{\Hbhl}{\mtx{{\hat{L}}}} 

\newcommand{\Hbl}{\mtx{{L}}} 

\newcommand{\smin}{\sigma_{\min}(\Hbl)}

\newcommand{\Ob}{{\mtx{O}}} 

\newcommand{\Obh}{{\mtx{\hat{O}}}}

\newcommand{\Qb}{{\mtx{Q}}} 

\newcommand{\Qbh}{{\mtx{\hat{Q}}}}

\newcommand{\F}{{\mtx{F}}}

\newcommand{\bSi}{{\boldsymbol{{\Sigma}}}}

\newcommand{\bmu}{{\boldsymbol{{\mu}}}}
\newcommand{\Db}{{\mtx{D}}}
\newcommand{\db}{{\vct{d}}}

\newcommand{\Iden}{{\mtx{I}}}
\newcommand{\M}{{\mtx{M}}}

\newcommand{\Lb}{{\mtx{L}}}

\newcommand{\order}[1]{{\cal{O}}(#1)}

\newcommand{\z}{{\vct{z}}}

\newcommand{\tn}[1]{\|{#1}\|_{\ell_2}}
\newcommand{\tf}[1]{\|{#1}\|_{F}}

\newcommand{\Cc}{\mathcal{C}}

\newcommand{\Sc}{\mathcal{S}}

\newcommand{\Nn}{\mathcal{N}}

\newcommand{\vb}{\vct{v}}

\newcommand{\w}{\vct{w}}

\newcommand{\ab}{\vct{a}}
\newcommand{\bb}{\vct{b}}
\newcommand{\ub}{{\vct{u}}}
\newcommand{\cb}{{\vct{c}}}
\newcommand{\ubb}{\bar{\vct{u}}}
\newcommand{\h}{\vct{h}}

\newcommand{\g}{{\vct{g}}}

\newcommand{\Zb}{\mtx{Z}}

\newcommand{\Fc}{\mathcal{F}}


\newcommand{\rrr}[1]{{\textcolor{black}{{#1}}}}

\newcommand{\yh}{\hat{\y}}

\newcommand{\m}{\vct{m}}

\newcommand{\wb}{\bar{\w}}

\newcommand{\x}{\vct{x}}

\newcommand{\y}{\vct{y}}

\newcommand{\W}{\mtx{W}}
\newcommand{\bgl}{{~\big |~}}



\definecolor{emmanuel}{RGB}{255,127,0}

\newcommand{\R}{\mathbb{R}}
\newcommand{\Pro}{\mathbb{P}}

\newcommand{\tr}[1]{{\text{tr}(#1)}}
\newcommand{\E}{\operatorname{\mathbb{E}}}
\newcommand{\Eb}{\mtx{E}}

\newcommand{\e}{\vct{e}}
\newcommand{\eb}{{\vct{e}}}

\newcommand{\vct}[1]{\bm{#1}}
\newcommand{\mtx}[1]{\bm{#1}}


\newcommand{\Pc}{{\cal{P}}}
\newcommand{\X}{{\mtx{X}}}
\newcommand{\Y}{{\mtx{Y}}}
\newcommand{\Vb}{{\mtx{V}}}

\numberwithin{equation}{section} 

\def \endprf{\hfill {\vrule height6pt width6pt depth0pt}\medskip}
\newenvironment{proof}{\noindent {\bf Proof} }{\endprf\par}





\usepackage[utf8]{inputenc} 
\usepackage[T1]{fontenc}    
\usepackage{url}            
\usepackage{booktabs}       
\usepackage{amsfonts}       
\usepackage{nicefrac}       
\usepackage{microtype}      

\usepackage{algorithm}

\makeatletter
\def\BState{\State\hskip-\ALG@thistlm}
\makeatother

\title{Non-asymptotic Identification of LTI Systems\\from a Single Trajectory}

\author{Samet Oymak\thanks{{Department of Electrical and Computer Engineering, University of California, Riverside, CA.}}\quad and\quad Necmiye Ozay\thanks{Electrical Engineering and Computer Science Department, University of Michigan, Ann Arbor, MI.}~\footnote{Version 2 has two improvements: First, paper now uses spectral radius rather than largest singular value hence applies to a larger class of systems. Secondly, new sample complexity bounds are provided for approximating the system's Hankel operator via estimated Markov parameters. These bounds leverage stability and treat the system as if it has a logarithmic order.}}
\date{}
\pagestyle{plain}

\begin{document}
\maketitle

\begin{abstract} We consider the problem of learning a realization for a linear time-invariant (LTI) dynamical system from input/output data. Given a single input/output trajectory, we provide finite time analysis for learning the system's Markov parameters, from which a balanced realization is obtained using the classical Ho-Kalman algorithm. By proving a stability result for the Ho-Kalman algorithm and combining it with the sample complexity results for Markov parameters, we show how much data is needed to learn a balanced realization of the system up to a desired accuracy with high probability. 
\end{abstract}

\section{Introduction}
Many modern control design techniques rely on the existence of a fairly accurate state-space model of the plant to be controlled. Although in some cases a model can be obtained from first principles, there are many situations in which a model should be learned from input/output data. Classical results in system identification provide asymptotic convergence guarantees for learning models from data \cite{ljung1998system,van2012subspace}. However, finite sample complexity properties have been rarely discussed in system identification literature \cite{weyer1999finite}; and earlier results are conservative \cite{lwom}. 

There is recent interest from the machine learning community in data-driven control and non-asymptotic analysis. Putting aside the reinforcement learning literature and restricting our attention to linear state-space models, the work in this area can be divided into two categories: (i) directly learning the control inputs to optimize a control objective or analyzing the predictive power of the learned representation \cite{faradonbeh2017finite,hazan2017learning,fazel2018global}, (ii) learning the parameters of the system model from limited data \cite{pereira2010learning,hardt2016gradient,dean2017sample,boczar2018finite,lwom,arora2018towards}. For the former problem, the focus has been on exploration/exploitation type formulations and regret analysis. Since the goal is to learn how to control the system to achieve a specific task, the system is not necessarily fully learned. On the other hand, the latter problem aims to learn a general purpose model that can be used in different control tasks, for instance, by combining it with robust control techniques \cite{dean2017sample,tu2017non,boczar2018finite}. The focus for the latter work has been to analyze data--accuracy trade-offs. 








In this paper we focus on learning a realization for an LTI system from a single \emph{input/output} trajectory. This setting is significantly more challenging than earlier studies that assume that (multiple independent) \emph{state} trajectories are available  \cite{dean2017sample,lwom}. One of our main contributions is to derive sample complexity results in learning the Markov parameters, to be precisely defined later, of the system using a least squares algorithm \cite{fledderjohn2010comparison}. Markov parameters play a central role in system identification \cite{ljung1998system} and they can also be directly used in control design when the system model itself is not available \cite{skelton1994data,furuta1995discrete,santillo2010adaptive}. In Section \ref{sec low ord}, we show that using few Markov parameter estimates and leveraging stability assumption, one can approximate system's Hankel operator with near optimal sample size. When only input/output data is available, it is well known that the system matrices can be identified only up to a similarity transformation even in the noise-free case but Markov parameters are identifiable. Therefore, we focus on obtaining a realization. One classical technique to derive a realization from the Markov parameters is the Ho-Kalman (a.k.a., eigensystem realization algorithm -- ERA) algorithm \cite{ho1966effective}. The Ho-Kalman algorithm constructs a balanced realization\footnote{Balanced realizations give a representation of the system in a basis that orders the states in terms of their effect on the input/output behavior. This is relevant for determining the system order and for model reduction \cite{sanchez1998robust}.} for the system from the singular value decomposition of the Hankel matrix of the Markov parameters. By proving a stability result for the Ho-Kalman algorithm and combining it with the sample complexity results, we show how much data is needed to learn a balanced realization of the system up to a desired accuracy with high probability.








\section{Problem Setup}\label{sec setup}
We first introduce the basic notation. Spectral norm $\|\cdot\|$ returns the largest singular value of a matrix. Multivariate normal distribution with mean $\bmu$ and covariance matrix $\bSi$ is denoted by $\Nn(\bmu,\bSi)$. $\X^*$ denotes the transpose of a matrix $\X$. $\X^\dagger$ returns the Moore--Penrose inverse of the matrix $\X$. Covariance matrix of a random vector $\vb$ is denoted by $\bSi(\vb)$. $\tr{\cdot}$ returns the trace of a matrix. $c,C,c',c_1,c_2,\dots$ stands for absolute constants.

Suppose we have an observable and controllable linear system characterized by the system matrices $\A\in\R^{n\times n},\B\in\R^{n\times p},\Cb\in\R^{m\times n},\Db\in\R^{m\times p}$ and this system evolves according to
\begin{align}
&\x_{t+1}=\A\x_t+\B\ub_t+\w_t,\\\label{main rel}
&\y_{t}=\Cb\x_t+\Db\ub_t+\z_t.
\end{align}
Our goal is to learn the characteristics of this system and to provide finite sample bounds on the estimation accuracy. Given a horizon $\TK$, we will learn the first $\TK$ Markov parameters of the system. The first Markov parameter is the matrix $\Db$, and the remaining parameters are the set of matrices $\{\Cb\A^i\B\}_{i=0}^{\TK-2}$. As it will be discussed later on, by learning these parameters,
\begin{itemize}
\item we can provide bounds on how well $\y_t$ can be estimated for a future time $t$,
\item we can identify the state-space matrices $\A,\B,\Cb,\Db$ (up to a similarity transformation).
\end{itemize}

{\bf{Problem setup:}} We assume that $\{\ub_t,\w_t,\z_t\}_{t=1}^{\infty}$ are vectors that are independent of each other with distributions $\ub_t\sim\Nn(0,\sigma_u^2\Iden_p)$, $\w_t\sim\Nn(0,\sigma_w^2\Iden_n)$, and $\z_t\sim\Nn(0,\sigma_z^2\Iden_m)$\footnote{While we assume diagonal covariance throughout the paper, we believe our proof strategy can be adapted to arbitrary covariance matrices.}. $\ub_t$ is the input vector which is known to us. $\w_t$ and $\z_t$ are the process and measurement noise vectors respectively. We also assume that the initial condition of the hidden state is $\x_1=0$. Observe that Markov parameters can be found if we have access to cross correlations $\E[\y_t\ub_{t-k}^*]$. In particular, we have the identities
\[
\E\left[\frac{\y_t\ub_{t-k}^*}{\sigma_u^2}\right]=\begin{cases}\Db~~~\text{if}~~~k=0,\\\Cb\A^{k-1}\B~~~\text{if}~~~k\geq 1\end{cases}.
\]
Hence, if we had access to infinitely many independent $(\y_t,\ub_{t-k})$ pairs, our task could be accomplished by a simple averaging. In this work, we will show that, one can robustly learn these matrices from a small amount of data generated from a single realization of the system trajectory. The challenge is efficiently using finite and dependent data points to perform reliable estimation. Observe that, our problem is identical to learning the concatenated matrix $\cba{}$ defined as
\[
\cba{}=[\Db,~\Cb\B,~\Cb\A\B,~\dots,~\Cb\A^{\TK-2}\B]\in\R^{m\times \TK p}.
\]
Next section describes our input and output data. Based on this, we formulate a least-squares procedure that estimates $\cba{}$. The estimate $\cbe{}$ will play a critical role in the identification of the system matrices.

\subsection{Least-Squares Procedure}\label{lsp sec}
To describe the estimation procedure, we start by explaining the data collection process.
Given a single input/output trajectory $\{\y_t,\ub_t\}_{t=1}^{\bN}$, we generate $N$ subsequences of length $\TK$, where $\bN=\TK+N-1$ and $N\geq 1$. To ease representation, we organize the data $\ub_t$ and the noise $\w_t$ into length $\TK$ chunks denoted by the following vectors,
\begin{align}
&\ubb_t=[\ub_{t}^*~\ub_{t-1}^*~\dots~\ub_{t-\TK+1}^*]^*\in\R^{\TK p},\\
&\wb_t=[\w_{t}^*~\w_{t-1}^*~\dots~\w_{t-\TK+1}^*]^*\in\R^{\TK n}.
\end{align}
In a similar fashion to $\cba{}$ define the matrix, 
\begin{align*}
&\ca{}=[{\mtx{0}}~\Cb~\Cb\A~\dots~\Cb\A^{\TK-2}]\in\R^{m\times \TK n}.
\end{align*}
To establish an explicit connection to Markov parameters, $\y_t$ can be expanded recursively until $t-\TK+1$ to relate the output to the input $\ubb_t$ and Markov parameter matrix $\cba{}$ as follows,
\begin{align}
\y_t&=\Cb\x_t+\Db\ub_t+\z_t,\nn\\
&=\Cb(\A\x_{t-1}+\B\ub_{t-1}+\w_{t-1})+\Db\ub_t+\z_t,\nn\\
&=\Cb\A^{\TK-1}\x_{t-\TK+1}+\sum_{i=1}^{\TK-1}\Cb\A^{i-1}\B\ub_{t-i}+\sum_{i=1}^{\TK-1}\Cb\A^{i-1}\w_{t-i}+\Db\ub_t+\z_t,\nn\\
&=\cba{}\ubb_t+\ca{}\wb_t+\z_t+\eb_t,\label{lsp eq}
\end{align}
where, $\eb_t=\Cb\A^{\TK-1}\x_{t-\TK+1}$ corresponds to the error due to the effect of the state at time $t-\TK+1$.
With this relation, we will use $(\ubb_t,\y_t)_{t=\TK}^{\bN}$ as inputs and outputs of our regression problem. We treat $\wb_t$, $\z_t$, and $\eb_t$ as additive noise and attempt to estimate $\cba{}$ from covariates $\ubb_t$. Note that, the noise terms are zero-mean including $\eb_t$ since we assumed $\x_1=0$. With these in mind, we form the following least-squares problem,
\[
\cbe{}=\underset{{\X\in\R^{m\times \TK p}}}{\arg\min}~\sum_{t=\TK}^{\bN} \tn{\y_t-\X\ubb_t}^2.
\]
Defining our label matrix $\Y$ and input data matrix $\Ub$ as,
\begin{align}
\Y=[\y_{\TK},~\y_{\TK+1},~\dots,~\y_{\bN}]^*\in\R^{N\times m}~~~\text{and}~~~\Ub=[\ubb_{\TK},~\ubb_{\TK+1},~\dots,~\ubb_{\bN}]^*\in\R^{N\times \TK p},\label{label and data}
\end{align}
we obtain the minimization $\min_{\X}\tf{\Y-\Ub\X^*}^2$. Hence, the least-squares solution $\hat{\cba{}}$ is given by
\begin{align}
\cbe{}=(\Ub^{\dagger}\Y)^*,\label{pinv est}
\end{align}
where $\Ub^{\dagger}=(\Ub^*\Ub)^{-1}\Ub^*$ is the left pseudo-inverse of $\Ub$. Ideally, we would like the estimation error $\tf{\cba{}-\cbe{}}^2$ to be small. Our main result bounds the norm of the error as a function of the sample size $N$ and noise levels $\sigma_w$ and $\sigma_z$. 

\section{Results on Learning Markov Parameters}
Let $\rho(\cdot)$ denote the spectral radius of a matrix which is the largest absolute value of its eigenvalues. Our results in this section apply to stable systems where $\rho(\A)<1$. Additionally we need a related quantity involving $\A$ which is the spectral norm to spectral radius ratio of its exponents defined as $\Phi(\A)=\sup_{\tau\geq 0} \frac{\|\A^\tau\|}{\rho(\A)^\tau}$. We will assume $\Phi(\A)<\infty$ which is a mild condition: For instance, if $\A$ is diagonalizable, $\Phi(\A)$ is a function of its eigenvector matrix and is finite. 
Another important parameter is the steady state covariance matrix of $\x_t$ which is given by
\[
\Ginf=\sum_{i=0}^\infty\sigma_w^2 \A^i(\A^*)^i+\sigma_u^2\A^i\B\B^*(\A^*)^i.
\]
It is rather trivial to show that for all $t\geq 1$, $\bSi(\x_t)\preceq \Ginf$. We will use $\Ginf$ to bound the error $\e_t$ due to the unknown state at time $t-\TK+1$. Following the definition of $\e_t$, we have that $\|\bSi(\e_t)\|\leq  \|\Cb\A^{\TK-1}\|^2\|\Ginf\|$. We characterize the impact of $\e_t$ by its ``effective standard deviation'' $\sigma_e$ that is obtained by scaling the bound on $\sqrt{\|\bSi(\e_t)\|}$ by an additional factor $\Phi(\A)\sqrt{T/(1-\rho(\A)^{2\TK})}$ which yields,
\begin{align}
\sigma_e=\Phi(\A)\|\Cb\A^{\TK-1}\|\sqrt{\frac{\TK\|\Ginf\|}{1-\rho(\A)^{2\TK}}}.\label{sigma e}
\end{align}
Our first result is a simplified version of Theorem \ref{main thm} and captures the problem dependencies in terms of the {\em{total standard deviations}} $\sigma_z+\sigma_e+\sigma_w\|\ca{}\|$ and the {\em{total dimensions}} $m+p+n$.
\begin{theorem} \label{main thm simple}Suppose $\rho(\A)^\TK\leq 0.99$ and $N\geq N_0=c\TK q \log^2(2\TK q)\log^2(2N q)$ where $q=p+n+m$. Given observations of a single trajectory until time $\bN=N+\TK-1$, with high probability\footnote{Precise statement on the probability of success is provided in the proof}, the least-square estimator of the Markov parameter matrix obeys
\[
\|\cbe{}-\cba{}\|\leq \frac{{\sigma_z+\sigma_e+\sigma_w\|\ca{}\|}}{\sigma_u}\sqrt{\frac{N_0}{N}}.
\]
\end{theorem}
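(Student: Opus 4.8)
The plan is to reduce the estimation error to a product of two random quantities and control each in turn. Writing $\Eb := \Y - \Ub\cba{}^*$ for the stacked noise matrix, whose $t$-th row equals $(\ca{}\wb_t + \z_t + \eb_t)^*$ by \eqref{lsp eq}, the normal equations give $\cbe{}^* - \cba{}^* = \Ub^\dagger\Eb = (\Ub^*\Ub)^{-1}\Ub^*\Eb$, so that
\[
\|\cbe{}-\cba{}\| \;\le\; \|(\Ub^*\Ub)^{-1}\|\,\|\Ub^*\Eb\| \;=\; \frac{\|\Ub^*\Eb\|}{\sigma_{\min}(\Ub)^2}.
\]
It therefore suffices to establish two high-probability bounds: (i) a lower bound $\sigma_{\min}(\Ub)^2 \gtrsim N\sigma_u^2$, and (ii) an upper bound $\|\Ub^*\Eb\| \lesssim \sigma_u\sqrt{N N_0}\,(\sigma_z+\sigma_e+\sigma_w\|\ca{}\|)$, where the factor $\sqrt{N_0}\sim\sqrt{\TK q}\,\log(\cdots)$ collects the ambient dimension and logarithmic overhead. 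Dividing (ii) by (i) yields $\frac{\sigma_z+\sigma_e+\sigma_w\|\ca{}\|}{\sigma_u}\sqrt{N_0/N}$, which is exactly the claimed bound.

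For step (i), the obstacle is that the rows $\ubb_t$ are dependent: consecutive windows share $\TK-1$ of their input blocks. Since $\E[\ubb_t\ubb_t^*]=\sigma_u^2\Iden_{\TK p}$, I would partition the $N$ time indices into $\TK$ groups spaced $\TK$ apart, so that within each group the windows are \emph{disjoint} and hence mutually independent. Each group contributes an independent Wishart-type sum to which a standard Gaussian covariance concentration (matrix-Bernstein or a net argument) applies, and a union bound over the $\TK$ groups controls $\|\Ub^*\Ub - N\sigma_u^2\Iden_{\TK p}\|$. The factor $\TK$ lost in this decoupling is precisely what produces the $\TK q\log^2(\cdots)$ dependence in $N_0$; as soon as $N\ge N_0$, this forces $\sigma_{\min}(\Ub)^2\ge N\sigma_u^2/2$ with high probability.

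For step (ii), I would split $\Eb=\Eb_z+\Eb_w+\Eb_e$ according to the three noise sources. The measurement term $\Eb_z$ is independent of $\Ub$, so conditioning on $\Ub$ renders $\Ub^*\Eb_z$ a Gaussian matrix whose conditional covariance is governed by $\sigma_z^2\|\Ub\|^2\lesssim \sigma_z^2 N\sigma_u^2$; a covering argument over the sphere then gives $\|\Ub^*\Eb_z\|\lesssim \sigma_z\sigma_u\sqrt{N}\,(\sqrt{\TK p}+\sqrt{m})\lesssim \sigma_z\sigma_u\sqrt{NN_0}$. The process-noise term $\Eb_w$ is likewise independent of $\Ub$ (the input is independent of $\w_t$), and although the windows $\wb_t$ overlap, conditioning on $\Ub$ again reduces $\Ub^*\Eb_w$ to a Gaussian whose conditional covariance scales with $\sigma_w^2\|\ca{}\|^2$, yielding the $\sigma_w\|\ca{}\|$ contribution at the same rate.

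The hard part is the state-error term $\Eb_e$, whose $t$-th row $\eb_t=\Cb\A^{\TK-1}\x_{t-\TK+1}$ depends on inputs and process noise occurring \emph{before} the window $\ubb_t$. Thus $\eb_t$ is independent of $\ubb_t$ but is correlated with the earlier rows $\ubb_s$, $s<t$, so $\Eb_e$ is \emph{not} independent of $\Ub$ and the clean conditioning argument breaks down. This is exactly the term for which the effective standard deviation $\sigma_e$ of \eqref{sigma e} is engineered: using $\bSi(\x_t)\preceq\Ginf$ together with $\Phi(\A)$ and the geometric factor $\sqrt{\TK/(1-\rho(\A)^{2\TK})}$ (finite since $\rho(\A)^\TK\le 0.99$), one bounds the aggregated effect $\|\Ub^*\Eb_e\|$ by $\sigma_u\sqrt{NN_0}\,\sigma_e$, absorbing the slowly decaying, correlated tail of the state through the stability of $\A$. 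I expect this to demand the most delicate argument, likely a decoupling of $\eb_t$ from the shared input coordinates combined with a summation of the decaying $\|\A^i\|$ contributions; the three bounds then assemble by the triangle inequality into $\|\Ub^*\Eb\|\lesssim \sigma_u\sqrt{NN_0}\,(\sigma_z+\sigma_e+\sigma_w\|\ca{}\|)$, which with step (i) completes the proof.
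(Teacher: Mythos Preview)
Your high-level decomposition---bound $\|(\Ub^*\Ub)^{-1}\|$ and then split $\Ub^*(\W\ca{}^*+\Zb+\Eb)$ into three pieces---is exactly the paper's, and you are right that the $\Zb$ term is easy and that $\Eb$ is the delicate one. But the concrete mechanisms you propose for the conditioning of $\Ub$ and for the process-noise term would not achieve the stated $N_0\sim\TK q\log^2(\cdots)$.

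For step (i), the $\TK$-group decoupling costs an \emph{extra} factor of $\TK$ on top of the one already present in $N_0$. Each group $\Ub_g$ has only $\approx N/\TK$ independent rows against $\TK p$ columns, so standard Gaussian covariance concentration gives $\|\Ub_g^*\Ub_g-(N/\TK)\sigma_u^2\Iden\|\lesssim\sigma_u^2\sqrt{(N/\TK)\TK p}$; summing the $\TK$ deviations via the triangle inequality (the groups are \emph{not} independent of one another, so nothing sharper is available) yields a total of order $\sigma_u^2\TK\sqrt{Np}$, which falls below $N\sigma_u^2/2$ only when $N\gtrsim\TK^2 p$. The factor $\TK$ inside $N_0$ is the ambient dimension of the unknown, not a decoupling cost; your argument would force $N_0\sim\TK^2 q$. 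The paper avoids this by embedding $\Ub$ as a deterministic $N\times\TK p$ submatrix of an $\bN p\times\bN p$ random Gaussian \emph{circulant} and invoking the restricted-isometry bound of Krahmer--Mendelson--Rauhut, which gives $\|\Ub^*\Ub/N-\sigma_u^2\Iden\|\le\sigma_u^2/2$ as soon as $N\gtrsim\TK p\log^2(2\TK p)\log^2(2\bN p)$.

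The same circulant device handles $\Ub^*\W$. Your plan to condition on $\Ub$ and treat $\W\ca{}^*$ as Gaussian glosses over the fact that the rows $\wb_t$ of $\W$ overlap in the same Toeplitz fashion as the rows of $\Ub$: the conditional covariance of $\vb^*\Ub^*\W\ca{}^*$ has cross terms coming from $\E[\wb_t\wb_s^*]\neq 0$ for $|t-s|<\TK$, and a crude Cauchy--Schwarz on those again picks up an extra $\TK$. The paper instead concatenates $\sigma_u^{-1}\ub_t$ and $\sigma_w^{-1}\w_t$ into a single $(p+n)$-vector, forms the corresponding $N\times\TK(p+n)$ data matrix $\M$, embeds $\M$ into a circulant of size $\bN(p+n)$, and reads off $(\sigma_u\sigma_w)^{-1}\Ub^*\W$ as an off-diagonal block of $\M^*\M-N\Iden$; the circulant RIP then directly gives $\|\Ub^*\W\|\lesssim\sigma_u\sigma_w\sqrt{N\cdot\TK(p+n)}$ up to logarithms.

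For $\Ub^*\Eb$, your decoupling instinct is the right one and is what the paper does: split into $\TK$ subsequences $\{(\ubb_{t+i\TK},\eb_{t+i\TK})\}_i$, along each of which $\ubb_{t+i\TK}$ is independent of the filtration generated up to time $t+(i-1)\TK$ while $\eb_{t+i\TK}$ is measurable in it. Each subsequence is then handled by the subgaussian martingale lemma of Simchowitz et al.\ after a Hanson--Wright bound on $\sum_i(\ab^*\eb_{t+i\TK})^2$; the factor $\Phi(\A)\sqrt{\TK/(1-\rho(\A)^{2\TK})}$ in $\sigma_e$ and the hypothesis $\rho(\A)^\TK\le 0.99$ enter precisely when bounding the spectral and Frobenius norms of the matrix in that quadratic form.
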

\noindent{\bf{Remark:}} Our result is stated in terms of the spectral norm error $\|\cbe{}-\cba{}\|$. One can deduce the following Frobenius norm bound by naively bounding $\sigma_e,\sigma_z$ terms and swapping $\|\ca{}\|$ term by $\tf{\ca{}}$ (following \eqref{pinv terms}, \eqref{error decomp}). This yields, $\tf{\cbe{}-\cba{}}\leq \frac{(\sigma_z+\sigma_e)\sqrt{m}+\sigma_w\tf{\ca{}}}{\sigma_u}\sqrt{\frac{N_0}{N}}$.

Our bound individually accounts for the the process noise sequence $\{\w_\tau\}_{\tau=t-\TK+1}^t$, measurement noise $\z_t$, and the contribution of the unknown state $\x_{t-\TK+1}$. Setting $\sigma_w$ and $\sigma_z$ to $0$, we end up with the unknown state component $\sigma_e$. $\sigma_e$ has a $\|\Cb\A^{\TK-1}\|$ multiplier inside hence larger $\TK$ implies smaller $\sigma_e$. On the other hand, larger $\TK$ increases the size of the $\cba{}$ matrix as its dimensions are $m\times \TK p$. This dependence is contained inside the $N_0$ term which grows proportional to $\TK p$ (ignoring $\log$ terms). $\TK p$ corresponds to the minimum observation period since there are $m\TK p$ unknowns and we get to observe $m$ measurements at each timestamp. Hence, ignoring logarithmic terms, our result requires $N\gtrsim\TK p$ and estimation error decays as $\sqrt{\TK p/N}$. This behavior is similar to what we would get from solving a linear regression problem with independent noise and independent covariates \cite{friedman2001elements}. This highlights the fact that our analysis successfully overcomes the dependencies of covariates and noise terms.

Our main theorem is a slightly improved version of Theorem \ref{main thm simple} and is stated below. Theorem \ref{main thm simple} is operational in the regime $N\gtrsim\TK (p+m+n)$. In practical applications, hidden state dimension $n$ can be much larger than number of sensors $m$ and input dimension $p$. On the other hand, the input data matrix $\Ub$ becomes tall as soon as $N\geq \TK p$ hence ideally \eqref{pinv est} should work as soon as $N\gtrsim \TK p$. Our main result shows that reliable estimation is indeed possible in this more challenging regime. It also carefully quantifies the contribution of each term to the overall estimation error.
\begin{theorem} \label{main thm}Suppose system is stable (i.e.~$\rho(\A)<1$) and $N\geq c\TK p \log^2(2\TK p)\log^2(2N p)$. We observe a trajectory until time $\bN=N+\TK-1$. Then, with high probability, the least-square estimator of the Markov parameter matrix obeys
\begin{align}
\|\cbe{}-\cba{}\|\leq \frac{R_w+R_e+R_z}{\sigma_u\sqrt{N}},\label{the bound}
\end{align}
where $R_w,R_e,R_z$ are given by
\begin{align*}
&R_z={8\sigma_z\sqrt{\TK p+m}},\\
&R_w={\sigma_w\|\ca{}\|\max\{\sqrt{\NW},\NW/\sqrt{N}\}},\\
&R_e={C\sigma_e\sqrt{(1+\frac{m\TK}{N(1-\rho(\A)^{\TK})})(\TK p+m)}}.
\end{align*}
Here $c,C>0$ are absolute constants and $\NW=c\TK q \log^2(2\TK q)\log^2(2Nq)$ where $q=p+n$. 
\end{theorem}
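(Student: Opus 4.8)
The plan is to reduce the theorem to (i) a conditioning estimate for the input matrix $\Ub$ and (ii) spectral-norm bounds on three noise-correlation matrices, and then to combine them. First I would write the data model in matrix form: transposing \eqref{lsp eq} and stacking over $t=\TK,\dots,\bN$ gives $\Y=\Ub\cba{}^*+\W\ca{}^*+\Zb+\Eb$, where $\W,\Zb,\Eb$ are the matrices whose rows are $\wb_t^*$, $\z_t^*$, and $\eb_t^*$ respectively. Substituting into $\cbe{}^*=\Ub^{\dagger}\Y$ and using $\Ub^{\dagger}\Ub=\Iden$ yields the exact error identity
\[
\cbe{}-\cba{}=\big[(\Ub^*\Ub)^{-1}\Ub^*(\W\ca{}^*+\Zb+\Eb)\big]^*,
\]
so that, by the triangle inequality,
\[
\opnorm{\cbe{}-\cba{}}\le \opnorm{(\Ub^*\Ub)^{-1}\Ub^*\W\ca{}^*}+\opnorm{(\Ub^*\Ub)^{-1}\Ub^*\Zb}+\opnorm{(\Ub^*\Ub)^{-1}\Ub^*\Eb}.
\]
These three summands are exactly what should become $R_w/(\sigma_u\sqrt N)$, $R_z/(\sigma_u\sqrt N)$, and $R_e/(\sigma_u\sqrt N)$.

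The backbone of every term is a conditioning estimate for $\Ub$. Since the blocks inside each chunk $\ubb_t$ are independent, $\E[\Ub^*\Ub]=N\sigma_u^2\Iden$ is perfectly conditioned, so it suffices to prove the concentration $\Ub^*\Ub\succeq c_1 N\sigma_u^2\Iden$ together with a matching upper bound $\opnorm{\Ub}^2\lesssim N\sigma_u^2$, with high probability once $N\gtrsim \TK p\,\mathrm{polylog}$. The difficulty is that consecutive rows $\ubb_t$ overlap in $\TK-1$ of their $\TK$ blocks, so they are highly dependent: a naive $\varepsilon$-net plus Hanson--Wright argument pays an extra factor of $\TK$ (the spectral norm of the banded Toeplitz covariance of $(\ubb_t^*v)_t$), giving only $N\gtrsim \TK^2 p$. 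Obtaining the sharp $N\gtrsim \TK p$ (up to the $\log^2$ factors in the statement) is the crux; I would exploit the exact block-Toeplitz structure of $\Ub$, viewing it as a sliding window over the i.i.d.\ sequence $\ub_1,\dots,\ub_{\bN}$, rather than treating the rows as generic dependent vectors.

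Given conditioning, the two noise terms driven by $\Zb$ and $\W$ are comparatively routine because $\z_t$ and $\w_t$ are independent of the input, hence of $\Ub$. For $\Zb$ I would condition on $\Ub$; then $\Ub^*\Zb$ is Gaussian with covariance governed by $\sigma_z^2\,\Ub^*\Ub$, and a standard bound gives $\opnorm{\Ub^*\Zb}\lesssim \sigma_z\opnorm{\Ub}\sqrt{\TK p+m}$, which after dividing by $\sigma_{\min}(\Ub)^2\gtrsim N\sigma_u^2$ produces the $R_z=8\sigma_z\sqrt{\TK p+m}$ term. The term $\Ub^*\W\ca{}^*$ is handled similarly, but now $\Ub$ and $\W$ are two \emph{independent} structured Gaussian matrices (both sliding windows), so the bound must account for the effective noise covariance $\sigma_w^2\ca{}\ca{}^*$ and for a joint object of dimension $q=p+n$; this is where $\NW$ and the factor $\max\{\sqrt{\NW},\NW/\sqrt N\}$ of $R_w$ enter, the two branches corresponding to whether $N$ dominates $\NW$.

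The genuinely delicate term is $\Ub^*\Eb=\sum_t\ubb_t\eb_t^*$ with $\eb_t=\Cb\A^{\TK-1}\x_{t-\TK+1}$, because the state $\x_{t-\TK+1}$ depends on past inputs and process noise and is therefore \emph{correlated} with $\Ub$ across time, so conditioning on $\Ub$ is no longer available. The saving grace is that $\x_{t-\TK+1}$ is built only from $\ub_1,\dots,\ub_{t-\TK}$ and $\w_1,\dots,\w_{t-\TK}$, hence $\eb_t$ is independent of the \emph{matching} chunk $\ubb_t$ and the only dependence runs through earlier chunks. I would therefore set up a martingale (or blocking/decoupling) argument in $t$, controlling the increments through the effective standard deviation $\sigma_e$ of \eqref{sigma e}, whose definition already absorbs $\Phi(\A)$, the factor $\TK$, and the geometric factor $1/(1-\rho(\A)^{2\TK})$ arising from summing the covariances $\opnorm{\bSi(\eb_t)}\le\opnorm{\Cb\A^{\TK-1}}^2\opnorm{\Ginf}$ along the trajectory; this produces $R_e=C\sigma_e\sqrt{(1+\tfrac{m\TK}{N(1-\rho(\A)^{\TK})})(\TK p+m)}$. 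Finally I would union bound over the conditioning event and the three noise events, divide the numerator bounds by $\sigma_{\min}(\Ub)^2\gtrsim N\sigma_u^2$, and collect constants to obtain \eqref{the bound}. I expect the main obstacle to be the sharp conditioning of the dependent-row matrix $\Ub$, with the correlated state term $\Ub^*\Eb$ a close second.
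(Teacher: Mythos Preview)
Your proposal is correct and follows essentially the same route as the paper: the same error identity and three-term decomposition, conditioning on $\Ub$ for the $\Zb$ term, a joint $(p+n)$-dimensional structured-Gaussian argument for $\Ub^*\W$, and a martingale/blocking argument for $\Ub^*\Eb$. The only details you leave unspecified are the paper's concrete tools: the sharp conditioning of $\Ub$ (and the bound on $\Ub^*\W$) is obtained by embedding the sliding-window matrix into a random \emph{circulant} matrix and invoking the restricted isometry result of Krahmer--Mendelson--Rauhut, and $\Ub^*\Eb$ is handled by first splitting the sum into $\TK$ stride-$\TK$ subsums (so that within each subsum $\eb_{t+i\TK}$ is $\Fc_{i-1}$-measurable while $\ubb_{t+i\TK}$ is independent of $\Fc_{i-1}$), then applying the subgaussian martingale lemma of Simchowitz et al.\ together with a Hanson--Wright bound on $\tn{\Eb_t\ab}^2$.
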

One can obtain Theorem \ref{main thm simple} from Theorem \ref{main thm} as follows. When $N\geq N_0\geq N_w$: $R_w$ satisfies $R_w\leq \sigma_w\|\F\|\sqrt{N_w}\leq \sigma_w\|\F\|\sqrt{N_0}$. Similarly, when $\rho(\A)^{\TK}$ is bounded away from $1$ by a constant and $N\geq N_0\geq \order{\TK m}$: $R_e$ satisfies $R_e\leq 2C\sigma_e\sqrt{\TK p+m}\leq \sigma_e\sqrt{N_0}$.

One advantage of Theorem \ref{main thm} is that it works in the regime $\TK p\lesssim N\lesssim \TK(p+n+m)$. Additionally, Theorem \ref{main thm} provides tighter individual error bounds for the $\sigma_z,\sigma_w,\sigma_e$ terms and explicitly characterizes the dependence on $\rho(\A)$ inside the $R_e$ term.

Theorem \ref{main thm} can be improved in a few directions. Some of the log factors that appear in our sample size might be spurious. These terms are arising from a theorem borrowed from Krahmer et al. \cite{krahmer2014suprema}; which actually has a stronger implication than what we need in this work. We also believe \eqref{sigma e} is overestimating the correct dependence by a factor of $\sqrt{\TK}$. 
\subsection{Estimating the Output via Markov Parameters}
The following lemma illustrates how learning Markov parameters helps us bound the prediction error. 
\begin{lemma} [Estimating $\y_{\TK}$] Suppose $\x_1=0$ and $\z_t\sim\Nn(0,\sigma^2_z\Iden)$, $\ub_t\sim\Nn(0,\sigma^2_u\Iden)$, $\w_t\sim\Nn(0,\sigma^2_w\Iden)$ for $t\geq 0$ as described in Section \ref{sec setup}. Assume, we have an estimate $\cbe{}$ of $\cba{}$ that is independent of these variables and we employ the $\y_t$ estimator
\[
\hat{\y}_t=\cbe{}\ubb_t.
\]
Then, 
\[
\E[\tn{\y_t-\yh_{t}}^2]\leq \sigma_w^2\tf{\ca{}}^2+\sigma_u^2\tf{\cba{}-\cbe{}}^2+m\sigma^2_z+\|\Cb\A^{\TK-1}\|^2\tr{\Ginf}.
\]
\end{lemma}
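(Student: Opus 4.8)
The plan is to work directly from the output expansion \eqref{lsp eq}, which gives $\y_t=\cba{}\ubb_t+\ca{}\wb_t+\z_t+\eb_t$ with $\eb_t=\Cb\A^{\TK-1}\x_{t-\TK+1}$. Subtracting the estimator $\yh_t=\cbe{}\ubb_t$ yields the error decomposition
\[
\y_t-\yh_t=(\cba{}-\cbe{})\ubb_t+\ca{}\wb_t+\z_t+\eb_t .
\]
First I would argue that these four summands are mutually independent and zero-mean once we condition on the given estimate $\cbe{}$. The vectors $\ubb_t$, $\wb_t$, and $\z_t$ depend only on $\{\ub_s\}_{s=t-\TK+1}^{t}$, $\{\w_s\}_{s=t-\TK+1}^{t}$, and the single measurement noise $\z_t$ respectively, which are independent blocks by the problem setup. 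The remaining term $\eb_t$ is a linear image of $\x_{t-\TK+1}$, and since $\x_1=0$ and the state recursion feeds in only $\{\ub_s,\w_s\}_{s\leq t-\TK}$, it is a function of noise and input strictly before the current window; hence it is independent of the other three terms. By hypothesis $\cbe{}$ is independent of all of these, so the conditioning is harmless.

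Given this independence, expanding $\tn{\cdot}^2$ and taking expectations kills every cross term, leaving
\[
\E[\tn{\y_t-\yh_t}^2]=\E[\tn{(\cba{}-\cbe{})\ubb_t}^2]+\E[\tn{\ca{}\wb_t}^2]+\E[\tn{\z_t}^2]+\E[\tn{\eb_t}^2].
\]
The first three I would evaluate with the elementary Gaussian identity $\E[\tn{\M\g}^2]=\sigma^2\tf{\M}^2$, valid for any fixed matrix $\M$ and $\g\sim\Nn(0,\sigma^2\Iden)$. Since $\ubb_t\sim\Nn(0,\sigma_u^2\Iden)$, $\wb_t\sim\Nn(0,\sigma_w^2\Iden)$, and $\z_t\sim\Nn(0,\sigma_z^2\Iden)$, these contribute exactly $\sigma_u^2\tf{\cba{}-\cbe{}}^2$, $\sigma_w^2\tf{\ca{}}^2$, and $m\sigma_z^2$, matching three of the four terms in the claimed bound.

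The only nontrivial piece is the state term $\E[\tn{\eb_t}^2]=\tr{\Cb\A^{\TK-1}\bSi(\x_{t-\TK+1})(\Cb\A^{\TK-1})^*}$, which is where I expect the (mild) work to sit. I would bound it in two steps: first use the stated fact $\bSi(\x_{t-\TK+1})\preceq\Ginf$ to replace the state covariance by the steady-state matrix, and then apply the trace inequality $\tr{\M\Ginf}\leq\|\M\|\tr{\Ginf}$ with $\M=(\Cb\A^{\TK-1})^*\Cb\A^{\TK-1}\succeq 0$, giving $\E[\tn{\eb_t}^2]\leq\|\Cb\A^{\TK-1}\|^2\tr{\Ginf}$. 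Summing the four bounds yields the claim. The main obstacle is not any hard estimate but the bookkeeping of independence across the sliding window of length $\TK$ — in particular verifying that $\eb_t$, built from the pre-window history, truly decouples from $\ubb_t$ and $\wb_t$; the rest is exact moment computation plus one positive semidefinite trace bound.
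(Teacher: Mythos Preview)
Your proposal is correct and follows essentially the same approach as the paper: decompose via \eqref{lsp eq}, use independence of $\ubb_t,\wb_t,\z_t,\eb_t$ to kill cross terms, evaluate the three Gaussian quadratic forms exactly, and bound the state term $\E[\tn{\eb_t}^2]$ by $\|\Cb\A^{\TK-1}\|^2\tr{\Ginf}$. The only cosmetic difference is that the paper reaches this last bound by controlling each eigenvalue $\lambda_i(\bSi(\eb_t))\leq\|\Cb\A^{\TK-1}\|^2\lambda_i(\bSi(\x_{t-\TK+1}))$ via the Min--Max principle and then summing, whereas you use the equivalent one-line trace inequality $\tr{\M\Ginf}\leq\|\M\|\tr{\Ginf}$; both yield the same result.
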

\begin{proof} Following from the input/output identity \eqref{lsp eq}, the key observation is that for a fixed $t$, $\ubb_t,\wb_t,\z_t,\e_t$ are all independent of each other and their prediction errors are uncorrelated. Since $\ubb_t\sim\Nn(0,\sigma_u^2\Iden)$, $\E[\tn{(\cba{}-\cbe{})\ubb}^2]=\sigma_u^2\|\cba{}-\cbe{}\|_F^2$. Same argument applies to $\wb\sim\Nn(0,\sigma_w^2\Iden),\z_t\sim\Nn(0,\sigma^2_z\Iden)$ and $\e_t$ which obeys $\E[\tn{\e_t}^2]=\tr{\bSi(\e_t)}$. Observe that $i$th largest eigenvalue $\lambda_i(\bSi(\e_t))$ of $\bSi(\e_t)$ is upper bounded by $\|\Cb\A^{\TK-1}\|^2\lambda_i(\bSi(\x_{t-T+1}))$ via Min-Max principle \cite{horn1990matrix} hence $\E[\tn{\e_t}^2]\leq \|\Cb\A^{\TK-1}\|^2\tr{\bSi(\x_{t-\TK+1})}$ $\leq  \|\Cb\A^{\TK-1}\|^2\tr{\Ginf}$. 
\end{proof}

\section{Markov Parameters to Hankel Matrix:\\Low Order Approximation of Stable Systems}\label{sec low ord}
So far our attention has focused on estimating the impulse response $\cba{}$ for a particular horizon $\TK$. Clearly, we are also interested in understanding how well we learn the overall behavior of the system by learning a finite impulse approximation. In this section, we will apply our earlier results to approximate the overall system by using as few samples as possible. A useful idea towards this goal is taking advantage of the stability of the system. The Markov parameters decay exponentially fast if the system is stable i.e.~$\rho(\A)<1$. This means that, most of the Markov parameters will be very small after a while and not learning them might not be a big loss for learning the overall behavior. In particular, $\tau$'th Markov parameter obeys
\[
\|\Cb\A^\tau\B\|\leq\Phi(\A)\rho(\A)^\tau \|\Cb\|\|\B\|.
\]
This implies that, the impact of the impulse response terms we don't learn can be upper bounded. For instance, the total spectral norm of the tail terms obey
\begin{align}
\sum_{\tau=\TK-1}^\infty\|\Cb\A^\tau\B\|\leq\sum_{\tau=\TK-1}^\infty\Phi(\A)\rho(\A)^\tau \|\Cb\|\|\B\| \leq \frac{\Phi(\A) \|\Cb\|\|\B\|\rho(\A)^{\TK-1}}{1-\rho(\A)}.\label{tail spec}
\end{align}
To proceed fix a finite horizon $K$ that will later be allowed to go infinity. Represent the estimate $\cbe{}$ as $[\hat{\Db},~\cbe{0},~\dots~\cbe{T-2}]$ where $\cbe{i}$ corresponds to the noisy estimate of $\Cb\A^{i}\B$. Now, let us consider the estimated and true order $K$ Markov parameters
\begin{align*}
&\Gb^{(K)}=[\hat{\Db},~\cbe{0},~\dots~\cbe{T-2}~0~\dots~0]\\
&\hat{\Gb}^{(K)}=[{\Db},~\Cb\B,~\Cb\A\B~\dots~\Cb\A^{K-2}\B].
\end{align*}
Similarly we define the associated $K\times K$ block Hankel matrices of size $mK\times pK$ as follows
\begin{align}
&\hat{\Hb}^{(K)}=\begin{bmatrix}\hat{\Db}&\cbe{0}&\dots&\cbe{T-3}&\cbe{T-2}&0&\dots&0\\
\cbe{0}&\cbe{1}&\dots&\cbe{T-2}&0&0&\dots&0\\
&&&\vdots&&&&\\
\cbe{T-3}&\cbe{T-2}&\dots&0&0&0&\dots&0\\
\cbe{T-2}&0&\dots&0&0&0&\dots&0\\
\vdots\\
0&&&\dots&&&&0\end{bmatrix}\quad\quad
\Hb^{(K)}=\begin{bmatrix}\Db&\Cb\B&\dots&\Cb\A^{K-2}\B\\
\Cb\B&\Cb\A\B&\dots&\Cb\A^{K-1}\B\\
&&\vdots&\\
\Cb\A^{K-2}\B&\Cb\B&\dots&\Cb\A^{2K-3}\B
\end{bmatrix}\label{hmatrices}
\end{align}
The following theorem merges results of this section with a specific choice of $\TK$ to give approximation bounds for the infinite Markov operator $\Gb^{(\infty)}$ and Hankel operator $\Hb^{(\infty)}$. For notational simplicity, we shall assume that there is no process noise.
\begin{theorem} \label{truncate thm} Suppose the spectral radius obeys $\rho(\A)<1$. Fix a number $1>\eps_0>0$ and suppose process noise obeys $\sigma_w=0$. Assume sample size $N$ and estimation horizon $\TK$ satisfies\footnote{Exact form of the bounds depend on $\A,\B,\Cb$ and is provided in the proof.}
\begin{align}
&{N}{}\geq cTp\log^2(2\TK p)\log^2N\nn\\
&\TK\geq  \frac{c_0+\log (N/T+T(1+m/p))-\log \eps_0}{-\log\rho(\A)}.\label{tk cond}
\end{align}
Then, given observations of a single trajectory until time $\bN=N+\TK-1$ and estimating first $\TK$ Markov parameters via least-squares estimator \eqref{pinv est}, with high probability, the following bounds hold on the infinite impulse response and Hankel matrix of the system.
\begin{align}
&\|\Gb^{(\infty)}-\hat{\Gb}^{(\infty)}\|\leq (8\frac{\sigma_z}{\sigma_u}+\eps_0) \sqrt{\frac{Tp+m}{N}}\nn\\
&\|\Hb^{(\infty)}-\hat{\Hb}^{(\infty)}\|\leq \TK(8\frac{\sigma_z}{\sigma_u}+\eps_0) \sqrt{\frac{Tp+m}{N}}.\nn
\end{align}
\end{theorem}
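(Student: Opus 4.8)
The plan is to split each error matrix into an \emph{estimation} part, coming from the first $\TK$ Markov parameters we actually fit, and a \emph{truncation} part, coming from the tail $\{\Cb\A^\tau\B\}_{\tau\geq\TK-1}$ that we discard by zero-padding. Writing $\Delta_k$ for the error in the $k$-th block of the impulse response, we have $\Delta_k=\cbe{k}-\Cb\A^{k-1}\B$ (an estimation error) for the first $\TK$ blocks and $\Delta_k=-\Cb\A^{k-1}\B$ (pure truncation) for the remaining blocks. Both stated bounds will follow by controlling these two contributions separately and adding them via the triangle inequality.

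First I would control the estimation part. Since $\sigma_w=0$, the sample-size hypothesis $N\geq cTp\log^2(2\TK p)\log^2 N$ is exactly what is needed to invoke Theorem \ref{main thm}, which (with $R_w=0$) gives $\|\cbe{}-\cba{}\|\leq (R_e+R_z)/(\sigma_u\sqrt N)$. The $R_z=8\sigma_z\sqrt{\TK p+m}$ term contributes precisely the leading $\frac{8\sigma_z}{\sigma_u}\sqrt{(\TK p+m)/N}$. It then remains to show the state-error term $R_e$ is negligible: unfolding $\sigma_e$ from \eqref{sigma e} and using $\|\Cb\A^{\TK-1}\|\leq\Phi(\A)\rho(\A)^{\TK-1}\|\Cb\|$ (the operator analogue of the bound above \eqref{tail spec}), the quantity $R_e/(\sigma_u\sqrt N)$ carries a factor $\rho(\A)^{\TK-1}$ multiplying polynomial factors in $\TK,N,m$. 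The horizon condition \eqref{tk cond} is engineered so that this exponential dominates the polynomial, forcing $R_e/(\sigma_u\sqrt N)\leq\tfrac{\eps_0}{2}\sqrt{(\TK p+m)/N}$, with the $\A,\B,\Cb$-dependent prefactors absorbed into $c_0$.

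Next the truncation part. Since a row concatenation satisfies $\|[\Delta_{\TK},\Delta_{\TK+1},\dots]\|\leq\sum_{k>\TK-1}\|\Delta_k\|=\sum_{\tau\geq\TK-1}\|\Cb\A^\tau\B\|$, the geometric estimate \eqref{tail spec} bounds this by $\Phi(\A)\|\Cb\|\|\B\|\rho(\A)^{\TK-1}/(1-\rho(\A))$, and again \eqref{tk cond} forces it below $\tfrac{\eps_0}{2}\sqrt{(\TK p+m)/N}$. Adding the estimation and truncation bounds yields the impulse-response estimate $\|\Gb^{(\infty)}-\hat{\Gb}^{(\infty)}\|\leq(8\tfrac{\sigma_z}{\sigma_u}+\eps_0)\sqrt{(\TK p+m)/N}=:D$, which is the first claim. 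To pass to the Hankel matrix I would exploit its block-row structure: the $i$-th block row of $\Hb^{(\infty)}-\hat{\Hb}^{(\infty)}$ is the shifted error sequence $[\Delta_i,\Delta_{i+1},\dots]$, obtained from $[\Delta_0,\Delta_1,\dots]$ by deleting leading columns, so its spectral norm is at most $D$. Using $\|M\|\leq(\sum_i\|\text{row}_i\|^2)^{1/2}$, the first $\TK$ block rows each contribute at most $D^2$, while for $i\geq\TK$ every block of row $i$ is a tail term, so $\|\text{row}_i\|$ decays geometrically and $\sum_{i\geq\TK}\|\text{row}_i\|^2$ is a convergent remainder of order the squared truncation bound (at most $D^2$ after folding a $(1-\rho(\A)^2)^{-1}$ factor into $c_0$). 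Hence $\|\Hb^{(\infty)}-\hat{\Hb}^{(\infty)}\|\leq\sqrt{\TK+1}\,D\leq\TK D$, the second claim.

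The main obstacle is the constant-chasing in the estimation step: one must verify that the polynomial factors multiplying $\rho(\A)^{\TK-1}$ inside $R_e$, namely $\sqrt{\TK\|\Ginf\|/(1-\rho(\A)^{2\TK})}$ and $\sqrt{1+m\TK/(N(1-\rho(\A)^{\TK}))}$, are exactly what the combination $N/T+T(1+m/p)$ appearing inside \eqref{tk cond} is designed to dominate, and to pin down $c_0$ in terms of $\Phi(\A),\|\Cb\|,\|\B\|,\rho(\A)$ so that both the $R_e$ contribution and the truncation tail land below $\tfrac{\eps_0}{2}\sqrt{(\TK p+m)/N}$ simultaneously. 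By comparison, the Hankel step is conceptually clean once the block-row inequality and the geometric decay of the tail rows are in place.
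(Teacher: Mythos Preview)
Your argument for the impulse-response bound is essentially identical to the paper's: both split $\Gb^{(\infty)}-\hat{\Gb}^{(\infty)}$ into the estimated block $\cba{}-\cbe{}$ (handled by Theorem~\ref{main thm} with $R_w=0$, making the $R_e$ contribution small via the horizon condition) and the zero-padded tail (handled by \eqref{tail spec}), each forced below $\tfrac{\eps_0}{2}\sqrt{(\TK p+m)/N}$ by an appropriate threshold $T_1,T_2$ on $\TK$ that is then consolidated into \eqref{tk cond}.

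For the Hankel bound you take a slightly different route. The paper splits $\Hb^{(\infty)}=\Hb_{\text{main}}+\Hb_{\text{tail}}$ along anti-diagonals: $\Hb_{\text{main}}$ carries only the first $\TK$ Markov parameters and lives in the upper-left $\TK\times\TK$ block, so each of its $\TK$ nonzero block-rows is a submatrix of $\cba{}-\cbe{}$, giving $\|\Hb_{\text{main}}-\hat{\Hb}_{\text{main}}\|\le\TK\|\cba{}-\cbe{}\|$ by \emph{summing} row norms; $\Hb_{\text{tail}}$ is then a sum of constant block anti-diagonals, each of norm $\|\Cb\A^{i-2}\B\|$, so its norm is bounded by the same tail sum \eqref{tail spec}. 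You instead observe directly that every block-row of the full error is a column-submatrix of $\Gb^{(\infty)}-\hat{\Gb}^{(\infty)}$ and hence has norm at most $D$, and apply $\|M\|^2\le\sum_i\|\text{row}_i\|^2$ together with geometric decay of the rows beyond index $\TK$. Your route is arguably cleaner and in fact yields the sharper intermediate bound $\sqrt{\TK+1}\,D$, which you then relax to $\TK D$; the paper's anti-diagonal decomposition avoids the extra $(1-\rho(\A)^2)^{-1}$ factor you absorb into $c_0$, but either way the stated bound follows.
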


In essence, the above theorem is a corollary of Theorem \ref{main thm}. However, it further simplifies the bounds and also provides approximation to systems overall behavior (e.g.~infinite Hankel matrix). In particular, these bounds exploit stability of the system and allows us to treat the system as if it has a logarithmic order. Observe that \eqref{tk cond} only logarithmically depends on the critical problem variables such as precision $\eps_0$ and spectral radius. In essence, the effective system order is dictated by the eigen-decay and equal to $\TK\sim \order{-\frac{1}{\log(\rho(\A))}}$ hence stability allows us to treat the system as if it has a logarithmically small order. Ignoring logarithmic terms except $\rho(\A)$, using $\eps_0,\sigma_z/\sigma_u=\order{1}$ and picking
\[
T=\order{\frac{-1}{\log(\rho(\A))}}\quad\text{and}\quad N=\order{\delta^{-2}(\TK p+m)},
\]
guarantees
\[
\|\Gb^{(\infty)}-\hat{\Gb}^{(\infty)}\|\leq \delta\quad\text{and}\quad \|\Hb^{(\infty)}-\hat{\Hb}^{(\infty)}\|\leq \order{\frac{-\delta}{\log(\rho(\A))}}.
\]
Remarkably, sample size is independent of the state dimension $n$ and only linearly grows with $p$. Indeed, one needs at least $\order{p}$ samples to estimate a single Markov parameter and we need only logarithmically more than this minimum (i.e.~$N\approx \frac{ -\order{p}}{\log(\rho(\A))}$) to estimate the infinite Hankel matrix.




\section{Non-Asymptotic System Identification via Ho-Kalman}\label{sec ho-kal}

\begin{algorithm} [!t]\caption{Ho-Kalman Algorithm to find a State-Space Realization.}\label{algo 1}
\begin{algorithmic}[1]
\Procedure{Ho-Kalman Minimum Realization}{}
\item {\bf{Inputs:}} \rrr{Length} $\TK$, Markov parameter matrix estimate $\cbe{}$, system order $n$,

\hspace{14pt}Hankel shape $(\TK_1,\TK_2+1)$ with $\TK_1+\TK_2+1=\TK$.
\item {\bf{Outputs:}} State-space realization $\hat{\A},\hat{\B},\hat{\Cb}$.
\State Form the Hankel matrix $\hat{\Hb}\in\R^{m \TK_1\times p(\TK_2+1)}$ from $\cbe{}$.
\State $\Hbhm\in\R^{m \TK_1\times p\TK_2}\gets \text{first-$p\TK_2$-columns-of}(\hat{\Hb})$.
\State $\Hbhl\in \R^{m \TK_1\times p\TK_2}\gets \text{rank-$n$-approximation-of}(\Hbhm)$.
\State $\Ub,\bSi,\V=\text{SVD}(\Hbhl)$.
\State $\Obh\in \R^{m \TK_1\times n}\gets \Ub\bSi^{1/2}$.
\State $\Qbh\in \R^{n\times p \TK_2}\gets \bSi^{1/2}\V^*$.
\State $\Ch\gets \text{first-$m$-rows-of}(\Obh)$.
\State $\Bh\gets \text{first-$p$-columns-of}(\Qbh)$.
\State $\Hbhp\in\R^{m \TK_1\times p\TK_2}\gets \text{last-$p\TK_2$-columns-of}(\hat{\Hb})$.
\State $\Ah\gets \Obh^{\dagger}\Hbhp\Qbh^{\dagger}$.\\
\Return $\hat{\A}\in\R^{n\times n},\hat{\B}\in\R^{n\times p},\hat{\Cb} \in\R^{m\times n}$.
\EndProcedure
\end{algorithmic}
\end{algorithm}

In this section, we first describe the \HK~algorithm \cite{ho1966effective} that generates $\A,\B,\Cb,\Db$ from the Markov parameter matrix $\Gb$. We also show that the algorithm is stable to perturbations in $\Gb$ and the output of \HK~gracefully \rrr{degrades} as a function of $\|\cba{}-\cbe{}\|$. Combining this with Theorem \ref{main thm simple} implies {\em{guaranteed}} non-asymptotic identification of multi-input-multi-output systems from a {\em{single trajectory}}. We remark that results of this section do not assume stability and applies to arbitrary, possibly unstable, systems. We will use the following Hankel matrix definition to introduce the algorithms.

\begin{definition}[Clipped Hankel matrix] \label{hankel def} Given a block matrix $\X=[\X_1,~\X_{2},~\dots~\X_{\TK}]\in\R^{m\times \TK p}$ and integers $\TK_1,\TK_2$ satisfying $\TK_1+\TK_2\leq \TK$, define the associated $(\TK_1,\TK_2)$ Hankel matrix $\Hb=\Hb(\X)\in\R^{\TK_1 m\times \TK_2p}$ to be the $\TK_1\times \TK_2$ block matrix with $m\times p$ size blocks where $(i,j)$th block is given by
\[
\Hb[i,j]=\X_{i+j}.
\]
\end{definition}
Note that, $\Hb$ does not contain $\X_{1}$, which shall correspond to the $\Db$ (or $\hat{\Db}$) matrix for our purposes. This is solely for notational convenience as the first Markov parameter in $\cba{}$ is $\Db$; however $\A,\B,\Cb$ are identified from the remaining Markov parameters of type $\Cb\A^{i}\B$.
\subsection{System Identification Algorithm}
Given a noisy estimate $\cbe{}$ of $\cba{}$, we wish to learn good system matrices $\Ah,\Bh,\Ch,\hat{\Db}$ from $\cbe{}$ up to trivial ambiguities. This will be achieved by using Algorithm \ref{algo 1} which admits the matrix $\cbe{}$, system order $n$ and Hankel dimensions $\TK_1,\TK_2$ as inputs. \rrr{Throughout this section, we make the following two assumptions to ensure that the system we wish to learn is order-$n$ and our system identification problem is well-conditioned.}
\begin{itemize}
\item the system is observable and controllable; hence $n>0$ is the order of the system. 
\item $(\TK_1,\TK_2)$ Hankel matrix $\Hb(\cba{})$ formed from $\cba{}$ is rank-$n$. This can be ensured by choosing sufficiently large $\TK_1,\TK_2$. In particular $\TK_1\geq n,\TK_2\geq n$ is guaranteed to work {\color{black}by the first assumption above}.
\end{itemize}
Learning state-space representations is a non-trivial, inherently non-convex problem. Observe that there are multiple state-space realizations that \rrr{yields the same system and Markov matrix $\cba{}$}. In particular, for any nonsingular matrix $\Tb\in\R^{n\times n}$, 
\[
\A'=\Tb^{-1}\A\Tb,~\B'=\Tb^{-1}\B,~\Cb'=\Cb\Tb,
\]
is a valid realization \rrr{and yields the same system}. Hence, similarity transformations of $\A,\B,\Cb$ generate a class of solutions. Note that $\Db$ is already estimated as part of $\cba{}$. Since $\Db$ is a submatrix of $\Gb$, we clearly have
\[
\|\Db-\hat{\Db}\|\leq \|\cba{}-\cbe{}\|.
\]
Hence, we focus our attention on learning $\A,\B,\Cb$. Suppose we have access to the true Markov parameters $\cba{}$ and the corresponding $(\TK_1,\TK_2+1)$ Hankel matrix $\Hb(\cba{})$. In this case, $\Hb$ is a rank-$n$ matrix and $(i,j)$th block of $\Hb$ is equal to $\Cb\A^{i+j-2}\B$. Defining (extended) controllability and observability matrices $\Qb=[\B,~\A\B,~\dots~\A^{\TK_2}\B]$ and $\Ob=[\Cb^*,~(\Cb\A)^*,$ $~\dots~(\Cb\A^{\TK_1-1})^*]^*$, we have $\Hb=\Ob\Qb$. However, it is not clear how to find $\Ob,\Qb$. 

{\color{black}
The \HK~algorithm accomplishes this task by finding a balanced realization and returning {\em{some}} $\Ah,\Bh,\Ch$ matrices from possibly noisy Markov parameter matrix $\cbe{}$. Let the input to the algorithm be $\cbe{}=[\hat{\Db},~\cbe{0},~\dots~\cbe{T-2}]$ where $\cbe{i}$ corresponds to the noisy estimate of $\Cb\A^{i}\B$. We construct the $(\TK_1,\TK_2+1)$ Hankel matrix $\hat{\Hb}$ as described above so that $(i,j)$th block of $\hat{\Hb}$ is equal to $\cbe{i+j-2}$. Let $\Hbhm\in\R^{m\TK_1\times p\TK_2}$ be the submatrix of $\hat{\Hb}$ after discarding the rightmost ${m\TK_1\times p}$ block and $\Hbh$ be the best rank-$n$ approximation of $\Hbhm$ obtained by setting its all but top $n$ singular values to zero. Let $\Hbhp$ be the submatrix after discarding the left-most ${m\TK_1\times p}$ block. Note that both $\Hbh,\Hbhp$ have size $\R^{m\TK_1\times p\TK_2}$. Take the singular value decomposition (SVD) of the rank-$n$ matrix $\Hbhl$ as $\Hbhl=\Ub\bSi\V^*$ (with $\bSi\in\R^{n\times n}$) and write
\[
\Hbhl=(\Ub\bSi^{1/2})\bSi^{1/2}\V^*={\Obh}{\Qbh}.
\]
If $\cbe{}$ was equal to the ground truth $\cba{}$, then ${\Obh},{\Qbh}$ would correspond to the order $\TK_1$ observability matrix $\bar{\Ob}=\Ub\bSi^{1/2}$ and the order $\TK_2$ controllability matrix ${\bar{\Qb}}=\bSi^{1/2}\V^*$ of the actual balanced realization based on {\em{noiseless}} SVD. Here, $\bar{\Ob},\bar{\Qb}$ matrices are not necessarily equal to $\Ob,\Qb$, \rrr{however they yield the same system}. Note that, the columns of $\hat{\Ob},\hat{\Qb}$ are the scaled versions of the left and right singular vectors of $\Hbhl$ respectively. The \HK~algorithm finds $\Ah,\Bh,\Ch$ as follows.
\vspace{-.1cm}
\begin{itemize}
\item $\Ch$ is the first $m\times n$ submatrix of $\Obh$. 
\item $\Bh$ is the first $n\times p$ submatrix of $\Qbh$. 
\item $\Ah=\Obh^\dagger \Hbhp \Qbh^\dagger$.
\vspace{-.1cm}
\end{itemize}
This procedure (\HK) returns the true balanced realization of the system when Markov parameters are known i.e.~$\cbe{}=\cba{}$. Our goal is to show that even with noisy Markov parameters, this procedure returns good estimates of the \rrr{true} balanced realization.} We remark that there are variations of this procedure; however the core idea is the same and they are equivalent when the true Markov parameters are used as input. For instance, when constructing $\hat{\Hb}$, one can attempt to improve the noise robustness of the algorithm by picking balanced dimensions $m\TK_1\approx p\TK_2$.

\subsection{Robustness of the Ho-Kalman Algorithm}
Observe that $\hat{\Hb},\Hbhm,\Hbhl,\Hbhp,\hat{\Ob},\hat{\Qb}$ of Algorithm \ref{algo 1} are functions of the input matrix $\cbe{}$. For the subsequent discussion, we let
\begin{itemize}
\item $\Hb,\Hb^-,\Hbl,\Hb^+,\Ob,\Qb$ be the matrices corresponding to ground truth $\cba{}$.
\item $\hat{\Hb},\Hbhm,\Hbhl,\Hbhp,\hat{\Ob},\hat{\Qb}$ be the matrices corresponding to the estimate $\cbe{}$.
\end{itemize}
Furthermore, let $\Ab,\Bb,\Cbb$ be the actual balanced realization associated with $\cba{}$ and let $\Ah,\Bh,\Ch$ be the \HK~output associated with $\cbe{}$. Note that $\Hbl=\Hb^-$ since $\Hb^-$ is already rank $n$. We now provide a lemma relating the estimation error of $\cba{}$ to that of $\Hbl$ and $\Hb$.
\begin{lemma}\label{g to h and l} $\Hb,\hat{\Hb}$ and $\Hbl,{\Hbh}$ satisfies the following perturbation bounds,
\begin{align}
&\max\{\|\Hb^+-\hat{\Hb}^+\|,\|\Hb^--\hat{\Hb}^-\|\}\leq \|\Hb-\hat{\Hb}\|\leq \sqrt{\min\{\TK_1,\TK_2+1\}}\|\cba{}-\cbe{}\|,\label{cba upp}\\
&\|\Hbl-{\Hbh}\|\leq 2\|\Hb^--\hat{\Hb}^-\|\leq  2\sqrt{\min\{\TK_1,\TK_2\}}\|\cba{}-\cbe{}\|.\label{cba upp2}
\end{align}
\end{lemma}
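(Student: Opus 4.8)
The plan is to reduce both displays to a single spectral-norm estimate for the block-Hankel map applied to the error $\Eb:=\cba{}-\cbe{}$, supplemented by two soft facts: monotonicity of the spectral norm under deletion of block columns, and quasi-optimality of the truncated SVD. Since the construction in Definition \ref{hankel def} is linear in its generating block-row, $\Hb-\hat{\Hb}=\Hb(\cba{})-\Hb(\cbe{})=\Hb(\Eb)$, and likewise $\Hb^+-\hat{\Hb}^+$ and $\Hb^--\hat{\Hb}^-$ are obtained by applying $\Hb(\cdot)$ to $\Eb$ and then deleting the leftmost or rightmost block column. Because deleting columns cannot increase the spectral norm, each of $\Hb^\pm-\hat{\Hb}^\pm$ is a column submatrix of $\Hb-\hat{\Hb}$ and therefore has norm at most $\|\Hb-\hat{\Hb}\|$; this is exactly the first inequality in \eqref{cba upp}. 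Hence everything follows once I control $\|\Hb(\Eb)\|$.

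The heart of the lemma is the claim that a $(\TK_1,\ell)$ block Hankel built from $\Eb=[\Eb_1,\dots,\Eb_\TK]$ obeys $\|\Hb(\Eb)\|\leq\sqrt{\min\{\TK_1,\ell\}}\,\|\Eb\|$. The $i$-th block row of $\Hb(\Eb)$ is the contiguous window $[\Eb_{i+1},\dots,\Eb_{i+\ell}]$, which is a column submatrix of $\Eb$ and hence has spectral norm at most $\|\Eb\|$. Testing against unit vectors, with $\y=(\y_i)_{i\leq\TK_1}$ partitioned conformably, I expand
\[
\y^*\Hb(\Eb)\x=\sum_{i=1}^{\TK_1}\y_i^*[\Eb_{i+1},\dots,\Eb_{i+\ell}]\,\x
\]
and bound the right-hand side by $\|\Eb\|\,\|\x\|\sum_{i=1}^{\TK_1}\|\y_i\|\leq\sqrt{\TK_1}\,\|\Eb\|$ using Cauchy--Schwarz over the $\TK_1$ summands. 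Running the identical windowed argument on $\Hb(\Eb)^*$, which is a block Hankel with $\ell$ block rows, controls the same norm by $\sqrt{\ell}$, and retaining the smaller factor yields the claim. Specializing $\ell=\TK_2+1$ gives the second inequality of \eqref{cba upp}, and $\ell=\TK_2$ gives the final inequality of \eqref{cba upp2}.

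For \eqref{cba upp2} I exploit the rank structure: by the standing assumption $\Hbl=\Hb^-$ is exactly rank $n$, while $\Hbh$ is the best rank-$n$ approximation of $\hat{\Hb}^-$. By optimality of truncated SVD in spectral norm, $\|\hat{\Hb}^--\Hbh\|\leq\|\hat{\Hb}^--\Hb^-\|$ because $\Hb^-$ is a competing rank-$n$ matrix. A triangle inequality then gives
\[
\|\Hbl-\Hbh\|=\|\Hb^--\Hbh\|\leq\|\Hb^--\hat{\Hb}^-\|+\|\hat{\Hb}^--\Hbh\|\leq 2\|\Hb^--\hat{\Hb}^-\|,
\]
which is the first inequality of \eqref{cba upp2}; combining with the Hankel bound for $\ell=\TK_2$ finishes the proof.

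I expect the Hankel norm estimate to be the only real obstacle. The danger is that each error block $\Eb_k$ is replicated across many anti-diagonals, so a crude bound such as summing the block norms would lose a full factor of $\TK$. The windowed decomposition is what keeps the inflation down to $\sqrt{\min\{\TK_1,\TK_2+1\}}$: it uses that every block row (and, via the transpose, every block column) of the Hankel is an honest contiguous submatrix of the generator, so one pays only a single Cauchy--Schwarz factor for the number of rows or columns. The remaining ingredients -- linearity of the Hankel map, submatrix monotonicity of the spectral norm, and the Eckart--Young optimality of the truncated SVD -- are routine.
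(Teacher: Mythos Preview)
Your proof is correct and follows essentially the same route as the paper's: submatrix monotonicity for the first inequality, the block-row (and by transpose, block-column) windowed bound combined with Cauchy--Schwarz to get the $\sqrt{\min\{\TK_1,\TK_2+1\}}$ factor, and the Eckart--Young/Weyl argument together with a triangle inequality for \eqref{cba upp2}. The only cosmetic difference is that the paper writes the stacked estimate as $\|[\M_1;\dots;\M_k]\|\leq\sqrt{k}\max_i\|\M_i\|$ directly, whereas you unfold the same inequality via a bilinear-form Cauchy--Schwarz computation.
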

Let us denote the $n$th largest singular value of $\Lb$ via $\smin$. Note that $\smin$ is the smallest nonzero singular value of $\Lb$ since $\text{rank}(\Lb)=n$. \rrr{A useful implication of Theorem \ref{main thm simple} (in light of Lemma \ref{g to h and l}) is that  if $\smin$ is large enough, the true system order $n$ can be non-asymptotically estimated from the noisy Markov parameter estimates via singular value thresholding.}

Our next result shows the robustness of the \HK~algorithm to possibly adversarial perturbations on the Markov parameter matrix $\cba{}$.

\begin{theorem}\label{kh stable}  Suppose $\Hb$ and $\hat{\Hb}$ be the Hankel matrices derived from $\cba{}$ and $\cbe{}$ respectively per Definition \ref{hankel def}. Let $\Ab,\Bb,\Cbb$ be the state-space realization corresponding to the output of \HK~with input $\cba{}$ and $\Ah,\Bh,\Ch$ be the state-space realization corresponding to output of \HK~with input $\cbe{}$. Suppose the system $\A,\B,\Cb,\Db$ is {\em{observable and controllable}} and let $\Ob,\Qb$ and $\hat{\Ob},\hat{\Qb}$ be order-$n$ controllability/observability matrices associated with $\cba{}$ and $\cbe{}$ respectively. Suppose $\smin>0$ and perturbation obeys
\begin{align}
\|\Hbl-\Hbh\|\leq \smin/2.\label{perturb req}
\end{align}
Then, there exists a unitary matrix $\Tb\in\R^{n\times n}$ such that,
\begin{align}
&\|\Cbb-\Ch\Tb\|_F\leq\tf{\Ob-\hat{\Ob}\Tb}\leq \sqrt{5n\|\Lb-\hat{\Lb}\|},\label{pair bounds}\\
&\|\Bb-\Tb^*\Bh\|_F\leq \tf{\Qb-\Tb^*\hat{\Qb}}\leq \sqrt{5n\|\Lb-\hat{\Lb}\|}.
\end{align}
Furthermore, hidden state matrices $\Ah,\Ab$ satisfy
\begin{align}
 \tf{\Ab-\Tb^*\Ah\Tb}\leq \frac{14\sqrt{n}}{\smin}(\sqrt{\frac{\|\Lb-\hat{\Lb}\|}{\smin}}(\|{\Hb}^+\|+\|\Hb^+-\hat{\Hb}^+\|)+\|\Hb^+-\hat{\Hb}^+\|).\label{a bound}
\end{align}
\end{theorem}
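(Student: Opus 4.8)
The plan is to reduce Theorem~\ref{kh stable} to a single perturbation estimate for the \emph{balanced factors} of a rank-$n$ matrix, and then to push that estimate through the algebraic definitions of $\Cbb,\Bb,\Ab$. Write the noiseless and noisy balanced factorizations coming from the SVDs used in Algorithm~\ref{algo 1} as $\Lb=\Ob\Qb$ and $\hat\Lb=\hat\Ob\hat\Qb$, with $\Ob=\Ub\bSi^{1/2}$, $\Qb=\bSi^{1/2}\V^*$ and the hatted analogues. The heart of the argument is the claim that, under \eqref{perturb req}, there is a \emph{single} unitary $\Tb$ with
\begin{align}
\max\{\tf{\Ob-\hat\Ob\Tb},\ \tf{\Qb-\Tb^*\hat\Qb}\}\leq \sqrt{5n\|\Lb-\hat\Lb\|}.\nn
\end{align}
Granting this, both lines of \eqref{pair bounds} follow immediately: $\Cbb$ (resp.\ $\Bb$) is by construction the first $m\times n$ (resp.\ $n\times p$) submatrix of $\Ob$ (resp.\ $\Qb$), and likewise for the hatted quantities, so $\|\Cbb-\Ch\Tb\|_F\leq\tf{\Ob-\hat\Ob\Tb}$ and $\|\Bb-\Tb^*\Bh\|_F\leq\tf{\Qb-\Tb^*\hat\Qb}$ because a submatrix has no larger Frobenius norm than the full matrix.

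To establish the factor bound I would combine three standard facts. First, Weyl's inequality gives $|\sigma_i(\Lb)-\sigma_i(\hat\Lb)|\leq\|\Lb-\hat\Lb\|$, and together with \eqref{perturb req} this keeps every singular value of $\hat\Lb$ bounded below by $\smin/2$; hence the rank-$n$ factorization of $\hat\Lb$ is well defined and $\|\hat\Ob^\dagger\|,\|\hat\Qb^\dagger\|\leq\sqrt{2/\smin}$. Second, Wedin's $\sin\Theta$ theorem controls the rotation between the top-$n$ left and right singular subspaces of $\Lb$ and $\hat\Lb$ at once; choosing $\Tb$ to be the aligning unitary it produces ties $\Ob$ and $\Qb$ to the \emph{same} state-space basis change, which is exactly the similarity-transformation ambiguity of balanced realizations and is why one rotation serves both factors. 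Third, the square-root scaling: since $\Ob,\Qb$ carry $\bSi^{1/2}$, a generic factor-perturbation inequality yields a bound of order $n\|\Lb-\hat\Lb\|^2/\smin$, and the key move is that \eqref{perturb req} ($\|\Lb-\hat\Lb\|/\smin\leq 1/2$) converts this quadratic form into the linear $n\|\Lb-\hat\Lb\|$, producing the constant-$5$ estimate after bookkeeping.

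For \eqref{a bound} I would start from $\Ab=\Ob^\dagger\Hb^+\Qb^\dagger$ and, using that $\Tb$ is unitary, rewrite $\Tb^*\Ah\Tb=(\hat\Ob\Tb)^\dagger\,\hat\Hb^+\,(\Tb^*\hat\Qb)^\dagger$. Setting $\Ob'=\hat\Ob\Tb$ and $\Qb'=\Tb^*\hat\Qb$ and telescoping,
\begin{align}
\Tb^*\Ah\Tb-\Ab=\big((\Ob')^\dagger-\Ob^\dagger\big)\hat\Hb^+(\Qb')^\dagger+\Ob^\dagger\big(\hat\Hb^+-\Hb^+\big)(\Qb')^\dagger+\Ob^\dagger\Hb^+\big((\Qb')^\dagger-\Qb^\dagger\big).\nn
\end{align}
Each factor is now controlled by quantities already in hand: $\|\Ob^\dagger\|,\|(\Qb')^\dagger\|\lesssim\smin^{-1/2}$; the equal-rank pseudoinverse-perturbation inequality $\|(\Ob')^\dagger-\Ob^\dagger\|\lesssim\|\Ob^\dagger\|\,\|(\Ob')^\dagger\|\,\|\Ob-\Ob'\|\lesssim\smin^{-1}\sqrt{n\|\Lb-\hat\Lb\|}$ (and symmetrically for $\Qb$); and $\|\hat\Hb^+\|\leq\|\Hb^+\|+\|\Hb^+-\hat\Hb^+\|$. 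Multiplying out, the first and third terms each contribute $\tfrac{\sqrt n}{\smin}\sqrt{\|\Lb-\hat\Lb\|/\smin}$ times $\|\Hb^+\|+\|\Hb^+-\hat\Hb^+\|$ (resp.\ $\|\Hb^+\|$), while the middle term contributes $\tfrac1{\smin}\|\Hb^+-\hat\Hb^+\|$; bounding spectral by Frobenius norm on the rank-$n$ output and absorbing numerical constants yields exactly \eqref{a bound} with constant $14$.

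The main obstacle is the factor-perturbation lemma of the second paragraph, and within it the demand that one unitary $\Tb$ align $\Ob$ and $\Qb$ simultaneously: bounding either factor alone via Wedin is routine, but the two natural Procrustes rotations for the left and right singular subspaces need not coincide, so the argument must genuinely exploit that a balanced factorization fixes both factors from one SVD and hence one basis. Obtaining the linear-in-$\|\Lb-\hat\Lb\|$ dependence (rather than quadratic) and the clean constant hinges on deploying \eqref{perturb req} at precisely the right step; by comparison the pseudoinverse bookkeeping in the final paragraph is tedious but conceptually straightforward.
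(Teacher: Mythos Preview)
Your overall architecture matches the paper's proof: obtain a factor-perturbation bound for $(\Ob,\Qb)$ with a \emph{single} unitary $\Tb$, read off the $\Cbb,\Bb$ bounds as submatrix restrictions, and telescope the pseudoinverse expression for $\Ah$. Your three-term telescoping is a harmless reordering of the paper's, and the downstream estimates---the equal-rank pseudoinverse perturbation bound $\tf{\Ob^\dagger-\X^\dagger}\leq\max\{\|\Ob^\dagger\|^2,\|\X^\dagger\|^2\}\tf{\Ob-\X}$, the bound $\|\X^\dagger\|^2\leq 2/\smin$ via Weyl, the rank-$n$ spectral-to-Frobenius conversion producing the $\sqrt{n}$---are identical.

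The one substantive difference is at the obstacle you flag. The paper does \emph{not} try to extract simultaneous alignment from Wedin's $\sin\Theta$ theorem; it invokes Lemma~5.14 of Tu et al.\ \cite{tu2015low} as a black box, which directly furnishes a single $\Tb$ with $\tf{\Ob-\hat\Ob\Tb}^2+\tf{\Qb-\Tb^*\hat\Qb}^2\leq\tfrac{2}{\sqrt{2}-1}\tf{\Lb-\hat\Lb}^2/\smin$, then uses $\text{rank}(\Lb-\hat\Lb)\leq 2n$ to get $\tf{\Lb-\hat\Lb}^2\leq 2n\|\Lb-\hat\Lb\|^2$ and \eqref{perturb req} to reach $5n\|\Lb-\hat\Lb\|$. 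The paper explicitly remarks that Wedin/Davis--Kahan are ``essentially similar'' but lack the simultaneous control needed---exactly the Procrustes mismatch you identify---so your second ``standard fact'' as stated would not close the argument. Your third ``generic factor-perturbation inequality of order $n\|\Lb-\hat\Lb\|^2/\smin$'' is essentially the Tu et al.\ statement; citing that result (or reproving it via the lifted factorization viewpoint) is the correct resolution, and once it is in hand the remainder of your proposal goes through as written.
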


Above, $\|\Hb^+-\hat{\Hb}^+\|,\|\Lb-\hat{\Lb}\|$ are perturbation terms that can be bounded in terms of $\|\Hb-\hat{\Hb}\|$ or $\|\cba{}-\cbe{}\|$ via Lemma \ref{g to h and l}. This result shows that \HK~solution is robust to noise up to trivial ambiguities. \rrr{Robustness is controlled by $\smin$ which typically corresponds to the weakest mode of the system.} We remark that for reasonably large $\TK_2$ choice, we have $\smin\approx \sigma_{\min}(\Hb)$ as $\Lb=\Hb^-$ is obtained by discarding the last block column of $\Hb$ which is exponentially small in $\TK_2$.

Since the \HK~algorithm is based on SVD, having a good control over singular vectors is crucial for the proof. We do this by utilizing the perturbation results from the recent literature \cite{tu2015low}. While we believe our result has the correct dependency, it is in terms of Frobenius norm rather than spectral. Having a better spectral norm control over $\Ab,\Bb,\Cbb$ would be an ideal future improvement. 

A corollary to this result can be stated in terms of $\smin$ and Hankel matrices $\Hb,\hat{\Hb}$. The result below follows from an application of Lemma \ref{g to h and l}.
\begin{corollary} \label{cor hk}Consider the setup of Theorem \ref{kh stable} and suppose $\smin>0$ and
\[
\|\Hb-\hat{\Hb}\|\leq \smin/4.
\]
Then, there exists a unitary matrix $\Tb\in\R^{n\times n}$ such that,
\begin{align}
\max\{\|\Cbb-\Ch\Tb\|_F,\tf{\Ob-\hat{\Ob}\Tb},\|\Bb-\Tb^*\Bh\|_F, \tf{\Qb-\Tb^*\hat{\Qb}}\}\leq 5\sqrt{n\|\Hb-\hat{\Hb}\|}.\label{pair bounds 2}
\end{align}
Furthermore, hidden state matrices $\Ah,\Ab$ satisfy
\begin{align}
 \tf{\Ab-\Tb^*\Ah\Tb}\leq \frac{50\sqrt{n{\|\Hb-\hat{\Hb}\|}{}}\|{\Hb\|}}{\sigma_{\min}^{3/2}(\Lb)}.\nn
\end{align}
\end{corollary}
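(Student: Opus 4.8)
The plan is to derive Corollary \ref{cor hk} as a direct specialization of Theorem \ref{kh stable}, converting every intermediate perturbation quantity ($\|\Hbl-\Hbh\|$ and $\|\Hb^+-\hat{\Hb}^+\|$) into the single Hankel perturbation $\|\Hb-\hat{\Hb}\|$ by invoking Lemma \ref{g to h and l}. The first step is to check that the hypothesis \eqref{perturb req} of Theorem \ref{kh stable} is met. Bound \eqref{cba upp2} gives $\|\Hbl-\Hbh\|\leq 2\|\Hb^--\hat{\Hb}^-\|$, and \eqref{cba upp} gives $\|\Hb^--\hat{\Hb}^-\|\leq\|\Hb-\hat{\Hb}\|$, so the standing assumption $\|\Hb-\hat{\Hb}\|\leq\smin/4$ yields $\|\Hbl-\Hbh\|\leq\smin/2$. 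Hence \eqref{perturb req} holds, Theorem \ref{kh stable} applies, and it supplies the unitary $\Tb$ together with bounds \eqref{pair bounds}--\eqref{a bound}.

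For the observability/controllability estimates I substitute $\|\Lb-\hat{\Lb}\|=\|\Hbl-\Hbh\|\leq 2\|\Hb-\hat{\Hb}\|$ into the right-hand side $\sqrt{5n\|\Lb-\hat{\Lb}\|}$ of \eqref{pair bounds}. This produces $\sqrt{10n\|\Hb-\hat{\Hb}\|}$, and since $\sqrt{10}<5$ this is at most $5\sqrt{n\|\Hb-\hat{\Hb}\|}$, which is exactly the claimed uniform bound \eqref{pair bounds 2}.

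The $\Ab$ estimate \eqref{a bound} requires slightly more care, and this consolidation is the only real work. I insert the three Lemma \ref{g to h and l} estimates $\|\Hb^+-\hat{\Hb}^+\|\leq\|\Hb-\hat{\Hb}\|$, $\|\Lb-\hat{\Lb}\|\leq 2\|\Hb-\hat{\Hb}\|$, together with the elementary submatrix inequality $\|\Hb^+\|\leq\|\Hb\|$, into \eqref{a bound}. The first additive term already carries the desired $\sqrt{\|\Hb-\hat{\Hb}\|}\,\|\Hb\|/\smin^{3/2}$ shape after noting $\|\Hb^+\|+\|\Hb^+-\hat{\Hb}^+\|\leq\tfrac54\|\Hb\|$ (using $\|\Hb-\hat{\Hb}\|\leq\smin/4\leq\|\Hb\|/4$). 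To fold the remaining purely-additive $\|\Hb^+-\hat{\Hb}^+\|$ term into the same shape I use two elementary facts: $\smin=\sigma_{\min}(\Hbl)\leq\|\Hbl\|\leq\|\Hb\|$, because $\Hbl=\Hb^-$ is a column submatrix of $\Hb$, and $\|\Hb-\hat{\Hb}\|\leq\smin/4$ by hypothesis. Writing the stray term as $\|\Hb-\hat{\Hb}\|=\sqrt{\|\Hb-\hat{\Hb}\|}\cdot\sqrt{\|\Hb-\hat{\Hb}\|}$, bounding one factor by $\tfrac12\sqrt{\smin}$ and then $\sqrt{\smin}\cdot\sqrt{\smin}=\smin\leq\|\Hb\|$ converts it into the common form, after which all terms collapse to a single multiple of $\sqrt{n\|\Hb-\hat{\Hb}\|}\,\|\Hb\|/\smin^{3/2}$.

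The only obstacle is bookkeeping: one must track the absolute constants through these substitutions and verify that the accumulated prefactor is dominated by $50$. Summing the contributions (roughly $14\sqrt{2}\cdot\tfrac54\approx 24.7$ from the first term and $7$ from the consolidated second term) gives a constant below $50$, so the stated bound holds. All analytic content is inherited from Theorem \ref{kh stable} and Lemma \ref{g to h and l}; the corollary is a pure arithmetic specialization.
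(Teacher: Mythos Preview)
Your proposal is correct and follows essentially the same route as the paper's own proof: verify \eqref{perturb req} via Lemma \ref{g to h and l}, substitute $\|\Lb-\hat\Lb\|\leq 2\|\Hb-\hat\Hb\|$ into \eqref{pair bounds} to get \eqref{pair bounds 2}, and then collapse both additive terms in \eqref{a bound} into the common form $\sqrt{n\|\Hb-\hat\Hb\|}\,\|\Hb\|/\smin^{3/2}$ using $\|\Hb^+\|+\|\Hb^+-\hat\Hb^+\|\leq\tfrac54\|\Hb\|$ and $\sqrt{\|\Hb-\hat\Hb\|}\leq\tfrac12\|\Hb\|/\sqrt{\smin}$. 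Your constant bookkeeping is actually slightly tighter than the paper's (the paper uniformly factors out $\sqrt{2}$ and ends at $(7/4)\cdot 14\sqrt2\approx 34.6$, you keep the two pieces separate and land near $31.75$), but this is a cosmetic difference, not a different argument.
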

Recall from Lemma \ref{g to h and l} that $\|\Hb-\hat{\Hb}\|\leq \sqrt{\min\{\TK_1,\TK_2+1\}}\|\cba{}-\cbe{}\|$. Hence, combining Corollary \ref{cor hk} and Theorem \ref{main thm simple} provides non-asymptotic guarantees for end-to-end system identification procedure. Theorem \ref{main thm simple} finds a good Markov parameter estimate $\cbe{}$ from a small amount of data and Corollary \ref{cor hk} translates this $\cbe{}$ into a robust state-space realization $\Ah,\Bh,\Ch,\hat{\Db}$.

\begin{figure}[t!]
 \begin{subfigure}[b]{0.5\textwidth}
        \includegraphics[width=\textwidth]{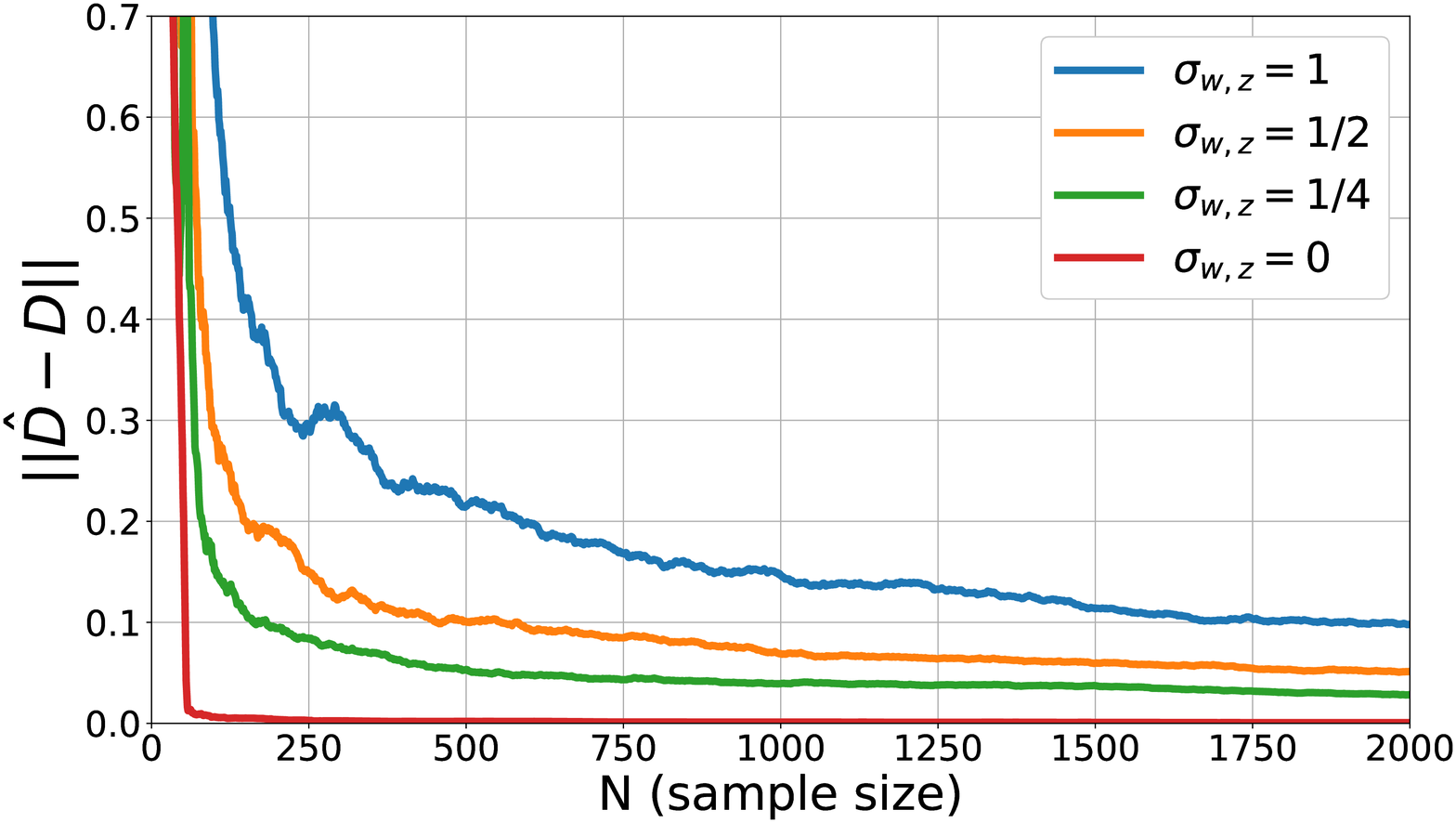}
    \end{subfigure} ~
    \begin{subfigure}[b]{0.5\textwidth}
        \includegraphics[width=\textwidth]{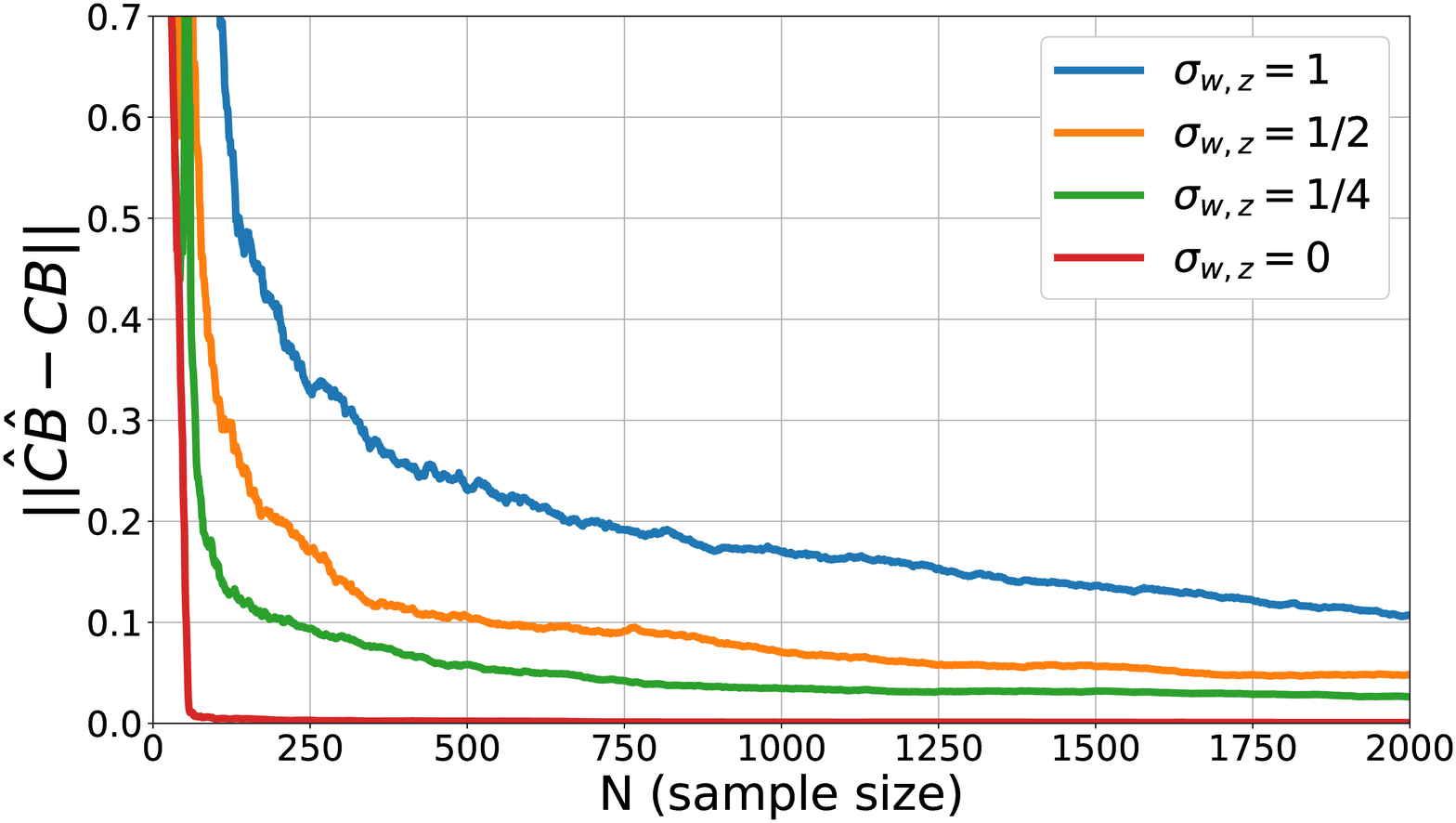}
    \end{subfigure}\\
    \begin{subfigure}[b]{0.5\textwidth}
        \includegraphics[width=\textwidth]{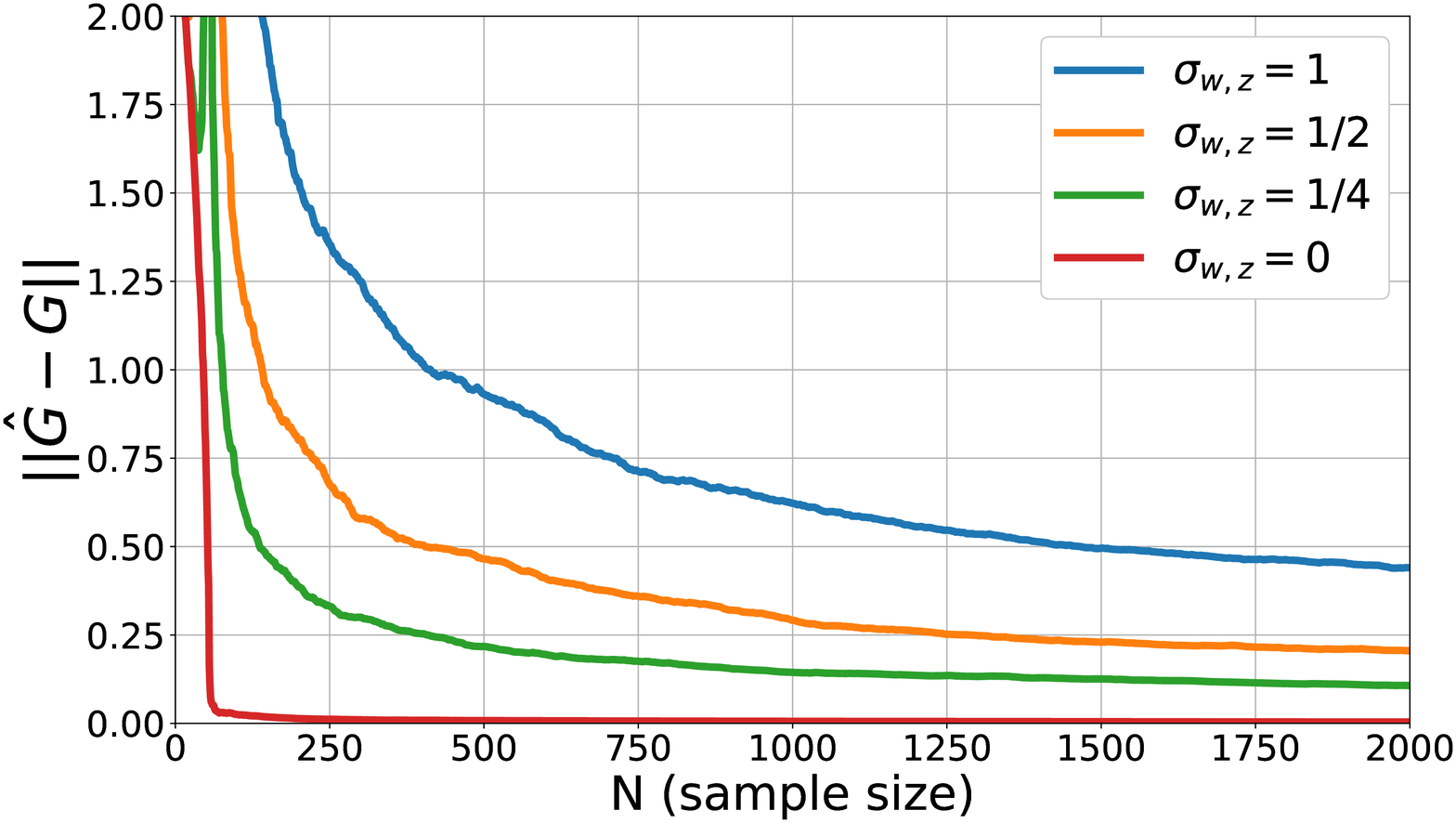}
    \end{subfigure} ~
    \begin{subfigure}[b]{0.5\textwidth}
        \includegraphics[width=\textwidth]{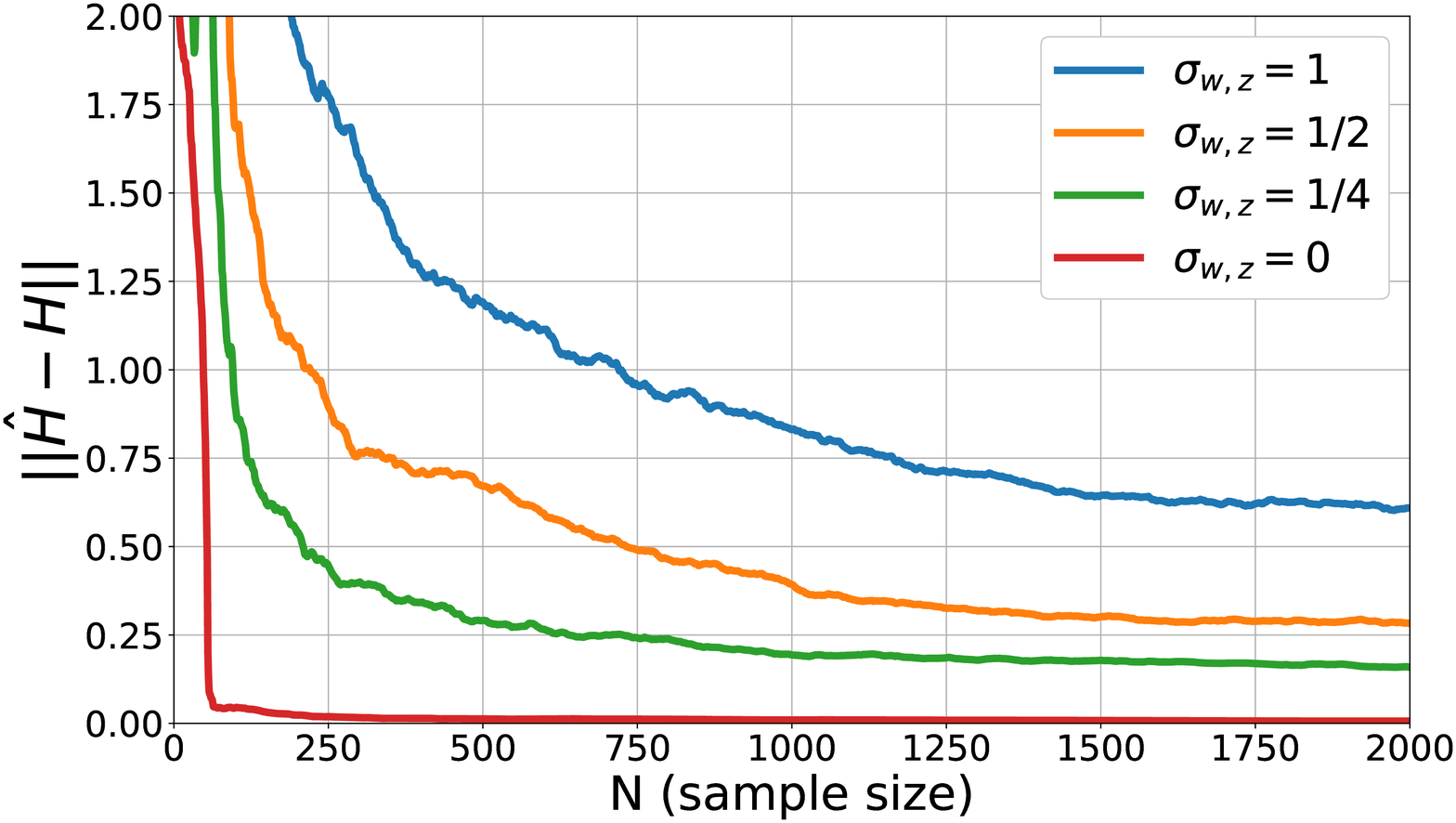}
    \end{subfigure}
    \caption{\small We consider the matrices that can directly be inferred from the Markov parameter matrix $\cba{}$. These are $\Db,\Cb\B$ which are the first two block submatrices of $\cba{}$, $\cba{}$ itself, and $\Hb$ which is the Hankel matrix that is constructed from blocks of $\cba{}$. These results are for $\TK=18$ which implies $\cba{}\in\R^{2\times 54}$ and $\Hb\in\R^{18\times 27}$ as we picked $\TK_1=\TK_2+1=9$.}
    \label{fig g}
\end{figure}

\section{Numerical Experiments}

We considered a MIMO (multiple input, multiple output) system with $m=2$ sensors, $n=5$ hidden states and input dimension $p=3$. To assess the typical performance of the least-squares and the \HK~algorithms, we consider random state-spaces as follows. We generate $\Cb,\Db$ with independent $\Nn(0,1/m)$ entries. We generate $\B$ with independent $\Nn(0,1/n)$ entries. These variance choices are to ensure these matrices are isometric in the sense that $\E[\tn{\M\vb}^2]=\tn{\vb}^2$ for a given vector $\vb$ and $\M\in\{\B,\Cb,\Db\}$. Hence, the impact of the standard deviations $\sigma_u,\sigma_w,\sigma_z$ are properly normalized. The input variance is fixed at $\sigma_u=1$ however noise variances will be modified during the experiments.

The most critical component of an LTI system is the $\A$ matrix. We picked $\A$ to be a diagonal matrix with its $n$ eigenvalues (i.e.~diagonal entries) are generated to be uniform random variables between $[0,0.9]$. The upper bound $0.9$ implies that we are working with stable matrices and the effect of unknown state vanishes for large $\TK$.

Finally, we conduct experiments for different $\TK$ values of $\TK\in\{6,12,18\}$. During \HK~procedure, we create a Hankel matrix $\hat{\Hb}$ of size $m\TK/2 \times p\TK/2$ and apply Algorithm \ref{algo 1}. Due to random generation of problem data,  even for $\TK=6$, the ground truth Hankel matrix $\Hb^-\in\R^{6\times 6}$ has rank $n=5$
so that \HK~procedure can indeed learn a good realization.

In our experimental setup, we pick a hyperparameter configuration of $\TK,\sigma_w,\sigma_z$ and generate a single rollout of the system until some time $t_{\infty}$. For each $\TK\leq \bN\leq t_{\infty}$, we solve the system via \eqref{pinv est} to obtain the estimate of $\cba{}$ and use Algorithm \ref{algo 1} to obtain a state-space realization $\Ah,\Bh,\Ch,\hat{\Db}$. The $x$-axis displays $N$ (which is the amount of available data at time $t=\bN$) and the $y$-axis displays the estimation error. Each curve in the figures is generated by averaging the outcomes of $20$ independent realizations of single trajectories.


In Figure \ref{fig g}, we considered the problem of estimating the matrices $\Db,\Cb\B,\cba{},\Hb$ when $\TK=18$. $\Db,\Cb\B$ are the first two impulse responses. Estimating $\cba{}$ and the associated Hankel matrix $\Hb$ helps verify our findings in Theorem \ref{main thm}. We plotted curves for varying noise levels $\sigma_w=\sigma_z\in\{0,1/4,1/2,1\}$. The main conclusion is that indeed estimation accuracy drastically improves as we observe the system for a longer period of time and collect more data. Note that estimation errors on $\Db$ and $\Cb\B$ are in the same ballpark. These are submatrices of $\cba{}$ hence their associated spectral norm errors are strictly lower compared to $\|\cba{}-\cbe{}\|$. \rrr{Per Definition \ref{hankel def}, $\Hb$ is constructed from the blocks of $\cba{}$ and its spectral norm error is in lines with $\cba{}$.} The other observation is that estimation error decays gracefully as a function of the noise levels for all matrices of interest. Since we picked a large $\TK$, the error due to unknown initial conditions (i.e.~$\e_t$) is fairly negligible. Hence when $\sigma_w=\sigma_z=0$, we quickly achieve near $0$ estimation error as the impact of the $\e_t$ term is also small.

\begin{figure}[t!]
 \begin{subfigure}[b]{0.5\textwidth}
        \includegraphics[width=\textwidth]{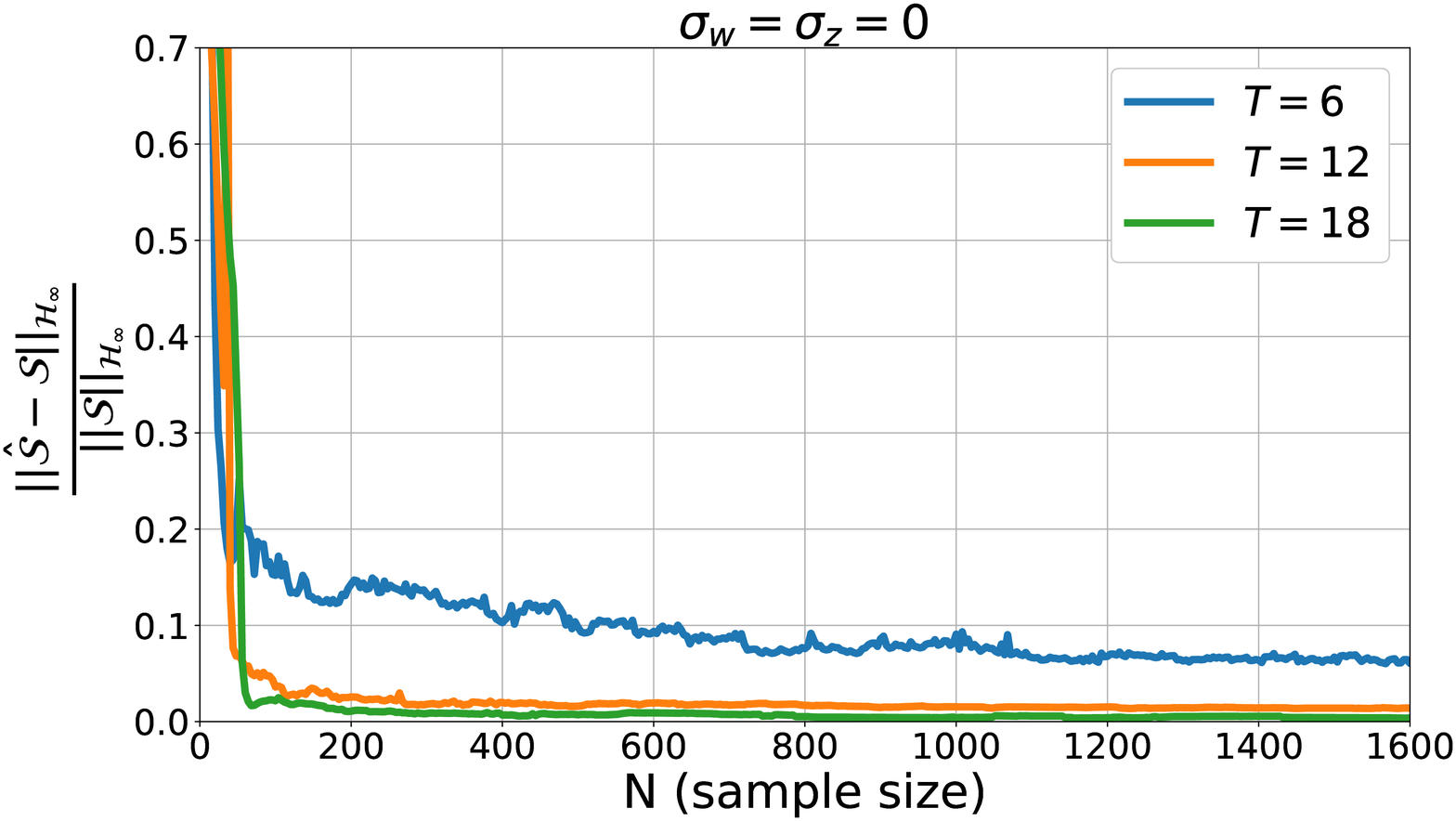}
        \caption{}
    \end{subfigure} ~
    \begin{subfigure}[b]{0.5\textwidth}
        \includegraphics[width=\textwidth]{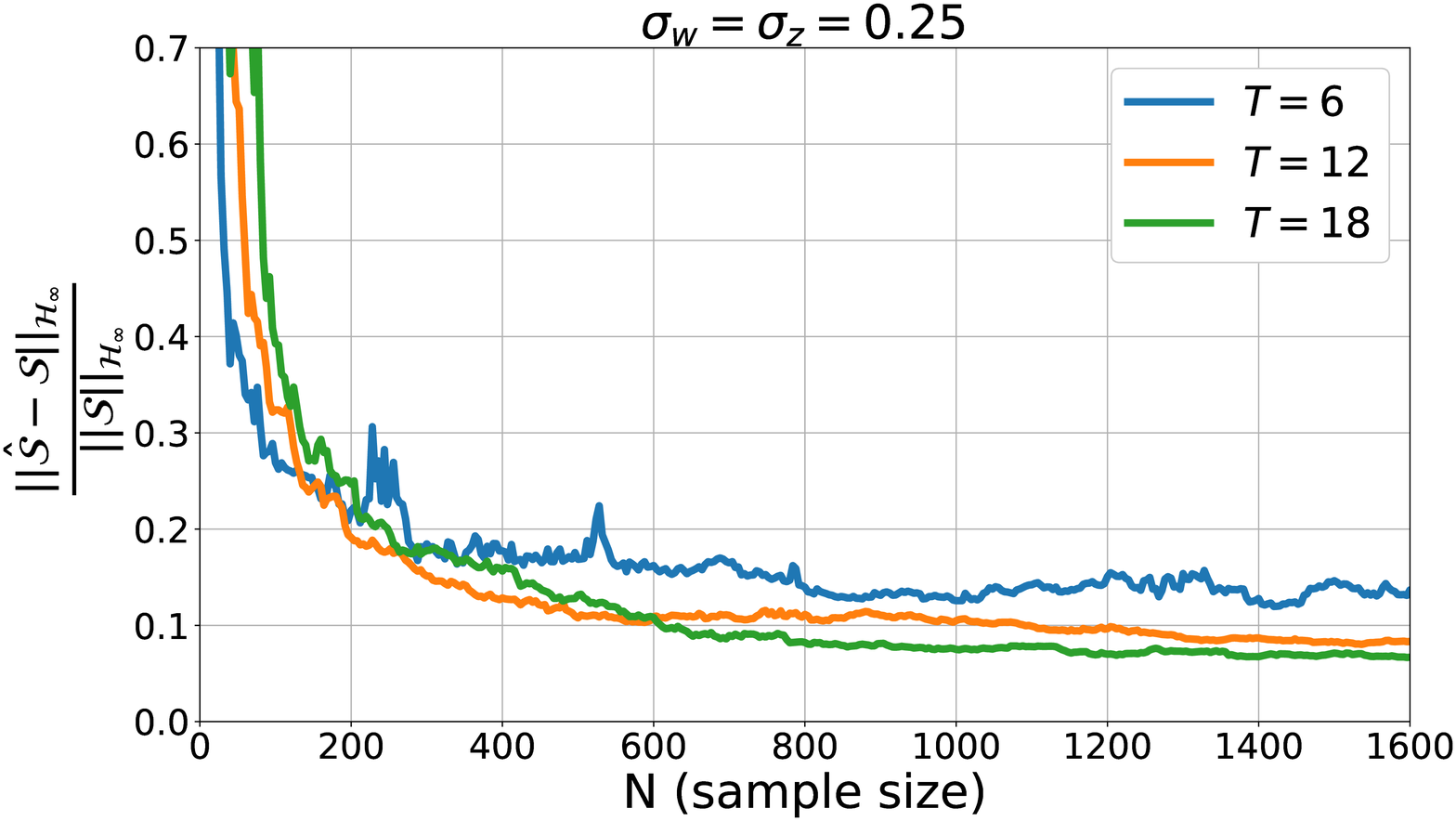}
        \caption{}
    \end{subfigure}\\
    \begin{subfigure}[b]{0.5\textwidth}
        \includegraphics[width=\textwidth]{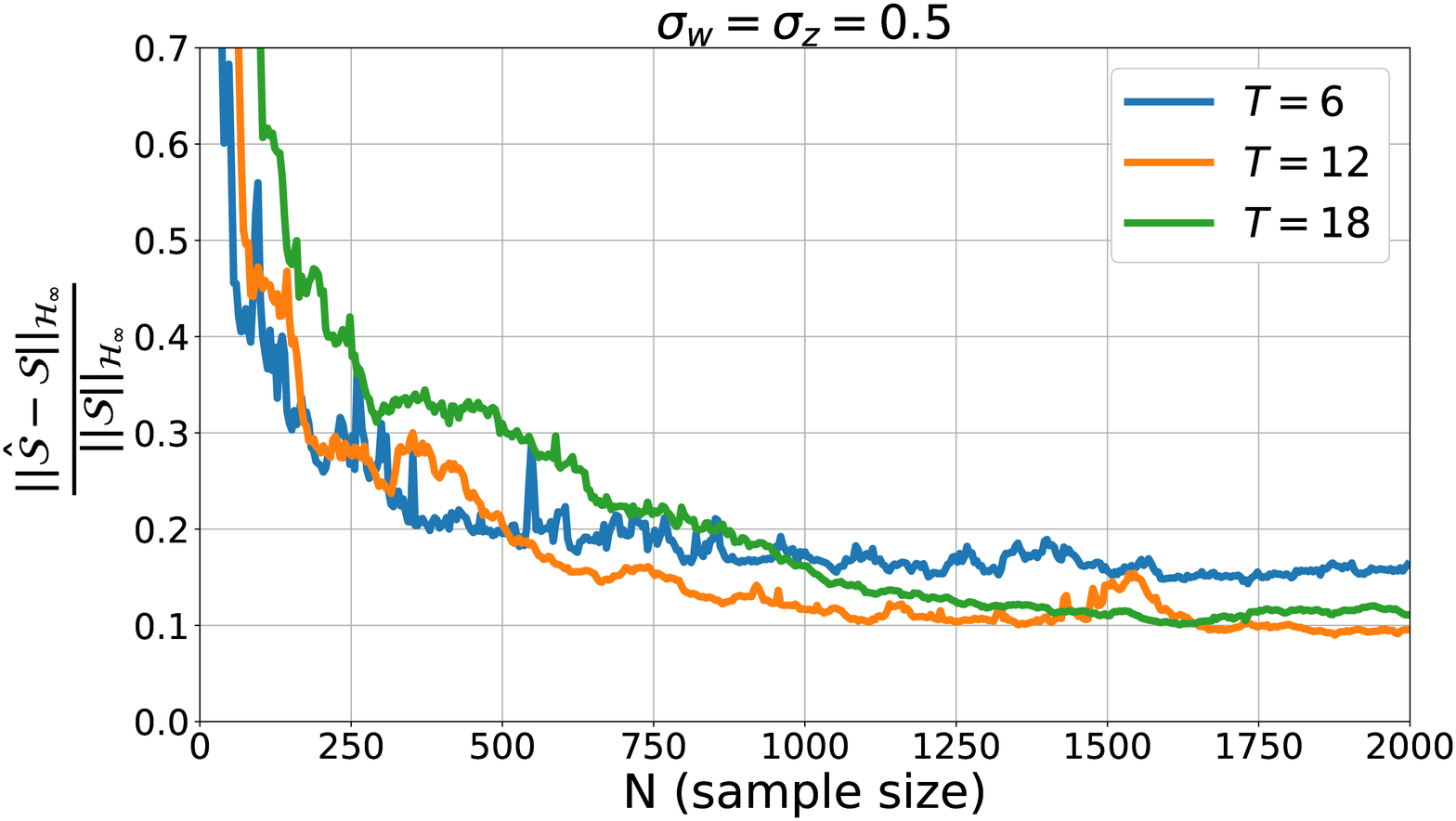}
        \caption{}
    \end{subfigure} ~
    \begin{subfigure}[b]{0.5\textwidth}
        \includegraphics[width=\textwidth]{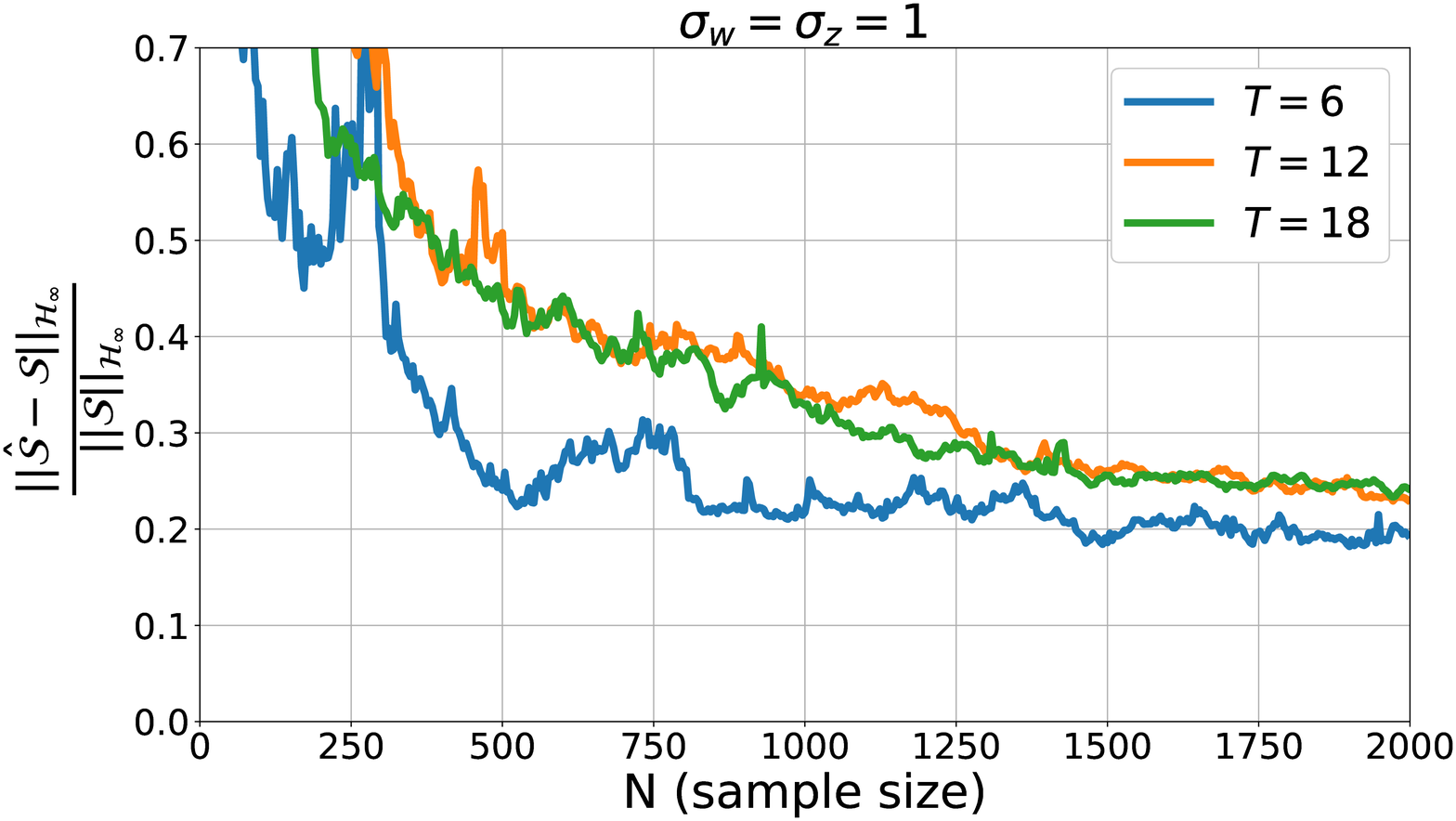}
        \caption{}
    \end{subfigure}
    \caption{\small Relative estimation errors for systems $\Sc,\hat{\Sc}$ for varying noise levels. $\hat{\Sc}$ is obtained by the \HK~procedure of Algorithm \ref{algo 1}. Based on Theorem \ref{kh stable}, we expect improved estimation accuracy for larger $N$ and smaller $\sigma_w,\sigma_z$; since the error in estimating the system matrices is directly controlled by the error in the Markov parameter matrix $\cba{}-\cbe{}$.}
        \label{fig abc}
\end{figure}

In Figure \ref{fig abc} we study the stability of the \HK~procedure which returns a realization up to a unitary transformation as described in Theorem \ref{kh stable}. Hence, rather than focusing on individual outputs $\Ah,\Bh,\Ch$ we directly study the LTI systems $\Sc=\text{LTI-sys}(\A,\B,\Cb,\Db)$ and $\hat{\Sc}=\text{LTI-sys}(\Ah,\Bh,\Ch,\hat{\Db})$. In particular, we focus on the ${\mathcal H}_\infty$ norm of the error $\Sc-\hat{\Sc}$. During this process, we clipped the singular values of $\Ah$ at $0.99$ i.e. if $\Ah$ has a singular value larger than $0.99$, we replace it by $0.99$ in the SVD of $\Ah$ which returns a new $\Ah$ whose singular vectors are same but singular values are clipped. This essentially corresponds to projecting the estimated system on the set of stable systems. While we verified that $\|\Ah\|>0.99$ rarely happens for large $N$, clipping ensures that ${\mathcal H}_\infty$ norm is always bounded and smooths out the results. Figure \ref{fig abc} illustrates the normalized ${\mathcal H}_\infty$ error $\frac{\|\hat{\Sc}-\Sc\|_{{\mathcal H}_\infty}}{\|\Sc\|_{{\mathcal H}_\infty}}$ for varying $\sigma_w=\sigma_z$ and $\TK\in\{6,12,18\}$. For zero-noise regime, $\TK=18$ outperforms the rest demonstrating the benefit of using a larger $T$ to overcome the contribution of the $\sigma_e$ term. In the other regimes, all $\TK$ choices perform fairly similar; however $\TK=6$ appears to suffer less from increasing noise levels $\sigma_w,\sigma_z$. Another observation is that for very small sample size $N$, $\TK=6$ converges faster than the others. This is supported by our Theorem \ref{main thm} as $\TK=6$ has less unknowns and the minimal $N$ is in the order of $\TK p$, hence smaller $\TK$ means faster estimation.

We remark that one might be interested in other metrics to assess the error such as Frobenius norm. While not shown in the figures, we also verified that the Frobenius norm $\tf{\cba{}-\cbe{}}$ (and the errors for $\Cb\B,\Db,\Hb$ as well as $\|\Sc-\hat{\Sc}\|_{{\mathcal H}_2}$) behaves in a similar fashion to spectral norm and ${\mathcal H}_\infty$ norm.

\section{Conclusions}
In this paper, we analyzed the sample complexity of linear system identification from input/output data. Our analysis neither requires multiple independent trajectories nor relies on splitting the trajectory into non-overlapping intervals, therefore makes very efficient use of the available data from a single trajectory. More crucially, it does not rely on state measurements and works with only the inputs and outputs. Based on this analysis, we showed that one can approximate system's Hankel operator using near optimal amount of samples and shed light on the stability of finding a balanced realization.

There are many directions for future work. First, we are interested in combining our results with control synthesis techniques based on Markov parameters. Second, it is shown empirically that minimizing the rank or nuclear norm of the estimated Hankel matrix as a denoising step (see e.g., \cite{fazel2013hankel}) works better than Ho-Kalman. It is of interest to analyze the stability of such optimization-based algorithms. Finally, it would be interesting to see what type of recovery guarantees can be obtained if additional constraints, such as subspace constraints, on the system matrices are known \cite{fattahi2018data}.

\section*{Acknowledgements}

S.O. would like to thank Mahdi Soltanolkotabi for pointing out Lemma $5.14$ of \cite{tu2015low} as well as Babak Hassibi and Jay Farrell for constructive feedback. Authors would like to thank Zhe Du for a careful reading of the manuscript and helpful suggestions. N.O. would like to thank Glen Chou for proofreading a draft and Dennis Bernstein for comments on the  \HK~algorithm. The work of N.O. is supported in part by DARPA grant N66001-14-1-4045 and ONR grant N000141812501.

\bibliographystyle{plain}
\bibliography{Bibfiles}

\begin{thebibliography}{10}

\bibitem{adamczak2015note}
Radoslaw Adamczak et~al.
\newblock A note on the {H}anson-{W}right inequality for random vectors with
  dependencies.
\newblock {\em Electronic Communications in Probability}, 20, 2015.

\bibitem{arora2018towards}
Sanjeev Arora, Elad Hazan, Holden Lee, Karan Singh, Cyril Zhang, and Yi~Zhang.
\newblock Towards provable control for unknown linear dynamical systems.
\newblock {\em ICLR workshop}, 2018.

\bibitem{boczar2018finite}
Ross Boczar, Nikolai Matni, and Benjamin Recht.
\newblock Finite-data performance guarantees for the output-feedback control of
  an unknown system.
\newblock {\em arXiv preprint arXiv:1803.09186}, 2018.

\bibitem{dean2017sample}
Sarah Dean, Horia Mania, Nikolai Matni, Benjamin Recht, and Stephen Tu.
\newblock On the sample complexity of the linear quadratic regulator.
\newblock {\em arXiv preprint arXiv:1710.01688}, 2017.

\bibitem{dopico2000note}
Froil{\'a}n~M Dopico.
\newblock A note on sin $\theta$ theorems for singular subspace variations.
\newblock {\em BIT Numerical Mathematics}, 40(2):395--403, 2000.

\bibitem{faradonbeh2017finite}
Mohamad Kazem~Shirani Faradonbeh, Ambuj Tewari, and George Michailidis.
\newblock Finite time analysis of optimal adaptive policies for
  linear-quadratic systems.
\newblock {\em arXiv preprint arXiv:1711.07230}, 2017.

\bibitem{fattahi2018data}
Salar Fattahi and Somayeh Sojoudi.
\newblock Data-driven sparse system identification.
\newblock {\em arXiv preprint arXiv:1803.07753}, 2018.

\bibitem{fazel2018global}
Maryam Fazel, Rong Ge, Sham~M Kakade, and Mehran Mesbahi.
\newblock Global convergence of policy gradient methods for linearized control
  problems.
\newblock {\em arXiv preprint arXiv:1801.05039}, 2018.

\bibitem{fazel2013hankel}
Maryam Fazel, Ting~Kei Pong, Defeng Sun, and Paul Tseng.
\newblock Hankel matrix rank minimization with applications to system
  identification and realization.
\newblock {\em SIAM Journal on Matrix Analysis and Applications},
  34(3):946--977, 2013.

\bibitem{fledderjohn2010comparison}
Matthew~S Fledderjohn, Matthew~S Holzel, Harish~J Palanthandalam-Madapusi,
  Robert~J Fuentes, and Dennis~S Bernstein.
\newblock A comparison of least squares algorithms for estimating markov
  parameters.
\newblock In {\em American Control Conference (ACC), 2010}, pages 3735--3740.
  IEEE, 2010.

\bibitem{friedman2001elements}
Jerome Friedman, Trevor Hastie, and Robert Tibshirani.
\newblock {\em The elements of statistical learning}, volume~1.
\newblock Springer series in statistics New York, 2001.

\bibitem{furuta1995discrete}
Katsuhisa Furuta and Manop Wongsaisuwan.
\newblock Discrete-time lqg dynamic controller design using plant markov
  parameters.
\newblock {\em Automatica}, 31(9):1317--1324, 1995.

\bibitem{hardt2016gradient}
Moritz Hardt, Tengyu Ma, and Benjamin Recht.
\newblock Gradient descent learns linear dynamical systems.
\newblock {\em arXiv preprint arXiv:1609.05191}, 2016.

\bibitem{hazan2017learning}
Elad Hazan, Karan Singh, and Cyril Zhang.
\newblock Learning linear dynamical systems via spectral filtering.
\newblock In {\em Advances in Neural Information Processing Systems}, pages
  6705--6715, 2017.

\bibitem{ho1966effective}
BL~Ho and Rudolf~E K{\'a}lm{\'a}n.
\newblock Effective construction of linear state-variable models from
  input/output functions.
\newblock {\em at-Automatisierungstechnik}, 14(1-12):545--548, 1966.

\bibitem{hogben2006handbook}
Leslie Hogben.
\newblock {\em Handbook of linear algebra}.
\newblock CRC Press, 2006.

\bibitem{horn1990matrix}
Roger~A Horn, Roger~A Horn, and Charles~R Johnson.
\newblock {\em Matrix analysis}.
\newblock Cambridge university press, 1990.

\bibitem{krahmer2014suprema}
Felix Krahmer, Shahar Mendelson, and Holger Rauhut.
\newblock Suprema of chaos processes and the restricted isometry property.
\newblock {\em Communications on Pure and Applied Mathematics},
  67(11):1877--1904, 2014.

\bibitem{ljung1998system}
Lennart Ljung.
\newblock System identification.
\newblock In {\em Signal analysis and prediction}, pages 163--173. Springer,
  1998.

\bibitem{meng2010optimal}
Lingsheng Meng and Bing Zheng.
\newblock The optimal perturbation bounds of the moore--penrose inverse under
  the frobenius norm.
\newblock {\em Linear Algebra and its Applications}, 432(4):956--963, 2010.

\bibitem{pereira2010learning}
Jos{\'e} Pereira, Morteza Ibrahimi, and Andrea Montanari.
\newblock Learning networks of stochastic differential equations.
\newblock In {\em Advances in Neural Information Processing Systems}, pages
  172--180, 2010.

\bibitem{rudelson2013hanson}
Mark Rudelson, Roman Vershynin, et~al.
\newblock Hanson-wright inequality and sub-gaussian concentration.
\newblock {\em Electronic Communications in Probability}, 18, 2013.

\bibitem{sanchez1998robust}
Ricardo~S Sanchez-Pena and Mario Sznaier.
\newblock {\em Robust systems theory and applications}.
\newblock Wiley-Interscience, 1998.

\bibitem{santillo2010adaptive}
Mario~A Santillo and Dennis~S Bernstein.
\newblock Adaptive control based on retrospective cost optimization.
\newblock {\em Journal of guidance, control, and dynamics}, 33(2):289--304,
  2010.

\bibitem{lwom}
Max Simchowitz, Horia Mania, Stephen Tu, Michael~I Jordan, and Benjamin Recht.
\newblock Learning without mixing: Towards a sharp analysis of linear system
  identification.
\newblock {\em arXiv preprint arXiv:1802.08334}, 2018.

\bibitem{skelton1994data}
Robert~E Skelton and Guojun Shi.
\newblock The data-based lqg control problem.
\newblock In {\em Decision and Control, 1994., Proceedings of the 33rd IEEE
  Conference on}, volume~2, pages 1447--1452. IEEE, 1994.

\bibitem{tu2017non}
Stephen Tu, Ross Boczar, Andrew Packard, and Benjamin Recht.
\newblock Non-asymptotic analysis of robust control from coarse-grained
  identification.
\newblock {\em arXiv preprint arXiv:1707.04791}, 2017.

\bibitem{tu2015low}
Stephen Tu, Ross Boczar, Max Simchowitz, Mahdi Soltanolkotabi, and Benjamin
  Recht.
\newblock Low-rank solutions of linear matrix equations via procrustes flow.
\newblock {\em arXiv preprint arXiv:1507.03566}, 2015.

\bibitem{van2012subspace}
Peter Van~Overschee and BL~De~Moor.
\newblock {\em Subspace identification for linear systems:
  Theory--Implementation--Applications}.
\newblock Springer Science \& Business Media, 2012.

\bibitem{wedin1973perturbation}
Per-{\AA}ke Wedin.
\newblock Perturbation theory for pseudo-inverses.
\newblock {\em BIT Numerical Mathematics}, 13(2):217--232, 1973.

\bibitem{weyer1999finite}
Erik Weyer, Robert~C Williamson, and Iven~MY Mareels.
\newblock Finite sample properties of linear model identification.
\newblock {\em IEEE Transactions on Automatic Control}, 44(7):1370--1383, 1999.

\bibitem{yu2014useful}
Yi~Yu, Tengyao Wang, and Richard~J Samworth.
\newblock A useful variant of the davis--kahan theorem for statisticians.
\newblock {\em Biometrika}, 102(2):315--323, 2014.

\end{thebibliography}

\newpage
\appendix

\section{Proof of the Results on Learning Markov Parameters}
We first describe the basic proof idea. Following equation \eqref{lsp eq}, to further simplify the notation, define the matrices
\begin{align}
&\W=[\wb_{\TK},~\wb_{\TK+1},~\dots,~\wb_{\bN}]^*\in\R^{N\times \TK n},\label{matrices}\\
&\Eb=[\e_{\TK},~\e_{\TK+1},~\dots,~\e_{\bN}]^*\in\R^{N\times n},\nn\\
&\Zb=[\z_{\TK},~\z_{\TK+1},~\dots,~\z_{\bN}]^*\in\R^{N\times m}.\nn
\end{align}
With these variables, we have the system of equations
\[
\Y=\Ub\cba{}^*+\Eb+\Zb+\W\ca{}^*.
\]
 Following \eqref{pinv est}, estimation error is given by
\begin{align}
(\cbe{}-\cba{})^*=(\Ub^*\Ub)^{-1}\Ub^*(\W\ca{}^*+\Zb+\Eb).\label{pinv terms}
\end{align}
The spectral norm of the error can be bounded as
\begin{align}
\|(\cbe{}-\cba{})^*\|\leq \|(\Ub^*\Ub)^{-1}\| (\|\Ub^*\W\|\|\ca{}^*\|+\|\Ub^*\Zb\|+\|\Ub^*\Eb\|).\label{error decomp}
\end{align}
Each of these terms will be bounded individually. The bounds on $\|(\Ub^*\Ub)^{-1}\| $ and $\|\Ub^*\W\|$ will be obtained by using the properties of random circulant matrices in Section \ref{circ sec}. $\|\Ub^*\Zb\|$ is arguably the simplest term due to $\Zb$ being an i.i.d.~Gaussian matrix. It is bounded via Lemma \ref{out noise}. Finally, $\|\Ub^*\Eb\|$ term will be addressed by employing a martingale based argument in Section \ref{sec martin}. We first prove Theorem \ref{main thm} which is our main theorem. It will be followed by the proof of Theorem \ref{main thm simple}.
\subsection{Proof of Theorem \ref{main thm}}\label{sec main prf}
\begin{proof} The proof is obtained by combining estimates from the subsequent sections. Set $\Theta=\log^2(2\TK p)\log^2(2Np)$ to simplify the notation. Picking $c\geq (\log2)^{-4}$, our assumption of $N\geq c\TK p\Theta$ implies $N\geq \TK$ and $N\geq (\bN+1)/2$ (using $\bN=N+\TK-1$). Consequently, $\log^2(2\bN p)\leq 4{\log^2(2Np)}$ and we have $N\geq (c/4)\TK p\log^2(2\TK p)\log^2(2\bN p)$. This fact will be useful when we need to utilize results of Section \ref{circ sec}. We first address the $\Zb$ component of the error which is rather trivial to bound.
\begin{lemma}\label{out noise} Let $\M\in\R^{m\times n}$ be a tall matrix ($m\geq n$) with $\|\M\|\leq \eta$. Let $\Gb\in\R^{m\times k}$ be a matrix with independent standard normal entries. Then, with probability at least $1-2\exp(-t^2/2)$,
\[
\|\M^*\Gb\|\leq\eta(\sqrt{2(n+k)}+t).
\]
In particular, setting $t=\sqrt{2(n+k)}$, we find $\|\M^*\Gb\|\leq \eta \sqrt{8(n+k)}$ with probability at least $1-2\exp(-(n+k))$.
\end{lemma}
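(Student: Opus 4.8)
The plan is to reduce the claim to the textbook spectral-norm bound for an i.i.d.\ Gaussian matrix by peeling off $\M$ through its singular value decomposition, and then to invoke Gaussian Lipschitz concentration. First I would write the thin SVD $\M=\Ub\bSi\V^*$ with $\Ub\in\R^{m\times n}$ having orthonormal columns, $\V\in\R^{n\times n}$ orthogonal, and $\|\bSi\|=\|\M\|\le\eta$. Since $\V$ is a square orthogonal matrix it does not change the spectral norm, so $\|\M^*\Gb\|=\|\V\bSi\Ub^*\Gb\|=\|\bSi\,\Ub^*\Gb\|\le\|\bSi\|\,\|\Ub^*\Gb\|\le\eta\,\|\Ub^*\Gb\|$. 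The key structural observation is that $\tilde\Gb:=\Ub^*\Gb\in\R^{n\times k}$ is itself a matrix with i.i.d.\ standard normal entries: for each column $\g_j$ of $\Gb$ we have $\Ub^*\g_j\sim\Nn(0,\Ub^*\Ub)=\Nn(0,\Iden_n)$ by rotation invariance of the Gaussian, and distinct columns remain independent. Thus the problem collapses to controlling $\|\tilde\Gb\|$ for a genuine $n\times k$ standard Gaussian matrix.

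Next I would bound $\|\tilde\Gb\|$ by combining an expectation estimate with concentration. For the mean, the standard Gaussian comparison inequality (Gordon/Slepian, or equivalently Chevet applied to $\sup_{\vct{u}\in S^{n-1},\vct{v}\in S^{k-1}}\vct{u}^*\tilde\Gb\vct{v}$) yields $\E\|\tilde\Gb\|\le\sqrt n+\sqrt k$. For the fluctuation, note that $A\mapsto\|A\|$ is $1$-Lipschitz with respect to the Frobenius norm, so $\tilde\Gb\mapsto\|\tilde\Gb\|$ is a $1$-Lipschitz function of the i.i.d.\ Gaussian entries; Gaussian concentration of measure then gives $\Pro(\|\tilde\Gb\|\ge\E\|\tilde\Gb\|+t)\le\exp(-t^2/2)$. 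Chaining these two facts, $\Pro(\|\tilde\Gb\|\ge\sqrt n+\sqrt k+t)\le\exp(-t^2/2)$.

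Finally I would translate back. Using $\|\M^*\Gb\|\le\eta\|\tilde\Gb\|$ together with the elementary inequality $\sqrt n+\sqrt k\le\sqrt{2(n+k)}$ gives, on the above event, $\|\M^*\Gb\|\le\eta(\sqrt{2(n+k)}+t)$, which holds with probability at least $1-\exp(-t^2/2)\ge1-2\exp(-t^2/2)$ (only the upper tail is needed, so the factor $2$ is not tight). The ``in particular'' clause is then just the specialization $t=\sqrt{2(n+k)}$, for which $\sqrt{2(n+k)}+t=2\sqrt{2(n+k)}=\sqrt{8(n+k)}$ and $\exp(-t^2/2)=\exp(-(n+k))$.

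Since every step is standard, there is no serious obstacle; the only point requiring care is the mean bound $\E\|\tilde\Gb\|\le\sqrt n+\sqrt k$, which rests on a Gaussian comparison argument rather than on elementary estimates. A fully self-contained alternative that avoids the SVD reduction is to apply Chevet's inequality directly to the process $X_{\vct{u},\vct{v}}=\langle\M\vct{u}\,\vct{v}^*,\Gb\rangle$ indexed by $\vct{u}\in S^{n-1},\vct{v}\in S^{k-1}$, whose radius/width data are $\mathrm{rad}(\M(S^{n-1}))=\eta$, $\E\|\M^*\g\|\le\tf{\M}\le\eta\sqrt n$, and $\E\|\h\|\le\sqrt k$; this produces $\E\|\M^*\Gb\|\le\eta(\sqrt n+\sqrt k)$ directly, after which the same Lipschitz concentration (now with constant $\eta$) finishes the proof.
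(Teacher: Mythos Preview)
Your proof is correct and follows essentially the same route as the paper: reduce via the thin SVD of $\M$ so that $\M^*\Gb$ becomes $\bSi$ times an $n\times k$ i.i.d.\ Gaussian matrix, bound the expected spectral norm by $\sqrt{n}+\sqrt{k}\le\sqrt{2(n+k)}$, and finish with Gaussian Lipschitz concentration. The paper's argument is identical (up to notation for the SVD factors), so there is nothing to add.
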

\begin{proof} Suppose $\M$ have singular value decomposition $\M=\Vb_1\bSi\Vb_2^*$ where $\Vb_1\in\R^{m\times n}$. Observe that $\bar{\Gb}=\Vb_1^*\Gb\in\R^{n\times k}$ have i.i.d. $\Nn(0,1)$ entries. Also $\E[\|\bar{\Gb}\|]\leq \sqrt{n}+\sqrt{k}\leq \sqrt{2(n+k)}$. Applying Lipschitz Gaussian concentration on spectral norm, with probability at least $1-2\exp(-t^2/2)$,
\[
\|\M^*\Gb\|=\|\Vb_2\bSi\bar{\Gb}\|=\|\bSi\bar{\Gb}\|\leq \eta (\sqrt{2(n+k)}+t).
\]
\end{proof}
The following corollary states the estimation error due to measurement noise ($\Zb$ term).
\begin{corollary} \label{cor a2}Let $\Ub\in\R^{N\times \TK p}$ be the data matrix as in \eqref{label and data} and let $\Zb\in\R^{N\times m}$ be the measurement noise matrix from \eqref{matrices}. Suppose $N\geq c\TK p\Theta$ for some absolute constant $c>0$. With probability at least $1-2\exp(-(\TK p+m))-\exp(-\Theta)$,
\[
\|\Ub^*\Zb\|\leq 4\sigma_u\sigma_z\sqrt{N(\TK p+m)}.
\]
\end{corollary}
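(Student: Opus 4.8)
The plan is to exploit the independence between the input data matrix $\Ub$ and the measurement noise matrix $\Zb$, and thereby reduce the corollary to a direct application of Lemma \ref{out noise} together with an operator-norm bound on $\Ub$. Since the sequences $\{\ub_t\}$ and $\{\z_t\}$ are independent by assumption, $\Zb$ is independent of $\Ub$; moreover, each entry of $\Zb$ is $\Nn(0,\sigma_z^2)$, so $\Zb/\sigma_z$ has i.i.d.\ standard normal entries. The idea is to condition on $\Ub$, treating it as a fixed tall matrix (note $N\geq c\TK p\Theta\geq \TK p$, so $\Ub\in\R^{N\times \TK p}$ is indeed tall), and apply Lemma \ref{out noise} with $\M=\Ub$ and $\Gb=\Zb/\sigma_z$. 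Conditioning on $\Ub$ does not alter the distribution of $\Zb$, so the conditional concentration bound is valid for every admissible $\Ub$.

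Concretely, in the notation of Lemma \ref{out noise} I would take $n=\TK p$ and $k=m$, and let $\eta$ be a high-probability upper bound on $\|\Ub\|$. With the choice $t=\sqrt{2(n+k)}$, the lemma yields
\[
\|\Ub^*\Zb\|=\sigma_z\|\Ub^*(\Zb/\sigma_z)\|\leq \sigma_z\,\eta\,\sqrt{8(\TK p+m)}
\]
with probability at least $1-2\exp(-(\TK p+m))$, conditionally on $\Ub$. Since this holds for every $\Ub$ in the event $\{\|\Ub\|\leq\eta\}$, it transfers to an unconditional statement on that event.

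It remains to supply $\eta$. Here I would invoke the random circulant matrix estimates of Section \ref{circ sec}: under the hypothesis $N\geq c\TK p\Theta$, these give $\|\Ub\|\leq\sqrt{2N}\,\sigma_u$ (equivalently $\|\Ub^*\Ub\|\leq 2N\sigma_u^2$) with probability at least $1-\exp(-\Theta)$. Substituting $\eta=\sqrt{2N}\,\sigma_u$ and using the identity $\sqrt{2}\cdot\sqrt{8}=4$ gives exactly
\[
\|\Ub^*\Zb\|\leq \sigma_z\,\sqrt{2N}\,\sigma_u\,\sqrt{8(\TK p+m)}=4\sigma_u\sigma_z\sqrt{N(\TK p+m)}.
\]
A union bound over the failure events of the two estimates produces the claimed success probability $1-2\exp(-(\TK p+m))-\exp(-\Theta)$.

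The Gaussian concentration step is routine; the substantive ingredient is the operator-norm control $\|\Ub\|\leq\sqrt{2N}\,\sigma_u$, which is where the overlapping (and hence dependent) structure of the rows of $\Ub$ must be handled via the circulant concentration results, and where the sample-size requirement $N\geq c\TK p\Theta$ is consumed. I would therefore regard that norm bound, rather than the application of Lemma \ref{out noise}, as the main obstacle. Crucially, however, it is the very same quantity already needed to control $\|(\Ub^*\Ub)^{-1}\|$ elsewhere in the analysis, so no new machinery is required to finish the corollary.
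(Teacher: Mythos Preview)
Your proposal is correct and follows essentially the same approach as the paper: condition on $\Ub$, apply Lemma~\ref{out noise} to the standard Gaussian matrix $\Zb/\sigma_z$, and supply the bound $\|\Ub\|\leq\sqrt{2N}\,\sigma_u$ from Lemma~\ref{lem cond} (the circulant estimate), then union bound. The paper's proof is terser but invokes exactly the same two ingredients and the same independence/union-bound structure you describe.
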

\begin{proof} Set $\eta=\sqrt{2N}\sigma_u$. Using $\bN\geq N$, Lemma \ref{lem cond} yields that 
\begin{align}
\Pro(\|\Ub\|\leq \eta)\geq 1-\exp(-\Theta).\label{u upp bound}
\end{align} Hence, combining Lemmas \ref{lem cond} and \ref{out noise}, using the fact that $\Zb,\Ub$ are independent, and adjusting for $\Zb$'s variance $\sigma_z$, we find the result. 
\end{proof}
Next, we apply Lemmas \ref{lem cond} and \ref{cross prod} to find that, for sufficiently large $c>0$, whenever $N\geq c\TK p\Theta$,
\begin{align}
\|(\Ub^*\Ub)^{-1}\|\leq 2\sigma_u^{-2}/N,~\|\Ub^*\W\|\leq \frac{1}{2}\sigma_u\sigma_w\max\{\sqrt{\NW N}, \NW\},\label{uw est}
\end{align}
where $\NW=c\TK q\log^2(2\TK q)\log^2(2Nq)$ and $q=p+n$ with probability at least $1-2\exp(-\Theta)$. Finally, applying Theorem \ref{ub eb prod} with $\gamma=\frac{\|\Ginf\|\Phi(\A)^2\|\Cb\A^{\TK-1}\|^2}{1-\rho(\A)^{2\TK}}$, with probability at least $1-\TK(\exp(-100\TK p)+2\exp(-100m))$,
\[
\|\Ub^*\Eb\|\leq c_3\sigma_u\sqrt{\TK \max\{N,\frac{m\TK}{1-\rho(\A)^{\TK}}\} \max\{\TK p,m\}  \gamma}.
\]
Combining all of the estimates above via union bound and substituting $\theta$, with probability at least,
\[
1-2\exp(-(\TK p+m))-3(2Np)^{-\log(2Np)\log^2(2\TK p)}-\TK(\exp(-100\TK p)+2\exp(-100m)),
\]
the error term $\|\cba{}-\cbe{}\|$ of \eqref{error decomp} is upper bounded by $\frac{R_z+R_e+R_w}{\sigma_u\sqrt{N}}$ where
\begin{align}
&R_z={8\sigma_z\sqrt{\TK p+m}},\\
&R_w={\sigma_w\|\ca{}\|\max\{\sqrt{\NW},\NW/\sqrt{N}\}},\\
&R_e={2C\sqrt{(1+\frac{m\TK}{N(1-\rho(\A)^{\TK})})(\TK p+m)\TK \gamma}}.
\end{align}
Absorbing the $\times 2$ multiplier of $R_e$ into $C$ and observing $\TK\gamma=\sigma_e^2$, we conclude with the desired result.
\end{proof}

\subsection{Proof of Theorem \ref{main thm simple}}
The proof uses the same strategy in Section \ref{sec main prf} with slight modifications. We will repeat the argument for the sake of completeness. First of all, we utilize the same estimates based on Lemmas \ref{lem cond} and \ref{cross prod}, namely \eqref{uw est} and \eqref{u upp bound} ($\|\Ub\|\leq \sqrt{2N}\sigma_u$) which hold with probability at least $1-3(2Np)^{-\log(2Np)\log^2(2\TK p)}$. Observe that $N\geq N_0\geq \NW=c\TK (p+n) \log^2(2\TK (p+n))\log^2(2N(p+n))$ hence, we have that
\[
\|\Ub^*\W\|\leq \frac{1}{2}\sigma_u\sigma_w{\sqrt{\NW N}}\leq \frac{1}{2}\sigma_u\sigma_w{\sqrt{N_0 N}}.
\]
We use Lemma \ref{out noise} with $t=\sqrt{2\TK q}$ to obtain $\Pro(\|\Ub^*\Zb\|\leq 4\sigma_u\sigma_z{\sqrt{\TK q N}})\geq 1-2\exp(-\TK q)$.

Finally, to bound the contribution of $\Eb$ we again apply Theorem \ref{ub eb prod}. Since $\rho(\A)^{\TK}\leq 0.99$, picking sufficiently large $c$, we observe that
\[
\max\{N,\frac{m\TK}{1-\rho(\A)^{\TK}}\}=N
\]
Hence, using $\sigma_e=\sqrt{\gamma\TK}$ and applying Theorem \ref{ub eb prod} yields that for some $C>0$
\[
\|\Ub^*\Eb\|\leq C{\sigma_u\sqrt{\TK N(\TK p+m)  \gamma}}\leq C{\sigma_u\sigma_e\sqrt{N\TK q}},
\]
holds with probability at least $1-\TK(\exp(-100\TK p)+2\exp(-100m))$. Union bounding over all these events and following \eqref{error decomp}, with probability at least,
\[
1-2\exp(-\TK q)-3(2Np)^{-\log(2Np)\log^2(2\TK p)}-\TK(\exp(-100\TK q)+2\exp(-100m)),
\]
we find the spectral norm estimation error of
\begin{align}
\|\cbe{}-\cba{}\|&\leq \frac{\frac{1}{2}\sigma_u\sigma_w\|\ca{}\|\sqrt{N_0 N}+4\sigma_u\sigma_z\sqrt{\TK q N}+C{\sigma_u\sigma_e\sqrt{\TK q N}}}{(\sigma^2_uN)/2}\\
&\leq \frac{\sigma_w\|\ca{}\|\sqrt{N_0 }+8\sigma_z\sqrt{\TK q }+2C{\sigma_e\sqrt{\TK q }}}{\sigma_u\sqrt{N}},
\end{align}
which is the desired bound after ensuring $\max\{8,2C\}^2\TK q\leq N_0$ by picking the constant $c$ (which leads $N_0$) to be sufficiently large.

\subsection{Proof of Theorem \ref{truncate thm}}
\begin{proof} Let us start with $\Gb^{(\infty)}$ estimate. First note that, the tail spectral norm is bounded via \eqref{tail spec}. Picking the proposed $\TK\geq \TK_1=1 -\frac{\log(2{\eps_0^{-1}(1-\rho(\A))^{-1}\Phi(\A) \|\Cb\|\|\B\|}\sqrt{\frac{N}{Tp+m}})}{\log(\rho(\A))}$ implies right hand side of \eqref{tail spec} can be upper bounded as
\begin{align}\label{tail bound}
 \frac{\Phi(\A) \|\Cb\|\|\B\|\rho(\A)^{\TK-1}}{1-\rho(\A)}\leq\frac{1}{2} \eps_0 \sqrt{\frac{Tp+m}{N}}\iff \rho(\A)^{-(\TK-1)}\geq  \frac{\Phi(\A) \|\Cb\|\|\B\|\sqrt{\frac{N}{Tp+m}}}{\eps_0(1-\rho(\A))}
\end{align}
Next, we will bound the spectral difference of order $\TK$ finite responses $\cba{}$ and $\cbe{}$. Let $\TK\geq -\frac{1}{\log(\rho(\A))}$ to ensure $\rho(\A)^\TK\leq 1/2$. Applying Theorem \ref{main thm}, we will show that individual error summands due to $R_w,R_e,R_z$ are upper bounded. First, Theorem \ref{main thm} is applicable due to the choice of $N$. $R_w$ summand is zero as $\sigma_w=0$. Second, observe that, for some $C>0$
\[
\frac{R_e}{\sigma_u\sqrt{N}}\leq {\frac{C}{4}\sigma_e\sqrt{(1+\frac{m\TK}{N(1-\rho(\A)^{\TK})})(\TK p+m)}}\leq \frac{\sigma_e}{\sigma_u}\frac{C}{4}\sqrt{1+\frac{m}{p}}\sqrt{\frac{Tp+m}{N}}.
\]
where we used $\sqrt{1+2mT/N}\leq \sqrt{1+\frac{m}{p}}$. Since $\sigma_w=0$, define,
\[
\Binf=\frac{\Ginf}{\sigma_u^2}=\sum_{i=0}^\infty\A^i\B\B^*(\A^*)^i.
\]
 Note that
\begin{align}
\frac{\sigma_e}{\sigma_u}&\leq 2\Phi(\A)\|\Cb\A^{\TK-1}\|\sqrt{\TK\|\Binf\|}\leq 2\Phi(\A)^2\|\Cb\|\rho(\A)^{\TK-1}\sqrt{\TK\|\Binf\|}\\
&\leq 2\sqrt{\frac{p}{p+m}}\eps_0/C,
\end{align}
which is guaranteed by $\TK\geq \TK_2= 1 -\frac{\log({C\eps_0^{-1}\Phi(\A)^2 \|\Cb\|\sqrt{T\|\Binf\|(1+m/p)}}{})}{\log(\rho(\A))}$ and ensures $\frac{R_e}{\sigma_u\sqrt{N}} \leq \frac{\eps_0}{2} \sqrt{\frac{Tp+m}{N}}$. Combining with $R_z$ bound of Theorem \ref{main thm} and tail bound of \eqref{tail bound}, these yield
\[
\|\Gb^{(\infty)}-\hat{\Gb}^{(\infty)}\|\leq (8\frac{\sigma_z}{\sigma_u}+\eps_0) \sqrt{\frac{Tp+m}{N}},
\]
whenever $N$ is stated as above and $T$ obeys $T\geq \max(-\frac{1}{\log(\rho(\A))},T_0)$ where 
\[
T_0:=\frac{c_0+\log(\Phi(\A)^2\|\Cb\| \eps_0^{-1})+\log(\max\{(1-\rho(\A))^{-1}\|\B\| \sqrt{\frac{N}{Tp+m}},\sqrt{T\|\Binf\|(1+m/p)}\})}{-\log(\rho(\A))}\geq \max(T_1,T_2).
\]
Treating $\A,\B,\Cb$ related variables in the numerator as constant terms (which are less insightful than the $\log(\rho(\A))$ term for our purposes), we find the condition \eqref{tk cond}.

To proceed, we  wish to show the result on Hankel matrices $\Hb^{(\infty)}$ and $\hat{\Hb}^{(\infty)}$. We shall decompose the $\Hb^{(\infty)}$ matrix as $\Hb^{(\infty)}=\Hb_{\text{main}}+\Hb_{\text{tail}}$ (same for $\hat{\Hb}$). $\Hb_{\text{main}},\hat{\Hb}_{\text{main}}$ are the $m\times p$ blocks corresponding to the first $\TK$ Markov parameters and their estimates. Observe that $\Hb_{\text{main}}$ lives on the upper-left $\TK\times \TK$ submatrix. Furthermore, the set of non-zero blocks in each of its first $\TK$ block-rows of size $m\times \TK p$ is a submatrix of $\cba{}$. For instance in \eqref{hmatrices}, non-zero rows of $\hat{\Hb}$ are all submatrices of $\cbe{}$. Consequently, adding spectral norms of nonzero rows and using the above bound on $\cba{}$ estimate, we have that
\[
\|\hat{\Hb}_{\text{main}}-{\Hb}_{\text{main}}\|\leq \TK \|\cba{}-\cbe{}\|\leq \TK(8\frac{\sigma_z}{\sigma_u}+\frac{\eps_0}{2}) \sqrt{\frac{Tp+m}{N}},
\]
where $\eps_0/2$ instead of $\eps_0$ is due to lack of tail terms. What remains is the $\Hb_{\text{tail}}$ term. Note that $\hat{\Hb}_{\text{tail}}=0$. $\Hb_{\text{tail}}$ matrix is composed of anti-diagonal blocks that start from $\TK+1$ till infinity. The non-zero blocks of $i$th anti-diagonal ($i\geq \TK+1$) are all equal to $\Cb\A^{i-2}\B$ due to Hankel structure, hence its spectral norm is equal to $\|\Cb\A^{i-2}\B\|$. Consequently, the spectral norm of $\Hb_{\text{tail}}$ can be obtained by adding the spectral norm of non-zero anti-diagonal matrices which is given by \eqref{tail spec} and is upper bounded by $\eps_0/2$ in \eqref{tail bound}. Hence, 
\[
\|\hat{\Hb}^{(\infty)}-{\Hb}^{(\infty)}\|\leq \|\hat{\Hb}_{\text{main}}-{\Hb}_{\text{main}}\|+\|\hat{\Hb}_{\text{tail}}-{\Hb}_{\text{tail}}\|\leq \TK(8\frac{\sigma_z}{\sigma_u}+{\eps_0}{}) \sqrt{\frac{Tp+m}{N}},
\]
concluding the proof.
\end{proof}

\section{Proof of the Ho-Kalman Stability}
In this section, we provide a proof for the stability of the \HK~procedure. Since system is assumed to be observable and controllable and $\TK_1,\TK_2$ are assumed to be sufficiently large, $\text{rank}(\Lb)=n$ throughout this section. Recall that, given Markov parameter matrices $\cba{},\cbe{}$, the matrices $\Hb,\Hbm,\Hbl,\Hbp$ (with $\Hbl=\Hbm$ as $\Hbm$ is rank $n$) correspond to $\cba{}$ and the matrices $\hat{\Hb},\Hbhm,\Hbh,\Hbhp$ correspond to $\cbe{}$. 
We will show that \HK~state-space realizations corresponding to $\cba{}$ and $\cbe{}$ are close to each other as a function of $\|\cba{}-\cbe{}\|$. We first provide a proof of Lemma \ref{g to h and l}.

\subsection{Proof of Lemma \ref{g to h and l}}
We wish to show that $\Hb-\hat{\Hb}$ and $\Hbl-{\Hbh}$ can be upper bounded in terms of $\cba{}-\cbe{}$ via \eqref{cba upp}.
$\Hb^--\hat{\Hb}^-$ is a submatrix of $\Hb-\hat{\Hb}$ hence we have 
\[
\|\Hb^--\hat{\Hb}^-\|\leq \|\Hb-\hat{\Hb}\|.
\]
Denote the $i$th block row of $\Hb$ by $\Hb[i]$. Since $\Hb[i]$ (for all $i$) is a submatrix of the Markov parameter matrix $\cba{}$, we have that $\|\Hb[i]-\hat{\Hb}[i]\|\leq \|\cba{}-\cbe{}\|$. Hence, the overall matrix $\Hb$ satisfies
\[
\|\Hb-\hat{\Hb}\|=\left\|\begin{bmatrix}\Hb[1]-\hat{\Hb}[1]\\\vdots\\\Hb[T_1]-\hat{\Hb}[T_1]\end{bmatrix}\right\|\leq \sqrt{\TK_1}\max_{1\leq i\leq T_1}\|\Hb[i]-\hat{\Hb}[i]\|\leq \sqrt{\TK_1} \|\cba{}-\cbe{}\|.
\]
Similarly, columns of $\Hb$ are also submatrices of $\cba{}$. Repeating same argument for columns, yields
\[
\|\Hb-\hat{\Hb}\|\leq \sqrt{\TK_2+1} \|\cba{}-\cbe{}\|.
\]
Combining both, we find \eqref{cba upp}. The bound \eqref{cba upp2} is based on singular value perturbation. First, noticing that rows/columns of $\Hbm$ are again copied from $\cba{}$ and carrying out the same argument, we have that
\[
\|\Hbm-\Hbhm\|\leq\sqrt{\min\{\TK_1,\TK_2\}}\|\cba{}-\cbe{}\|.
\] 
Recall that $\Hbl=\Hbm$ and $\Hbh$ is the rank-$n$ approximations of $\Hbhm$. Denoting $i$th singular value of $\Hbhm$ by $\sigma_i(\Hbhm)$, standard singular value perturbation bound yields 
\[
\sigma_{n+1}(\Hbhm)=\|\Hbhm-\Hbh\|\leq \|\Hbhm-\Hbm\|.
\] Consequently, using $\Hbl=\Hb^-$,
\begin{align}
\|\Hbl-{\Hbh}\|\leq\|\Hbm-\Hbhm\|+\|\Hbhm-\Hbh\|\leq 2\|\Hbm-\hat{\Hb}^-\|\leq 2\sqrt{\min\{\TK_1,\TK_2\}}\|\cba{}-\cbe{}\|.\nn
\end{align}
\subsection{Robustness of Singular Value Decomposition}
The next theorem shows robustness of singular value decompositions of $\Lb$ and $\hat{\Lb}$ in terms of $\|\Lb-\hat{\Lb}\|$. It is obtained by using Lemma $5.14$ of \cite{tu2015low} and provides simultaneous control over left and right singular vector subspaces. This is essentially similar to results of Wedin and Davis-Kahan \cite{dopico2000note,yu2014useful} with the added advantage of simultaneous control which we crucially need for our result.
\begin{lemma}\label{sin theta sim} Suppose $\smin\geq 2\|\Hbl-\Hbh\|$ where $\smin$ is the smallest nonzero singular value (i.e.~$n$th largest singular value) of $\Lb$. Let rank $n$ matrices $\Hbl$, $\Hbh$ have singular value decompositions $\Ub\bSi\V^*$ and $\hat{\Ub}\hat{\bSi}\hat{\V}^*$. There exists an $n\times n$ unitary matrix $\Tb$ so that
\[
\tf{\Ub\bSi^{1/2}-\hat{\Ub}\hat{\bSi}^{1/2}\Tb}^2+\tf{\V\bSi^{1/2}-\hat{\V}\hat{\bSi}^{1/2}\Tb}^2\leq 5n\|\Lb-\hat{\Lb}\|.
\]
\end{lemma}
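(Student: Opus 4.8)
The plan is to read the left-hand side as the squared distance between two \emph{balanced} factorizations of the rank-$n$ matrices $\Lb$ and $\hat{\Lb}$, and then to invoke the factorization perturbation bound of Lemma $5.14$ in \cite{tu2015low}. Writing the SVDs as $\Lb=\Ub\bSi\V^*$ and $\hat{\Lb}=\hat{\Ub}\hat{\bSi}\hat{\V}^*$, I introduce the tall factors $\Ub\bSi^{1/2},\V\bSi^{1/2}$ and $\hat{\Ub}\hat{\bSi}^{1/2},\hat{\V}\hat{\bSi}^{1/2}$, so that $\Lb=(\Ub\bSi^{1/2})(\V\bSi^{1/2})^*$ and likewise for $\hat{\Lb}$. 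These factorizations are balanced, since $(\Ub\bSi^{1/2})^*(\Ub\bSi^{1/2})=\bSi=(\V\bSi^{1/2})^*(\V\bSi^{1/2})$ (and similarly for the hatted factors); this balanced structure is exactly what Lemma $5.14$ exploits to control the left and right factors \emph{simultaneously} by a single common rotation.

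First I would record that the gap condition $\smin\geq 2\|\Lb-\hat{\Lb}\|$ assumed in the statement is precisely the hypothesis required to apply the cited lemma. It then produces an $n\times n$ unitary matrix $\Tb$ (the Procrustes minimizer) satisfying
\[
\tf{\Ub\bSi^{1/2}-\hat{\Ub}\hat{\bSi}^{1/2}\Tb}^2+\tf{\V\bSi^{1/2}-\hat{\V}\hat{\bSi}^{1/2}\Tb}^2\leq \frac{2}{\sqrt{2}-1}\,\frac{\tf{\Lb-\hat{\Lb}}^2}{\smin}.
\]
The left-hand side is already the quantity we must bound, so it remains only to bound the right-hand side by $5n\|\Lb-\hat{\Lb}\|$.

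For that last step I would convert the Frobenius norm to the spectral norm by a rank argument. Since $\Lb$ and $\hat{\Lb}$ are each rank $n$, their difference has rank at most $2n$, so $\tf{\Lb-\hat{\Lb}}^2\leq 2n\|\Lb-\hat{\Lb}\|^2$. Combining this with the gap condition, which gives $\|\Lb-\hat{\Lb}\|/\smin\leq 1/2$, yields
\[
\frac{2}{\sqrt{2}-1}\,\frac{\tf{\Lb-\hat{\Lb}}^2}{\smin}\leq \frac{2}{\sqrt{2}-1}\cdot 2n\,\|\Lb-\hat{\Lb}\|\cdot\frac{\|\Lb-\hat{\Lb}\|}{\smin}\leq \frac{2n}{\sqrt{2}-1}\,\|\Lb-\hat{\Lb}\|\leq 5n\,\|\Lb-\hat{\Lb}\|,
\]
where the final step uses $2/(\sqrt{2}-1)=2\sqrt{2}+2\leq 5$. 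Taking $\Tb$ to be the unitary from the second paragraph, this is exactly the claimed inequality.

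The main obstacle I anticipate is not the algebra but the faithful transcription of Lemma $5.14$ of \cite{tu2015low} into the present notation: one must verify that its hypotheses (balancedness of the factorizations, and the gap measured against $\smin=\sigma_{\min}(\Lb)$ rather than $\sigma_{\min}(\hat{\Lb})$, the two being interchangeable via Weyl's inequality under the assumed gap) are actually met, and that the constant, together with the factor $2$ arising from stacking the left and right factors, lands below $5$ after the rank-$2n$ conversion---there is essentially no slack in the constant $5$. The payoff of this route, as opposed to the classical Wedin or Davis--Kahan $\sin\Theta$ bounds, is that it aligns the left \emph{and} right singular subspaces of $\hat{\Lb}$ to those of $\Lb$ by the \emph{same} unitary $\Tb$, which is exactly the simultaneous control that the downstream extraction of $\Ch,\Bh$ in Theorem \ref{kh stable} relies on.
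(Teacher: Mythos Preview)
Your proposal is correct and follows essentially the same route as the paper: invoke Lemma~5.14 of \cite{tu2015low} to obtain the bound $\frac{2}{\sqrt{2}-1}\tf{\Lb-\hat{\Lb}}^2/\smin$, then combine the rank-$2n$ inequality $\tf{\Lb-\hat{\Lb}}^2\leq 2n\|\Lb-\hat{\Lb}\|^2$ with the gap condition $\|\Lb-\hat{\Lb}\|\leq\smin/2$ to land on $\frac{2n}{\sqrt{2}-1}\|\Lb-\hat{\Lb}\|\leq 5n\|\Lb-\hat{\Lb}\|$. The paper's writeup routes the same estimates through the intermediate quantity $\tf{\Lb-\hat{\Lb}}$ rather than $\|\Lb-\hat{\Lb}\|^2$, but the algebra and the final constant are identical.
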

\begin{proof} Direct application of Theorem $5.14$ of \cite{tu2015low} guarantees the existence of a unitary $\Tb$ such that
\[
\text{LHS}=\tf{\Ub\bSi^{1/2}-\hat{\Ub}\hat{\bSi}^{1/2}\Tb}^2+\tf{\V\bSi^{1/2}-\hat{\V}\hat{\bSi}^{1/2}\Tb}^2\leq \frac{2}{\sqrt{2}-1}\frac{\tf{\Lb-\hat{\Lb}}^2}{\sigma_{\min}(\Lb)}.
\]
To proceed, using $\text{rank}(\Lb-\hat{\Lb})\leq 2n$ and $\smin\geq 2\|\Hbl-\Hbh\|\geq \sqrt{2/n}\tf{\Hbl-\Hbh}$, we find
\[
\text{LHS}\leq \frac{\sqrt{2n}}{\sqrt{2}-1}{\tf{\Lb-\hat{\Lb}}}\leq \frac{2{n}}{\sqrt{2}-1}{\|\Lb-\hat{\Lb}\|}\leq 5n\|\Lb-\hat{\Lb}\|.
\]
\end{proof}
Observe that our control over the subspace deviation improves as the perturbation $\|\Lb-\hat{\Lb}\|$ gets smaller. The next lemma is a standard result on singular value deviation.
\begin{lemma} \label{smin dev} Suppose $\smin\geq 2\|\Hbl-\Hbh\|$. Then, $\|\Hbh\|\leq 2\|\Hbl\|$ and $\sigma_{\min}(\Hbh)\geq \smin/2$.
\end{lemma}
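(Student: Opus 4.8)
The plan is to obtain both inequalities as direct consequences of Weyl's perturbation inequality for singular values, which states that for any two matrices $\Lb,\Hbh$ of matching dimensions and every index $k$, $|\sigma_k(\Hbh)-\sigma_k(\Lb)|\leq \|\Lb-\Hbh\|$. Here both $\Hbl$ and $\Hbh$ are rank-$n$ matrices, so their only nonzero singular values are $\sigma_1\geq\dots\geq\sigma_n$; in particular $\smin=\sigma_{\min}(\Hbl)$ is precisely $\sigma_n(\Hbl)$, and $\sigma_{\min}(\Hbh)$ should likewise be read as $\sigma_n(\Hbh)$. Before invoking Weyl, I would record the elementary fact that $\smin=\sigma_n(\Hbl)\leq \sigma_1(\Hbl)=\|\Hbl\|$, so that the hypothesis $\|\Hbl-\Hbh\|\leq \smin/2$ also gives $\|\Hbl-\Hbh\|\leq \|\Hbl\|/2$.

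For the operator-norm bound, I would apply Weyl with $k=1$:
\[
\|\Hbh\|=\sigma_1(\Hbh)\leq \sigma_1(\Hbl)+\|\Hbl-\Hbh\|=\|\Hbl\|+\|\Hbl-\Hbh\|.
\]
Substituting $\|\Hbl-\Hbh\|\leq \|\Hbl\|/2$ then yields $\|\Hbh\|\leq \tfrac32\|\Hbl\|\leq 2\|\Hbl\|$, which is the first claim (in fact with a little room to spare).

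For the smallest-singular-value bound, I would apply Weyl with $k=n$, using the rank-$n$ structure so that $\sigma_n$ is indeed the smallest nonzero singular value on both sides:
\[
\sigma_{\min}(\Hbh)=\sigma_n(\Hbh)\geq \sigma_n(\Hbl)-\|\Hbl-\Hbh\|=\smin-\|\Hbl-\Hbh\|.
\]
The hypothesis $\|\Hbl-\Hbh\|\leq \smin/2$ then gives $\sigma_{\min}(\Hbh)\geq \smin-\smin/2=\smin/2$, completing the proof. There is no real obstacle here: the lemma is a routine stability statement, and the only point meriting care is the bookkeeping that $\sigma_{\min}$ refers to the $n$th singular value (justified by $\mathrm{rank}(\Hbl)=\mathrm{rank}(\Hbh)=n$) together with the observation $\smin\leq\|\Hbl\|$ needed to close the operator-norm inequality.
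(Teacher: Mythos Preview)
Your proof is correct. The paper does not actually supply a proof of this lemma, labeling it ``a standard result on singular value deviation''; your argument via Weyl's inequality (together with the observation $\smin\leq\|\Hbl\|$ to close the operator-norm bound) is exactly the standard route one would take.
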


Using these, we will prove the robustness of \HK. The robustness will be up to a unitary transformation similar to Lemma \ref{sin theta sim}.

\subsection{Proof of Theorem \ref{kh stable}}
\begin{proof} Consider the SVD of $\Hbl$ given by $\Ub\bSi\V$ and SVD of $\Hbh$ given by $\hat{\Ub}\hat{\bSi}\hat{\V}$ where $\bSi,\hat{\bSi}\in\R^{n\times n}$ (recall that $\text{rank}(\Hbl)=n$ since we assumed system is observable and controllable). Define the observability/controllability matrices ($\Ob=\Ub\bSi^{1/2},\Qb=\bSi^{1/2}\V$) associated to $\Hb$ and ($\hat{\Ob}=\hat{\Ub}\hat{\bSi}^{1/2},\hat{\Qb}=\hat{\bSi}^{1/2}\hat{\Vb}$) associated to $\hat{\Hb}$. Lemma \ref{sin theta sim} automatically gives control over these as it states the existence of a unitary matrix $\Tb$ such that
\[
\tf{\Ob-\hat{\Ob}\Tb}^2+\tf{\Qb-\Tb^*\hat{\Qb}}^2\leq 5n\|\Lb-\hat{\Lb}\|.
\]
Since $\Cbb$ is a submatrix of $\Ob$ and $\Bb$ is a submatrix of $\Qb$, we immediately have the same upper bound on $(\Cbb,\Ch)$ and $(\Bb,\Bh)$ pairs.

The remaining task is to show that $\Ah$ and $\Ab$ are close. Let $\X=\hat{\Ob}\Tb$, $\Y=\Tb^*\hat{\Qb}$. Now, note that
\begin{align}
\tf{\Ab-\Tb^*\Ah\Tb}=\tf{\Ob^\dagger \Hb^+\Qb^\dagger-\Tb^*\hat{\Ob}^\dagger \hat{\Hb}^+\hat{\Qb}^\dagger\Tb}=\tf{\Ob^\dagger \Hb^+\Qb^\dagger-\X^{\dagger} \hat{\Hb}^+\Y^{\dagger}}.
\end{align}
Consequently, we can decompose the right hand side as
\begin{align}
\tf{\Ob^\dagger \Hb^+\Qb^\dagger-\X^{\dagger} \hat{\Hb}^+\Y^{\dagger}}\leq &\tf{(\Ob^\dagger-\X^\dagger) \Hb^+\Qb^\dagger}+\tf{\X^\dagger (\Hb^+-\hat{\Hb}^+)\Qb^\dagger}\label{rhs hk}\\
&+\tf{\X^\dagger \hat{\Hb}^+(\Qb^\dagger-\Y^\dagger)}.\nn
\end{align}
We treat the terms on the right hand side individually. First, pseudo-inverse satisfies the perturbation bound \cite{meng2010optimal,wedin1973perturbation}
\[
\tf{\Ob^\dagger-\X^\dagger}\leq \tf{\Ob-\X}\max\{\|\X^\dagger\|^2,\|{\Ob}^\dagger\|^2\}\leq \sqrt{5n\|\Lb-\hat{\Lb}\|}\max\{\|\X^\dagger\|^2,\|{\Ob}^\dagger\|^2\}.
\]
We need to bound the right hand side. Luckily, Lemma \ref{smin dev} trivially yields the control over the top singular values of pseudo-inverses namely 
\[
\max\{\|\X^\dagger\|^2,\|{\Ob}^\dagger\|^2\}=\max\{\frac{1}{\smin},\frac{1}{\sigma_{\min}(\Hbh)}\}\leq \frac{2}{\smin}.
\] Combining the last two bounds, we find
\[
\tf{\Ob^\dagger-\X^\dagger}\leq \frac{2\sqrt{5n\|\Lb-\hat{\Lb}\|}}{\smin}
\]
The identical bounds hold for $\Qb,\Y$. For the second term on the right hand side of \eqref{rhs hk}, we shall use the estimate
\[
\|\X^\dagger (\Hb^+-\hat{\Hb}^+)\Qb^\dagger\|_F\leq \sqrt{n}\|\X^\dagger (\Hb^+-\hat{\Hb}^+)\Qb^\dagger\|\leq \frac{2\sqrt{n}}{\smin}\|\Hb^+-\hat{\Hb}^+\|.
\]
Finally, we will use the standard triangle inequality to address the $\hat{\Hb}^+$ term: $\|\hat{\Hb}^+\|\leq \|{\Hb}^+\|+\|\Hb^+-\hat{\Hb}^+\|$. Combining all of these, we obtain the following bounds
\begin{align}
\tf{(\Ob^\dagger-\X^\dagger) \Hb^+\Qb^\dagger}&\leq \tf{\Ob^\dagger-\X^\dagger}\| \Hb^+\|\|\Qb^\dagger\|\\
&\leq \frac{\sqrt{20n\|\Lb-\hat{\Lb}\|}}{\smin}\sqrt{\frac{2}{\smin}}\|\Hb^+\|\\
&\leq \frac{7\sqrt{n\|\Lb-\hat{\Lb}\|}}{\smin^{3/2}}\|\Hb^+\|
\end{align}
\begin{align}
\tf{\X^\dagger \hat{\Hb}^+(\Qb^\dagger-\Y^\dagger)}&\leq \|\X^\dagger\|\|\hat{\Hb}^+\|\tf{\Qb^\dagger-\Y^\dagger}\\
&\leq\frac{7\sqrt{n\|\Lb-\hat{\Lb}\|}}{\smin^{3/2}}(\|{\Hb}^+\|+\|\Hb^+-\hat{\Hb}^+\|)
\end{align}
\[
\|\X^\dagger (\Hb^+-\hat{\Hb}^+)\Qb^\dagger\|_F\leq \frac{2\sqrt{n}\|\Hb^+-\hat{\Hb}^+\|}{\smin}.
\]
Combining these three individual bounds and substituting in \eqref{rhs hk}, we find the overall bound 
\[
\tf{\Ab-\Tb^*\Ah\Tb}\leq \frac{14\sqrt{n}}{\smin}(\sqrt{\frac{\|\Lb-\hat{\Lb}\|}{\smin}}(\|{\Hb}^+\|+\|\Hb^+-\hat{\Hb}^+\|)+\|\Hb^+-\hat{\Hb}^+\|).
\]
\end{proof}

\subsection{Proof of Corollary \ref{cor hk}}

Using Lemma \ref{kh stable}, the condition $\|\Hb-\hat{\Hb}\|\leq \smin/4$ implies the condition \eqref{perturb req}. Consequently, inequalities \eqref{pair bounds} and \eqref{a bound} of Theorem \ref{kh stable} holds. \eqref{pair bounds 2} follows by using $\|\Hbl-\Hbh\|\leq 2\|\Hb-\hat{\Hb}\|$. The result on $\Ab$ is slightly more intricate. First, since $\Hb^+$ is a submatrix of $\Hb$
\[
\|\Hb^+-\hat{\Hb}^+\|\leq \|\Hb-\hat{\Hb}\|,~\|\Hb^+\|\leq \|\Hb\|
\]
Combining this with \eqref{cba upp2}, the right hand side of \eqref{a bound} can be upper bounded by
\[
\text{RHS}=\frac{14\sqrt{2}\sqrt{n}}{\smin}(\sqrt{\frac{\|\Hb-\hat{\Hb}\|}{\smin}}(\|{\Hb}\|+\|\Hb-\hat{\Hb}\|)+\|\Hb-\hat{\Hb}\|).
\]
Next, $\Lb=\Hb^-$ hence $4\|\Hb-\hat{\Hb}\|\leq \smin\leq \|\Lb\|\leq \|\Hb\|$. Hence $\|{\Hb}\|+\|\Hb-\hat{\Hb}\|\leq (5/4)\|{\Hb}\|$. Finally, 
\[
\frac{\|\Hb\|}{\sqrt{\smin}}\geq \sqrt{\|\Hb\|}\geq 2\sqrt{\|\Hb-\hat{\Hb}\|}.
\]
Combining the last two observations, $\text{RHS}$ can be upper bounded as
\begin{align}
\text{RHS}&=\frac{14\sqrt{2}\sqrt{n\|\Hb-\hat{\Hb}\|}}{\smin}(\frac{\|{\Hb}\|+\|\Hb-\hat{\Hb}\|}{\sqrt{\smin}}+\sqrt{\|\Hb-\hat{\Hb}\|})\\
&\leq \frac{14\sqrt{2}\sqrt{n\|\Hb-\hat{\Hb}\|}}{\smin}(\frac{5}{4}\frac{\|{\Hb}\|}{\sqrt{\smin}}+\sqrt{\|\Hb-\hat{\Hb}\|})\\
&\leq \frac{14\sqrt{2}\sqrt{n\|\Hb-\hat{\Hb}\|}}{\smin}(\frac{5}{4}\frac{\|{\Hb}\|}{\sqrt{\smin}}+\frac{1}{2}\frac{\|{\Hb}\|}{\sqrt{\smin}})\\
&\leq \frac{(7/4)14\sqrt{2}\sqrt{n\|\Hb-\hat{\Hb}\|}}{\smin}\frac{\|{\Hb}\|}{\sqrt{\smin}},
\end{align}
which is the advertised bound after noticing $(7/4)14\sqrt{2}\leq 50$.

\section{Restricted Isometry of Partial Circulant Matrices}\label{circ sec}
To proceed, let us describe the goal of this section. First, we would like to show that $\Ub\in\R^{N\times \TK p}$ is well conditioned when $N\gtrsim\order{\TK p}$ to ensure least-squares is robust. Next, we would like to have an accurate upper bound on the spectral norm of $\Ub^*\W$ to control the impact of noise $\w_t$. In particular, we will show that
\[
\|\Ub^*\W\|\lesssim \sigma_u\sigma_w\sqrt{N\TK(p+n)}.
\]
Both of these goals will be achieved by embedding $\Ub$ and $\W$ into proper circulant matrices. The same argument will apply to both scenarios. The key technical tool in our analysis will be the results of Krahmer et al. \cite{krahmer2014suprema} on restricted isometries of random circulant matrices.

The following theorem is a restatement of Theorem $4.1$ of Krahmer et al \cite{krahmer2014suprema}. We added a minor modification to account for the regime {\em{restricted isometry constant}} is greater than $1$. This result is proven in Section \ref{sec circ thm}. This theorem shows that arbitrary submatrices of random circulant matrices are well conditioned. It will play a crucial role in establishing the joint relation of the data matrix $\Ub$ and noise matrix $\W$. Main result of \cite{krahmer2014suprema} characterizes a uniform bound on all submatrices; however we only need a single submatrix for our results. {Hence, some of the logarithmic factors below might actually be redundant for the bound we are seeking.} 
\begin{theorem} \label{circ thm}Let $\Cb\in\R^{d\times d}$ be a circulant matrix where the first row is distributed as $\Nn(0,\Iden_d)$. Given $s\geq 1$, set $m_0=c_0s\log^2(2s)\log^2(2d)$ for some absolute constant $c_0>0$. Pick an $m\times s$ submatrix $\Sb$ of $\Cb$. With probability at least $1-(2d)^{-\log(2d)\log^2(2s)}$, $\Sb$ satisfies
\[
\|\frac{1}{m}\Sb^*\Sb-\Iden\|\leq \max\{\sqrt{\frac{m_0}{m}},\frac{m_0}{m}\}.
\]
\end{theorem}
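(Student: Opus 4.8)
The plan is to reduce the spectral estimate to a restricted-isometry statement for partial random circulant matrices and then invoke the suprema-of-chaos machinery of Krahmer, Mendelson and Rauhut \cite{krahmer2014suprema}, retaining the quadratic complexity term that their Theorem $4.1$ suppresses under its standing hypothesis $\delta<1$.

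First I would set up the reduction. Selecting the $m$ prescribed rows of $\Cb$ yields an $m\times d$ partial circulant matrix $\M$, and $\Sb$ is the submatrix of $\M$ on the $s$ prescribed columns, whose index set we call $S$. For any unit vector $\x$ supported on $S$ we have $\M\x=\Sb\x_S$, so $\frac1m\|\M\x\|_2^2-\|\x\|_2^2=\x_S^*\bigl(\frac1m\Sb^*\Sb-\Iden\bigr)\x_S$; taking the supremum over such $\x$ gives
\[
\left\|\tfrac1m\Sb^*\Sb-\Iden\right\|\ \le\ \delta_s\ :=\ \sup_{\substack{\x\in\R^d,\ \|\x\|_2=1\\ |\mathrm{supp}(\x)|\le s}}\left|\tfrac1m\|\M\x\|_2^2-\|\x\|_2^2\right|,
\]
the $s$-restricted isometry constant of $\frac1{\sqrt m}\M$. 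Bounding the uniform quantity $\delta_s$ is more than we need for a single fixed support, which is the source of the possibly-redundant logarithmic factors flagged in the statement; it nonetheless suffices.

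Next I would treat $\delta_s$ as the supremum of a chaos process in the Gaussian generator of $\Cb$ and apply the associated concentration inequality (\cite{krahmer2014suprema}, Theorem $3.1$). For the matrix family $\mathcal{A}$ encoding the maps $\x\mapsto\frac1{\sqrt m}\M\x$ over $s$-sparse unit $\x$, the governing parameters scale, after normalization, as $\gamma_2(\mathcal{A})\asymp E$ and $d_F(\mathcal{A})\asymp1$, with $E:=\sqrt{s\log^2(2s)\log^2(2d)/m}$ and $d_{2\to2}(\mathcal{A})$ of lower order. The expectation bound then reads
\[
\E\,\delta_s\ \lesssim\ \gamma_2(\mathcal{A})\bigl(\gamma_2(\mathcal{A})+d_F(\mathcal{A})\bigr)+d_F(\mathcal{A})\,d_{2\to2}(\mathcal{A})\ \lesssim\ E^2+E,
\]
the $\gamma_2^2$ term producing $E^2$ and the $\gamma_2 d_F$ term producing $E$. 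The matching tail inequality shows that, at the deviation level whose exponent equals $\log^2(2d)\log^2(2s)$, the event $\delta_s\le C(E+E^2)$ holds with probability at least $1-(2d)^{-\log(2d)\log^2(2s)}$ for an absolute constant $C\ge1$; the logarithmic factors built into $m_0$ are precisely what absorb this deviation level.

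Finally I would close the constants. Writing $m_0=c_0 s\log^2(2s)\log^2(2d)$, we have $E=\sqrt{m_0/(c_0 m)}$, hence $\sqrt{m_0/m}=\sqrt{c_0}\,E$ and $m_0/m=c_0 E^2$. Choosing $c_0\ge C^2$ gives $CE\le\sqrt{m_0/m}$ and $CE^2\le m_0/m$, so $C(E+E^2)\le\max\{\sqrt{m_0/m},\,m_0/m\}$ in both regimes $E\le1$ and $E>1$, which is the claimed bound. The main obstacle is exactly this passage beyond $\delta=1$: Theorem $4.1$ of \cite{krahmer2014suprema} is stated only for $\delta\in(0,1)$, where $E^2\le E$ and the quadratic term is silently dominated. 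The real work is to re-run their expectation and tail estimates without invoking $E\le1$, confirming that the quadratic term survives as $m_0/m$ and that the success probability $1-(2d)^{-\log(2d)\log^2(2s)}$ is unaffected in the large-$\delta$ regime; the reduction and the constant bookkeeping are then routine.
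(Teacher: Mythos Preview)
Your proposal is correct and follows essentially the same route as the paper: both reduce to the chaos concentration inequality (Theorem~3.1 of \cite{krahmer2014suprema}), import the parameter estimates $d_F(\mathcal{A})=1$, $d_{2\to2}(\mathcal{A})\le\sqrt{s/m}$, $\gamma_2(\mathcal{A},\|\cdot\|)\lesssim\sqrt{s/m}\log(2s)\log(2d)$ from the proof of their Theorem~4.1, and then re-run the expectation and tail bounds retaining the quadratic $\gamma_2^2$ term so as to cover the regime $m<m_0$ that Theorem~4.1 excludes. The only presentational difference is that the paper makes the case split $m\ge m_0$ versus $m<m_0$ explicit and verifies $t^2/V^2$ and $t/U$ separately in each, whereas you package both regimes into the single inequality $C(E+E^2)\le\max\{\sqrt{m_0/m},\,m_0/m\}$; your final constant bookkeeping drops a harmless factor of $2$ (since $E+E^2\le 2\max\{E,E^2\}$), absorbed by enlarging $c_0$.
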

The next two sections address the minimum singular value of the $\Ub$ matrix and upper bounding the maximum singular value of the $\Ub^*\W$ matrix by utilizing Theorem \ref{circ thm}.
\subsection{Conditioning of the Data Matrix}
\begin{lemma} \label{lem cond}Let $\Ub\in\R^{N\times \TK p}$ be the input data matrix as described in Section \ref{lsp sec}. Suppose the sample size obeys $N\geq c\TK p \log^2(2\TK p)\log^2(2\bN p)$ for sufficiently large constant $c>0$. Then, with probability at least $1-(2\bN p)^{-\log^2(2\TK p)\log(2\bN p)}$,
\[
2N\sigma_u^2\succeq\Ub^*\Ub\succeq  N{\sigma_u^{2}/2}.
\]
\end{lemma}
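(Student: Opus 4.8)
The plan is to recognize the data matrix $\Ub$ as an (appropriately reindexed) submatrix of a single scalar Gaussian circulant matrix, and then invoke Theorem \ref{circ thm}. By homogeneity we may rescale and assume $\sigma_u=1$, so that every scalar entry of the inputs $\ub_t$ is an independent $\Nn(0,1)$ variable and the claim reduces to showing $\tfrac12\Iden\preceq \tfrac1N\Ub^*\Ub\preceq 2\Iden$. The conditioning bound and the probability will then both be inherited directly from Theorem \ref{circ thm} once the embedding is in place.

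First I would construct the embedding. Collect all $\bN p$ scalar input entries into a single stream $v_1,\dots,v_{\bN p}$ via the interleaving $v_{(\bN-s)p+j}=(\ub_s)_j$, so that decreasing time $s$ corresponds to increasing stream index and the $p$ components within each $\ub_s$ occupy $p$ consecutive positions. Reading off the $t$-th row of $\Ub$ (whose $((i-1)p+j)$-th entry is $(\ub_{t-i+1})_j$) one checks that it equals the length-$\TK p$ window $(v_{(\bN-t)p+1},\dots,v_{(\bN-t)p+\TK p})$ of the stream. As $t$ runs over $\TK,\dots,\bN$ these windows start at offsets $0,p,\dots,(N-1)p$, and the last window terminates at index $(N-1)p+\TK p=\bN p$, so no window wraps around. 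Consequently, letting $\Cb\in\R^{d\times d}$ with $d=\bN p$ be the circulant matrix whose first row is the i.i.d.\ $\Nn(0,1)$ stream, each such window is exactly a row of $\Cb$ restricted to its first $\TK p$ columns; selecting these $N$ rows and the first $\TK p$ columns produces an $N\times\TK p$ submatrix $\Sb$ of $\Cb$ with $\Sb^*\Sb=\Ub^*\Ub$ (row order is immaterial for the Gram matrix).

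With the embedding fixed I would apply Theorem \ref{circ thm} with $s=\TK p$, $m=N$, and $d=\bN p$, so that $m_0=c_0\TK p\log^2(2\TK p)\log^2(2\bN p)$. The hypothesis $N\geq c\TK p\log^2(2\TK p)\log^2(2\bN p)$ with $c$ large (relative to $c_0$) forces $m_0/N\leq 1/4$, hence $\max\{\sqrt{m_0/N},m_0/N\}=\sqrt{m_0/N}\leq 1/2$, and Theorem \ref{circ thm} yields
\[
\left\|\tfrac1N\Sb^*\Sb-\Iden\right\|\leq \tfrac12
\]
with probability at least $1-(2\bN p)^{-\log(2\bN p)\log^2(2\TK p)}$. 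This gives $\tfrac12\Iden\preceq\tfrac1N\Ub^*\Ub\preceq\tfrac32\Iden\preceq 2\Iden$; restoring the scale $\sigma_u$ multiplies through by $\sigma_u^2$ and produces the advertised two-sided bound $N\sigma_u^2/2\preceq\Ub^*\Ub\preceq 2N\sigma_u^2$, with exactly the stated failure probability.

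The main obstacle is the embedding step rather than the probabilistic estimate: one must verify that the block, time-reversed sliding-window structure of $\Ub$ with general input dimension $p>1$ coincides \emph{entry for entry} with a genuine submatrix of a scalar circulant matrix, including that the row overlaps of $\Ub$ match the constant-diagonal correlations of $\Cb$ and that the chosen windows never require cyclic wraparound (which is what pins down $d=\bN p$). Once this identification is secured, the conditioning and the probability are immediate consequences of Theorem \ref{circ thm}, and the only bookkeeping left is choosing $c$ large enough to guarantee $m_0/N\leq 1/4$.
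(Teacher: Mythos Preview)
Your proposal is correct and follows essentially the same route as the paper: embed $\Ub$ as an $N\times \TK p$ submatrix of a scalar $\bN p\times\bN p$ Gaussian circulant and invoke Theorem~\ref{circ thm} with $s=\TK p$, $m=N$, $d=\bN p$. The only cosmetic differences are that the paper keeps $\sigma_u$ explicit throughout and selects the \emph{rightmost} $\TK p$ columns together with rows $1,1+p,\dots,1+(N-1)p$ under the right-shift convention $r$, whereas you take the first $\TK p$ columns (implicitly under a left-shift convention); since both conventions produce the same set of rows, the resulting submatrices are the same up to a row permutation and the Gram-matrix conclusion is identical.
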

\begin{proof} The proof will be accomplished by embedding $\Ub$ inside a proper circulant matrix. Let $r(\vb):\R^d\rightarrow\R^d$ be the circulant shift operator which maps a vector $\vb\in\R^d$ to its single entry circular rotation to the right i.e. $r(\vb)=[\vb_d~\vb_1~\dots~\vb_{d-1}]\in\R^d$. Let $\Cb\in\R^{\bN p\times \bN p}$ be a circulant matrix where the first row (transposed) is given by
\[
\cb_1=[\ub_{\bN p}^*~\ub_{\bN p-1}^*~\dots~\ub_2^*~\ub_1^*]^*.
\]
The $i$th row of $\Cb$ is $\cb_i=r^{i-1}(\cb_1)$ for $1\leq i\leq \bN p$. Observe that $\Cb$ is a circulant matrix by construction. For instance all of its diagonal entries are equal to $\ub_{\bN p,1}$. {Additionally, note that second row of $\Cb$ starts with the last entry of $\ub_1$ hence entries of $\ub_i$ do not necessarily lie next to each other.} Focusing on the rightmost $\TK p$ columns, let $\Rbo{\TK p}$ be the operator that returns rightmost $\TK p$ entries of a vector. Our first observation is that 
\[
\Rbo{\TK p}(\cb_{1})=\ubb_{\TK}=[\ub_{\TK }^*~\ub_{\TK -1}^*~\dots~\ub_2^*~\ub_1^*]^*.
\]
Secondly, observe that for each $0\leq i\leq N-1$
\[
\Rbo{\TK p}(\cb_{1+ip})=\Rbo{\TK p}(r^{ip}(\cb_{1}))=[\ub_{\TK +i}^*~\ub_{\TK -1+i}^*~\dots~\ub_{2+i}^*~\ub_{1+i}^*]^*=\ubb_{\TK+i}.
\]
This implies that $\ubb_{\TK+i}$ is embedded inside right-most $\TK p$ columns and $1+ip$'th row of $\Cb$. Similarly, the input data matrix $\Ub\in\R^{N\times \TK p}$ is a submatrix of $\Cb$ with column indices $(\bN-\TK) p+1$ to $\bN p$ and row indices $1+ ip$ for $0\leq i\leq N-1$. Applying Theorem \ref{circ thm}, setting $N_0=c\TK p \log^2(2\TK p)\log^2(2\bN p)$, and adjusting for variance $\sigma_u^2$, with probability at least $1-(2\bN p)^{-\log^2(2\TK p)\log(2\bN p)}$, we have
\[
 {2\sigma_u^2}\Iden\succeq N^{-1}\Ub^*\Ub\succeq \frac{\sigma_u^2}{2}\Iden\implies 2N\sigma_u^2\succeq\Ub^*\Ub\succeq N {\sigma_u^{2}/2},
\]
whenever $N\geq N_0$.
\end{proof}
\subsection{Upper Bounding the Contribution of the Process Noise}
\begin{lemma}\label{cross prod} Recall $\Ub,\W$ from \eqref{label and data} and \eqref{matrices} respectively. Let $q=p+n$ and $N_0=c\TK q\log^2(2\TK q)\log^2(2\bN q)$ where $c>0$ is an absolute constant. With probability at least $1-(2\bN q)^{-\log^2(2 \TK q)\log(2\bN q)}$, 
\[
\|\Ub^*\W\|\leq \sigma_w\sigma_u\max\{\sqrt{{N_0N}},{N_0}\}.
\]
\end{lemma}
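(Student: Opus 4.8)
The plan is to embed $\Ub$ and $\W$ jointly into a single random circulant matrix, exactly as in the proof of Lemma \ref{lem cond}, so that the cross term $\Ub^*\W$ appears as an off-diagonal block of a Gram matrix that Theorem \ref{circ thm} controls. The key structural fact is that $\ub_t$ and $\w_t$ are independent Gaussians, so the stacked and normalized vectors $\gt_t=[\ub_t^*/\sigma_u~~\w_t^*/\sigma_w]^*\in\R^{q}$ have i.i.d. $\Nn(0,1)$ entries, with $q=p+n$. I would build a circulant matrix $\Cb\in\R^{\bN q\times \bN q}$ whose first row is the reversed concatenation $[\gt_{\bN}^*~\dots~\gt_1^*]$, so that a circular shift by one full block of $q$ coordinates advances the time index by one. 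This is the only place where the independence of input and process noise is used, and it is what legitimizes applying Theorem \ref{circ thm} to the joint matrix.

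Running the embedding argument of Lemma \ref{lem cond} verbatim with block size $q$ in place of $p$, the rightmost $\TK q$ columns together with the rows indexed by $1+iq$ for $0\le i\le N-1$ cut out a submatrix $\Sb\in\R^{N\times \TK q}$ whose $(i+1)$st row is the stacked chunk $[\ubb_{\TK+i}^*/\sigma_u~~\wb_{\TK+i}^*/\sigma_w]^*$. Hence, after a fixed column permutation $\Pb$ that collects all $\TK p$ input coordinates before the $\TK n$ noise coordinates, one has $\Sb\Pb=[\sigma_u^{-1}\Ub~~\sigma_w^{-1}\W]$. I would then invoke Theorem \ref{circ thm} with $d=\bN q$ and $s=\TK q$, for which $m_0=c_0\TK q\log^2(2\TK q)\log^2(2\bN q)=N_0$ upon identifying the constant $c=c_0$, obtaining with probability at least $1-(2\bN q)^{-\log^2(2\TK q)\log(2\bN q)}$ that
\[
\Big\|\tfrac{1}{N}\Sb^*\Sb-\Iden\Big\|\le\delta:=\max\Big\{\sqrt{\tfrac{N_0}{N}},\tfrac{N_0}{N}\Big\}.
\]

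The crucial observation is that the normalized cross term is precisely an off-diagonal block of this centered Gram matrix. Since permutation preserves spectral norms,
\[
\Pb^*\Big(\tfrac{1}{N}\Sb^*\Sb\Big)\Pb=\frac{1}{N}\begin{bmatrix}\sigma_u^{-2}\Ub^*\Ub & \sigma_u^{-1}\sigma_w^{-1}\Ub^*\W\\ \sigma_u^{-1}\sigma_w^{-1}\W^*\Ub & \sigma_w^{-2}\W^*\W\end{bmatrix},
\]
and the corresponding off-diagonal block of $\Iden$ vanishes because the input and noise coordinate sets are disjoint. Using that the spectral norm of any block is dominated by that of the whole matrix, I get $\frac{1}{N\sigma_u\sigma_w}\|\Ub^*\W\|\le\big\|\tfrac{1}{N}\Sb^*\Sb-\Iden\big\|\le\delta$, and rearranging yields
\[
\|\Ub^*\W\|\le N\sigma_u\sigma_w\,\delta=\sigma_w\sigma_u\max\{\sqrt{N_0 N},\,N_0\},
\]
which is the claimed bound.

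The heavy lifting is done entirely by the restricted isometry of the joint circulant submatrix $\Sb$, which simultaneously controls the conditioning of $\Ub$, of $\W$, and of their cross correlation; this is exactly why a single tool covers both Lemma \ref{lem cond} and this lemma. Consequently, I expect no genuine obstacle in the probabilistic estimate itself. The only points demanding care are bookkeeping: verifying that the stacked first row is honestly i.i.d. standard normal (so the theorem applies), checking that the row indices $1+iq$ align the chunks of $\Ub$ and $\W$ at the same times, and confirming that the permutation $\Pb$ places $\Ub^*\W$ as the off-diagonal block while mapping the identity to itself. Each of these is elementary once the block structure is set up correctly.
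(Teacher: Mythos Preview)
Your proposal is correct and follows essentially the same route as the paper: stack the normalized vectors $[\sigma_u^{-1}\ub_t^*~\sigma_w^{-1}\w_t^*]^*\in\R^q$, embed the resulting data matrix as an $N\times \TK q$ submatrix of an $\bN q\times\bN q$ Gaussian circulant, apply Theorem \ref{circ thm}, and read off $(\sigma_u\sigma_w)^{-1}\Ub^*\W$ as an off-diagonal block of the centered Gram matrix. The only cosmetic difference is that you introduce an explicit permutation $\Pb$ to collect the $\Ub$ and $\W$ columns, whereas the paper specifies the column index sets directly; note that, strictly speaking, the $(i{+}1)$st row of $\Sb$ before permuting is the time-interleaved vector $[\m_{\TK+i}^*,\dots,\m_{i+1}^*]$ rather than the stacked chunk you wrote, but your subsequent application of $\Pb$ fixes this and the argument is unaffected.
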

\begin{proof} The proof is identical to that of Lemma \ref{lem cond}. Set $q=p+n$. First, we define $\m_t=[\sigma_u^{-1}\ub_t^*~\sigma_w^{-1}\w_t^*]^*\in\R^{q}$ and $\bar{\m}_i=[\m_{i}^*,~\m_{i-1}^*,~\dots~\m_{i-\TK+1}^*]^*\in\R^{\TK q}$. We also define the matrix $\M=[\bar{\m}_{\TK}~\dots~\bar{\m}_{\TK+N-1}]^*\in\R^{N\times \TK q}$. Observe that by construction, $\sigma_u^{-1}\Ub,\sigma_w^{-1}\W$ are submatrices of $\M$. In particular, $(\sigma_u\sigma_w)^{-1}\Ub^*\W$ is an off-diagonal submatrix of $\M^*\M$ of size $\TK p\times \TK n$. This is due to the facts that i) $\sigma_u^{-1}\Ub$ is a submatrix of $\M$ characterized by the column indices 
\[
\{(i-1)q+j\bgl 1\leq i\leq \TK,~1\leq j\leq p\},
\]
 and ii) $\sigma_w^{-1}\W$ lies at the complementary columns. Observe that the spectral norm of $(\sigma_u\sigma_w)^{-1}\Ub^*\W$ can be upper bounded as
\begin{align}
(\sigma_u\sigma_w)^{-1}\|\Ub^*\W\|\leq \|\M^*\M-N\Iden\|.\label{uw bound}
\end{align}
\begin{proof} Since $(\sigma_u\sigma_w)^{-1}\Ub^*\W$ is an off-diagonal submatrix of $\M^*\M$, it is also a submatrix of $\M^*\M-\Iden$. Spectral norm of a submatrix is upper bounded by the norm of the original matrix hence the claim follows.
\end{proof}
In a similar fashion to Lemma \ref{lem cond}, we complete $\M$ to be a full circulant matrix as follows. Let $r(\vb):\R^d\rightarrow\R^d$ be the circulant shift operator as previously. Let $\Cb\in\R^{\bN q\times \bN q}$ be a circulant matrix with first row given by
\[
\cb_1=[\m_{\bN q}^*~\m_{\bN q-1}^*~\dots~\m_2^*~\m_1^*]^*.
\]
The $i$th row of $\Cb$ is $\cb_i=r^{i-1}(\cb_1)$ for $1\leq i\leq \bN q$. Let $\Rbo{\TK q}$ be the operator that returns rightmost $\TK q$ entries of a vector. Our first observation is that 
\[
\Rbo{\TK q}(\cb_{1})=\bar{\m}_\TK=[\m_{\TK }^*~\m_{\TK -1}^*~\dots~\m_2^*~\m_1^*]^*.
\]
Secondly, observe that for each $0\leq i\leq N-1$
\[
\Rbo{\TK q}(\cb_{1+iq})=\Rbo{\TK q}(r^{iq}(\cb_{1}))=[\m_{\TK +i}^*~\m_{\TK -1+i}^*~\dots~\m_{2+i}^*~\m_{1+i}^*]^*=\bar{\m}_{\TK+i}.
\]
This implies that $\bar{\m}_i$'s are embedded inside the rows of $\Rbo{\TK q}(\Cb)$ in an equally spaced manner with spacing $q$ for $\TK\leq i\leq \TK+N-1=\bN$. Hence, $\M$ is a $N\times \TK q$ submatrix of $\Cb$ where the column indices are the last $\TK q$ columns and the row indices are $1,1+q,\dots,1+(N-1)q$.

With this observation, we are ready to apply Theorem \ref{circ thm}. Theorem \ref{circ thm} states that for 
\[
N_0=c\TK q\log^2(2\TK q)\log^2(2\bN q),
\] with probability at least $1-(2\bN q)^{-\log^2(2\TK q)\log(2\bN q)}$, 
\[
\|\frac{1}{N}\M^*\M-\Iden\|\leq \max\{\sqrt{\frac{N_0}{N}},{\frac{N_0}{N}}\},
\]
which in turn implies $\|\Ub^*\W\|\leq \sigma_w\sigma_u\max\{\sqrt{{N_0N}},{N_0}\}$ via inequality \eqref{uw bound}.
\end{proof}
\section{Bounding the Error due to the Unknown State}\label{sec martin}
The goal of this section is bounding the estimation error due to the $\eb_t=\Cb\A^{\TK-1}\x_{t-\TK+1}$ term. As described in Section \ref{lsp sec} and \eqref{matrices}, we form the matrices $\Eb=[\eb_{\TK}~\dots~\eb_{\bN}]^*$ and $\Ub=[\ubb_{\TK}~\dots~\ubb_{\bN}]^*$. Our interest in this section is bounding $\|\Ub^*\Eb\|$. This term captures the impact of approximating the system with a finite impulse response of length $\TK$. We will show that
\[
\|\Ub^*\Eb\|\lesssim\sigma_u \sqrt{(\TK p+m)N\TK \|\Ginf\|\|\Cb\A^{\TK-1}\|^2}.
\]

The main challenge in analyzing $\Ub^*\Eb$ is the fact that $\{\e_t\}_{t=\TK}^{\bN}$ terms and $\{\ubb_t\}_{t=\TK}^{\bN}$ terms are dependent. In fact $\e_t$ contains a $\ub_{\tau}$ component inside for any $\tau\leq t-\TK$. The following theorem is our main result on bounding this term which carefully addresses these dependencies.
\begin{theorem} \label{ub eb prod} Suppose we are given $\Ub,\Eb$, as described in Section \ref{lsp sec} and \eqref{matrices}. Define $\gamma=\frac{\|\Ginf\|\Phi(\A)^2\|\Cb\A^{\TK-1}\|^2}{1-\rho(\A)^{2\TK}}$ and suppose $N\geq \TK$. Then, with probability at least $1-\TK(\exp(-100\TK p)+2\exp(-100m))$, 
\[
\|\Ub^*\Eb\|\leq c\sigma_u\sqrt{\TK \max\{N,\frac{m\TK}{1-\rho(\A)^{\TK}}\} \max\{\TK p,m\}  \gamma}.
\]
\end{theorem}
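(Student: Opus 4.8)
The plan is to bound $\|\Ub^*\Eb\|=\big\|\sum_{t=\TK}^{\bN}\ubb_t\eb_t^*\big\|$ by first isolating the source of dependence and then invoking a matrix martingale inequality. The key observation is that for a \emph{fixed} $t$ the two factors are independent: $\eb_t=\Cb\A^{\TK-1}\x_{t-\TK+1}$ is a function only of $\{\ub_s,\w_s:s\le t-\TK\}$, whereas $\ubb_t=[\ub_t^*~\cdots~\ub_{t-\TK+1}^*]^*$ depends only on the fresh inputs $\ub_{t-\TK+1},\dots,\ub_t$. The difficulty is purely \emph{across} summands, since $\ubb_t$ and $\eb_{t+1}$ share the variable $\ub_{t-\TK+1}$. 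To remove this, I would partition the index set $\{\TK,\dots,\bN\}$ into the $\TK$ residue classes modulo $\TK$. Along a single class $t_0<t_1<\cdots$ (spacing $\TK$) the windows $[t_j-\TK+1,t_j]$ are pairwise disjoint, and moreover $\eb_{t_j}=\Cb\A^{\TK-1}\x_{t_{j-1}+1}$ is measurable with respect to $\mathcal F_{t_{j-1}}:=\sigma(\ub_s,\w_s:s\le t_{j-1})$ while $\ubb_{t_j}$ is independent of $\mathcal F_{t_{j-1}}$ and mean zero. Hence $\E[\ubb_{t_j}\eb_{t_j}^*\mid\mathcal F_{t_{j-1}}]=0$, so within each residue class the summands form a bona fide matrix martingale difference sequence, and $\Ub^*\Eb=\sum_{r}\Sb^{(r)}$ is a sum of $\TK$ such martingales.

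The core of the argument is then a matrix martingale (Freedman/Bernstein type) tail bound applied to each $\Sb^{(r)}$, followed by a union bound over the $\TK$ classes --- which is exactly where the prefactor $\TK(\cdots)$ in the probability comes from. Since $\Ub^*\Eb\in\R^{\TK p\times m}$, I would work with the Hermitian dilation; the two predictable quadratic variations evaluate, using conditional Gaussianity of $\ubb_{t_j}$, to
\[
\sum_j\E[\ubb_{t_j}\eb_{t_j}^*\eb_{t_j}\ubb_{t_j}^*\mid\mathcal F_{t_{j-1}}]=\sigma_u^2\Big(\sum_j\|\eb_{t_j}\|^2\Big)\Iden_{\TK p},\qquad
\sum_j\E[\eb_{t_j}\ubb_{t_j}^*\ubb_{t_j}\eb_{t_j}^*\mid\mathcal F_{t_{j-1}}]=\sigma_u^2\TK p\sum_j\eb_{t_j}\eb_{t_j}^*.
\]
The appearance of the ambient dimensions $\TK p$ and $m$ in these two variances is what produces the $\max\{\TK p,m\}$ factor (and the separate $\exp(-100\TK p)$ and $\exp(-100m)$ failure terms). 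Because the increments $\ubb_{t_j}\eb_{t_j}^*$ are sub-exponential rather than bounded, I would first truncate on the high-probability event $\|\ubb_{t_j}\|\lesssim\sigma_u\sqrt{\TK p}$ (whose complement costs $\exp(-c\TK p)$) so that a bounded-increment martingale inequality applies.

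It remains to bound the quadratic variations, i.e.\ $\sum_j\eb_{t_j}\eb_{t_j}^*=\Cb\A^{\TK-1}\big(\sum_j\x_{t_{j-1}+1}\x_{t_{j-1}+1}^*\big)(\A^*)^{\TK-1}\Cb^*$. Here $\bSi(\x_s)\preceq\Ginf$ supplies the $\|\Ginf\|\,\|\Cb\A^{\TK-1}\|^2$ part, but the states along a residue class are temporally correlated, forming a subsampled linear process with transition $\A^{\TK}$ (so $\|\A^{k\TK}\|\le\Phi(\A)\rho(\A)^{k\TK}$). Summing the resulting geometric series of autocovariances is precisely what yields the stability factor $\Phi(\A)^2/(1-\rho(\A)^{2\TK})$, and hence the combination $\gamma=\|\Ginf\|\Phi(\A)^2\|\Cb\A^{\TK-1}\|^2/(1-\rho(\A)^{2\TK})$; the $\max\{N,m\TK/(1-\rho(\A)^{\TK})\}$ comes from balancing the variance term against the increment (bias) term in the martingale inequality.

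I expect the main obstacle to be this last step: obtaining a clean high-probability bound on $\|\sum_j\x_{t_{j-1}+1}\x_{t_{j-1}+1}^*\|$ for the correlated state process with the correct stability-dependent constant, while simultaneously keeping the sub-exponential martingale increments under control. The decoupling into residue classes is conceptually the crucial idea, but it is the interaction between the temporal correlations of the state and the martingale variance proxy --- rather than any single concentration inequality --- that dictates the final form of $\gamma$ and the $\rho(\A)$-dependence.
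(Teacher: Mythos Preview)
Your residue-class decoupling and filtration are exactly what the paper does, and you correctly isolate the martingale structure within each class. Where you diverge is the concentration step: the paper does \emph{not} invoke a matrix martingale inequality. Instead, after fixing a class it takes $1/4$-nets $\Cc_1$ of $\Sc^{\TK p-1}$ and $\Cc_2$ of the unit sphere in the row space of $\Cb$, and for each pair $(\ab,\bb)$ writes $\ab^*\Ub_t^*\Eb_t\bb=\sum_iW_iZ_i$ with $W_i=\ab^*\ubb_{t+i\TK}$, $Z_i=\bb^*\eb_{t+i\TK}$, applying a \emph{scalar} sub-Gaussian martingale bound (Simchowitz et al.). The predictable variance there is just $\sum_iZ_i^2=\|\Eb_t\bb\|^2$, a single scalar quadratic form in the correlated Gaussians $\{\x_{t+1+i\TK}\}$, which the paper controls by a direct Hanson--Wright argument: expanding $\x_{t+i\TK}$ along the subsampled transition $\A^{\TK}$, the block coefficient matrix has $(j,k)$-block bounded by $\gamma\rho(\A)^{|j-k|\TK}$, and summing the geometric series yields exactly the $(1-\rho(\A)^{\TK})$ dependence. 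The $\exp(-100\TK p)$ and $\exp(-100m)$ terms come from the union bound over the two covers, and $\max\{\TK p,m\}$ enters through the deviation level needed to beat the cover cardinalities.

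Your matrix-Freedman route is a valid alternative and your quadratic-variation computations are correct, but two things change. First, matrix Bernstein/Freedman carries an extra $\log(\TK p+m)$ factor that the covering argument avoids. Second --- and you flagged this yourself --- you need a high-probability spectral bound on the full matrix $\sum_j\eb_{t_j}\eb_{t_j}^*$, whereas the paper only ever needs the scalar $\bb^*(\sum_j\eb_{t_j}\eb_{t_j}^*)\bb$ for finitely many fixed $\bb$, which is strictly easier and keeps the Hanson--Wright application clean. The paper's route is thus both slightly sharper and technically lighter; yours trades a logarithm for avoiding the $\varepsilon$-net bookkeeping.
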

\begin{proof}
We first decompose $\Ub^*\Eb=\sum_{t=\TK}^{\bN} \ubb_t\eb_t^*$ into sum of $\TK$ smaller products. Given $0\leq t<\TK$, create sequences $S_t=\{t+\TK,t+2\TK,\dots,t+N_t\TK\}$ where $N_t$ is the largest integer satisfying $t+N_t\TK\leq \bN$. Each sequence has length $N_t$ which is at least $\lfloor N/\TK\rfloor $ and at most $\lfloor N/\TK\rfloor +1$. With this, we form the matrices 
\begin{align}
\Ub_t=[ \ubb_{t+\TK},~ \ubb_{t+2\TK},~\dots,~\ubb_{t+N_t\TK}]^*,~\Eb_t=[ \eb_{t+\TK},~ \eb_{t+2\TK},~\dots,~\eb_{t+N_t\TK}]^*.\label{decoupled}
\end{align}
Then, $\Ub^*\Eb$ can be decomposed as
\begin{align}
\Ub^*\Eb=\sum_{t=0}^{\TK-1}\Ub_t^*\Eb_t\implies \|\Ub^*\Eb\|\leq \sum_{t=0}^{\TK-1}\|\Ub_t^*\Eb_t\|.\label{sum spect}
\end{align}
Corollary \ref{simplified} provides a probabilistic spectral norm bound on each term of this decomposition on the right hand side. In particular, applying Corollary \ref{simplified}, substituting $\upsilon$ definition, and union bounding over $\TK$ terms, for all $t$, we obtain
\[
\|\Ub_t^*\Eb_t\|\leq c\sigma_u\sqrt{\max\{N,\frac{m\TK}{1-\rho(\A)^{\TK}}\} \max\{p,m/\TK\}  \gamma},
\]
with probability at least $1-\TK(\exp(-\TK q)+2\exp(-100m))$. This gives the advertised bound on $\Ub^*\Eb$ via \eqref{sum spect}.
\end{proof}

\subsection{Upper Bounding the Components of the Unknown State Decomposition}
Our goal in this section is providing an upper bound on the spectral norm of $\Ub_t^*\Eb_t$ which is described in \eqref{decoupled}. The following lemma provides a bound that decays with $1/\sqrt{N_t}$. The main tools in our analysis are the probabilistic upper bound on the $\Eb_t$ matrix developed in Section \ref{error mat sec} and martingale concentration bound that was developed and utilized by the recent work of Simchowitz et al \cite{lwom}. Below we state our bound in the more practical setup $m\leq n$ to avoid redundant notation. In general, our bound scales with $\min\{m,n\}$. %
\begin{theorem} \label{decomposed bound}Define $\gamma=\frac{\|\Ginf\|\Phi(\A)^2\|\Cb\A^{\TK-1}\|^2}{1-\rho(\A)^{2\TK}}$. $\Ub_t^*\Eb_t$ obeys 
\[
\|\Ub_t^*\Eb_t\|\leq  c_0\sigma_u\sqrt{\tau \max\{\TK p,m\}  N_t \gamma},
\] with probability at least $1-\exp(-100\max\{\TK p,m\})-2\exp(-c\tau N_t(1-\rho(\A)^\TK)+3m)$ for $\tau\geq 1$.
\end{theorem}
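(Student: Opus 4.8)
The plan is to use the $\TK$-spacing built into the subsequences $S_t$ to expose a matrix-martingale structure in $\Ub_t^*\Eb_t$ and then apply a self-normalized concentration bound of the kind developed by Simchowitz et al., fed by an a priori high-probability control on $\|\Eb_t\|$. Writing the product as $\Ub_t^*\Eb_t=\sum_{i=1}^{N_t}\ubb_{t+i\TK}\eb_{t+i\TK}^*$ and letting $\mathcal{F}_i=\sigma(\{\ub_s,\w_s,\z_s:s\leq t+i\TK\})$, the crucial observation is that $\eb_{t+i\TK}=\Cb\A^{\TK-1}\x_{t+(i-1)\TK+1}$ is a function of the data only up to time $t+(i-1)\TK$, hence is $\mathcal{F}_{i-1}$-measurable, while $\ubb_{t+i\TK}$ is built from the fresh inputs $\ub_{t+(i-1)\TK+1},\dots,\ub_{t+i\TK}$ and is therefore independent of $\mathcal{F}_{i-1}$, with $\ubb_{t+i\TK}\sim\Nn(0,\sigma_u^2\Iden_{\TK p})$. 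Thus each summand is, conditionally on $\mathcal{F}_{i-1}$, a Gaussian vector $\ubb_{t+i\TK}$ multiplied by the frozen covariate $\eb_{t+i\TK}$, so $\{\Ub_t^*\Eb_t\}$ is a martingale; this is exactly the device that handles the dependence between covariates and noise flagged in the text.

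With the martingale in hand I would assemble two ingredients. The first is the probabilistic upper bound on $\Eb_t$ from Section \ref{error mat sec}: because the states $\x_{t+(i-1)\TK+1}$ are $\TK$-separated, they obey an order-one recursion with contraction $\A^\TK$, so the covariance of the stacked error vector has operator norm controlled by the mixing sum $\sum_{k\geq 0}\|\A^{k\TK}\|^2\lesssim\Phi(\A)^2/(1-\rho(\A)^{2\TK})$; together with $\bSi(\eb_{t+i\TK})\preceq\|\Cb\A^{\TK-1}\|^2\Ginf$ this yields $\|\Eb_t\|^2\lesssim N_t\gamma$ on an event of probability at least $1-\exp(-100\max\{\TK p,m\})$. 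The second ingredient is the self-normalized martingale tail: for fixed unit vectors $\ab\in\R^{\TK p}$ and $\bb\in\R^{m}$ the scalar $\sum_i(\ab^*\ubb_{t+i\TK})(\bb^*\eb_{t+i\TK})$ is a martingale-difference sum whose $i$-th conditional variance is $\sigma_u^2(\bb^*\eb_{t+i\TK})^2$ and whose predictable quadratic variation is $\sigma_u^2\|\Eb_t\bb\|^2\leq\sigma_u^2\|\Eb_t\|^2$, so conditional Gaussianity gives a sub-Gaussian deviation at scale $\sigma_u\|\Eb_t\|$.

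I would then combine the two by a covering argument over the unit spheres of $\R^{\TK p}$ and $\R^{m}$, which accounts for the $\max\{\TK p,m\}$ inside the square root, while the number of effectively independent blocks among the $\TK$-separated and hence $\rho(\A)^\TK$-decorrelated errors, of order $N_t(1-\rho(\A)^\TK)$, sets the rate in the martingale tail and the net over the $\R^m$ sphere contributes the $3m$ term; the tail threshold scales as $\sqrt\tau$. Substituting $\|\Eb_t\|^2\lesssim N_t\gamma$ into the resulting estimate $\|\Ub_t^*\Eb_t\|\lesssim\sigma_u\|\Eb_t\|\sqrt{\tau\max\{\TK p,m\}}$ gives $\|\Ub_t^*\Eb_t\|\leq c_0\sigma_u\sqrt{\tau\max\{\TK p,m\}N_t\gamma}$ with the stated failure probability.

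The main obstacle is faithfully handling the dependence rather than the bookkeeping: one cannot condition on $\Eb_t$ and treat $\Ub_t$ as an independent Gaussian matrix, since each $\eb_{t+i\TK}$ is itself a correlated function of the very inputs making up the earlier rows of $\Ub_t$, which is precisely why the genuine martingale and self-normalized route is needed instead of a standard Gaussian-matrix spectral bound. The delicate quantitative point is that the mixing rate must be tracked in two places at once — as $1/(1-\rho(\A)^{2\TK})$ inside $\gamma$ through the covariance of $\Eb_t$, and as the effective sample size $N_t(1-\rho(\A)^\TK)$ governing the martingale concentration — since it is this second appearance that, through $N_t\approx N/\TK$, ultimately produces the $\max\{N,\,m\TK/(1-\rho(\A)^\TK)\}$ threshold inherited by Theorem \ref{ub eb prod}.
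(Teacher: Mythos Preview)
Your architecture matches the paper's proof: the same filtration, the same martingale observation that $\eb_{t+i\TK}\in\Fc_{i-1}$ while $\ubb_{t+i\TK}$ is fresh Gaussian given $\Fc_{i-1}$, the same reduction to scalars $W_i=\ab^*\ubb_{t+i\TK}$, $Z_i=\bb^*\eb_{t+i\TK}$ via $\varepsilon$-nets on the two spheres, and the same use of the Simchowitz et al.\ self-normalized bound (the paper's Lemma~\ref{sig sub}). So the plan is correct.

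However, you have swapped the two failure-probability contributions, and this misattribution matters if you try to fill in the details. In the paper the term $\exp(-100\max\{\TK p,m\})$ does \emph{not} come from an operator-norm bound on $\Eb_t$; it comes from the martingale step. One fixes $\bb$, applies Lemma~\ref{sig sub} with the threshold $t\propto\sigma_u\sqrt{\tau\max\{\TK p,m\}N_t\gamma}$ on the event $\{\sum_i Z_i^2\leq\tau N_t\gamma\}$, and then union-bounds over the $1/4$-net $\Cc_1$ of $\Sc^{\TK p-1}$; this is what produces the $\exp(-c\,\max\{\TK p,m\})$ tail. Conversely, the term $2\exp(-c\tau N_t(1-\rho(\A)^\TK)+3m)$ comes from the bound on $\|\Eb_t\bb\|^2$ for a \emph{fixed} $\bb$ (the paper's Lemma~\ref{thm e bound}, a Hanson--Wright argument for the quadratic form $s_{\ab}=\bar\g^*\M\bar\g$), followed by a union bound over the $1/4$-net $\Cc_2$ of the unit sphere in the row space of $\Cb$; the $3m$ is exactly the log-cardinality of that net. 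The mixing rate $1-\rho(\A)^\TK$ enters only here, through the spectral and Frobenius norm bounds on the block matrix $\M$, not through the martingale tail itself. In particular, your claimed event ``$\|\Eb_t\|^2\lesssim N_t\gamma$ with probability $1-\exp(-100\max\{\TK p,m\})$'' is not what Section~\ref{error mat sec} provides and is not what is needed; the paper never controls the operator norm of $\Eb_t$, only $\|\Eb_t\bb\|^2$ for each fixed $\bb$, with a probability governed by $\tau N_t(1-\rho(\A)^\TK)$.
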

\begin{proof} Given matrices $\Ub_t,\Eb_t$, define the filtrations $\Fc_i=\sigma(\{\ub_j,\w_j\}_{j=1}^{t+i\TK})$ for $1\leq i\leq N_t$. According to this definition $\ubb_{t+i\TK}$ is independent of $\Fc_{i-1}$ and $\ubb_{t+i\TK}\in \Fc_i$. The reason is earliest input vector contained by $\ubb_{t+i\TK}$ has index $t+1+(i-1)\TK$ which is larger than $t+(i-1)\TK$. Additionally, observe that $\eb_{t+i\TK}\in \Fc_{i-1}$ as $\eb_{t+i\TK}$ is a deterministic function of $\x_{t+1+(i-1)\TK}$ which is a function of $\{\ub_j,\w_j\}_{j=1}^{t+(i-1)\TK}$. 

We would like to use the fact that, for each $i$, $\eb_{t+i\TK}$ and $\ubb_{t+i\TK}$ are independent. Let $\X_t=[\x_{t+1}~\dots~\x_{t+1+(N_t-1)\TK}]^*$ so that $\Eb_t=\X_t(\Cb\A^{\TK-1})^*$. In light of Lemma \ref{cover bound}, we will use a covering bound on the matrix
\[
\Ub_t^*\Eb_t=\Ub_t^*\X_t(\Cb\A^{\TK-1})^*.
\]
Let $\Cc_1$ be a $1/4$ $\ell_2$-cover of the unit sphere $\Sc^{\TK p-1}$ and $\Cc_2$ be a $1/4$ $\ell_2$-cover of the unit sphere in the row space of $\Cb$. There exists such covers satisfying $\log |\Cc_1|\leq 3\TK p$ and $\log |\Cc_2|\leq 3\min\{m,n\}\leq 3m$. Pick vectors $\ab,\bb$ from $\Cc_1,\Cc_2$ respectively. Let $W_i=\ab^*\ubb_{t+i\TK}$ and $Z_i=\bb^*\e_{t+i\TK}$. Observe that
\[
\sum_{i=1}^{N_t} W_iZ_i=\ab^*(\Ub_t^*\Eb_t)\bb.
\]
We next show that $\sum_{i=1}^{N_t}W_iZ_i$ is small with high probability. Applying Lemma \ref{thm e bound}, we find that, for $\tau\geq 2$, with probability at least $1-2\exp(-c\tau N_t(1-\rho(\A)^\TK))$,
\begin{align}
\tn{\Eb_t\bb}^2=\sum_{i=1}^{N_t}Z_i^2\leq \tau N_t\gamma,\label{prob b bound}
\end{align}
where our definition of $\gamma$ accounts for the $\|\Ginf\|$ factor. We will use this bound to ensure Lemma \ref{sig sub} is applicable with high probability. Since $\ubb_{t+i\TK}$ has $\Nn(0,\sigma_u^2)$ entries, applying Lemma \ref{sig sub}, we obtain
\[
\Pro(\{\sum_{i=1}^{N_t}W_iZ_i\geq t\}\bigcap \{\sum_{i=1}^{N_t}Z_i^2\leq \tau N_t\gamma\})\leq \exp(-\frac{t^2}{c\tau \sigma_u^2N_t\gamma}).
\] 
for some absolute constant $c>0$. Picking $t=11\sigma_u\sqrt{c\tau\max\{\TK p,m\} N_t \gamma}$, we find
\[
\Pro(\{\sum_{i=1}^{N_t}W_iZ_i\geq t\}\bigcap \{\sum_{i=1}^{N_t}Z_i^2\leq cN_t\gamma\})\leq \exp(-120\max\{\TK p,m\}).
\]
Defining variables $W_i(\ab)$ for each $\ab\in\Cc_1$, and events $E(\ab)=\{\sum_{i=1}^{N_t}W_i(\ab)Z_i\geq t\}$, applying a union bound, we obtain,
\[
\Pro(\{\bigcup_{\ab\in\Cc_1}E(\ab)\}\bigcap \{\sum_{i=1}^{N_t}Z_i^2\leq cN_t\gamma\})\leq \exp(-110\max\{\TK p,m\}).
\]
Combining this bound with \eqref{prob b bound}, we find that, for a \em{fixed} $\bb$ and \em{for all} $\ab$, with probability at least $1-\exp(-110\max\{\TK p,m\})-2\exp(-c\tau N_t(1-\rho(\A)^\TK))$, we have 
\begin{align}
\ab^*\Ub_t^*\Eb_t\bb=\sum_{i=1}^{N_t}W_iZ_i\leq c_0\sigma_u\sqrt{\tau\max\{\TK p,m\} N_t \gamma},\label{single bound}
\end{align}
for some $c_0>0$. Applying a union bound over all $\bb\in\Cc_2$, with probability at least $1-\exp(-100\max\{\TK p,m\})-2\exp(-c\tau N_t(1-\rho(\A)^\TK)+3m)$, we find that \eqref{single bound} holds for all $\ab,\bb$. Overall, we found that for all $\ab,\bb$ pairs in the $1/4$ covers, $\ab^*(\Ub_t^*\Eb_t)\bb\leq \kappa=c_0\sigma_u\sqrt{\tau\max\{\TK p,m\} N_t\gamma}$. Applying Lemma \ref{cover bound}, this implies $\|\Ub_t^*\Eb_t\|\leq 2\kappa$.
\end{proof}
The following corollary simplifies the result when $N\geq \TK$ which is the interesting regime for our purposes.
\begin{corollary}\label{simplified} Assume $N\geq T$. With probability at least $1-\exp(-100\TK p)-2\exp(-100m)$, we have $\|\Ub_t^*\Eb_t\|\leq c'\sigma_u\sqrt{\max\{N,\frac{m\TK}{1-\rho(\A)^{\TK}}\} \max\{p,m/\TK\}  \gamma}$ for some constant $c'>0$.
\end{corollary}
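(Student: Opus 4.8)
The plan is to obtain Corollary \ref{simplified} directly from Theorem \ref{decomposed bound} by optimizing over the free parameter $\tau$ and then simplifying the resulting expression under the hypothesis $N\geq\TK$. In Theorem \ref{decomposed bound}, increasing $\tau$ inflates the bound $c_0\sigma_u\sqrt{\tau\max\{\TK p,m\}N_t\gamma}$ but sharpens the failure probability through the term $2\exp(-c\tau N_t(1-\rho(\A)^\TK)+3m)$. First I would pick $\tau$ just large enough to drive this probability below $2\exp(-100m)$: requiring $c\tau N_t(1-\rho(\A)^\TK)\geq 103m$ (so that $-c\tau N_t(1-\rho(\A)^\TK)+3m\leq-100m$) leads to the choice
\[
\tau=\max\Big\{2,\tfrac{103m}{cN_t(1-\rho(\A)^\TK)}\Big\},
\]
which also honors the constraint $\tau\geq 2$ needed to invoke the theorem. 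The remaining failure term $\exp(-100\max\{\TK p,m\})$ is automatically at most $\exp(-100\TK p)$ since $\max\{\TK p,m\}\geq\TK p$, so the two contributions combine to the advertised probability $1-\exp(-100\TK p)-2\exp(-100m)$.

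Next I would simplify the magnitude of the bound. Rewriting $\max\{\TK p,m\}=\TK\max\{p,m/\TK\}$ recasts the bound as $c_0\sigma_u\sqrt{(\TK\tau N_t)\max\{p,m/\TK\}\gamma}$, isolating the product $\TK\tau N_t$. Observe that the chosen $\tau$ makes the $N_t$ inside the second branch cancel, giving $\tau N_t=\max\{2N_t,\tfrac{103m}{c(1-\rho(\A)^\TK)}\}$. Using that each subsequence length satisfies $N_t\leq\lfloor N/\TK\rfloor+1\leq 2N/\TK$ (here $N\geq\TK$ is exactly what forces $N/\TK\geq 1$ and hence $\lfloor N/\TK\rfloor+1\leq 2N/\TK$), I obtain
\[
\TK\tau N_t=\TK\max\Big\{2N_t,\tfrac{103m}{c(1-\rho(\A)^\TK)}\Big\}\leq\max\Big\{4N,\tfrac{103m\TK}{c(1-\rho(\A)^\TK)}\Big\}.
\]
The right-hand side is bounded by $C''\max\{N,\tfrac{m\TK}{1-\rho(\A)^\TK}\}$ with $C''=\max\{4,103/c\}$. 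Substituting back and absorbing $\sqrt{C''}$ into the constant yields $\|\Ub_t^*\Eb_t\|\leq c'\sigma_u\sqrt{\max\{N,\tfrac{m\TK}{1-\rho(\A)^\TK}\}\max\{p,m/\TK\}\gamma}$ with $c'=c_0\sqrt{C''}$, as claimed.

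Conceptually, the two regimes of the outer maximum in the statement correspond exactly to the two branches of $\tau$: when $\tfrac{m\TK}{1-\rho(\A)^\TK}$ is dominated by $N$ the binding constraint is $\tau=2$ and the effective sample count is of order $N$, whereas when the state term dominates, the failure-probability constraint forces the larger value of $\tau$ and the effective count becomes of order $\tfrac{m\TK}{1-\rho(\A)^\TK}$. I do not expect a genuine obstacle here; the corollary is bookkeeping layered on top of Theorem \ref{decomposed bound}. The only step requiring care is the floor manipulation tying $N_t$ to $N/\TK$, which is precisely where $N\geq\TK$ is used, together with the verification that both branches of $\tau$ collapse into the single $\max$ expression appearing in the bound.
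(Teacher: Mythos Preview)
Your argument is correct and mirrors the paper's proof: both pick $\tau$ as the maximum of a constant and a multiple of $m/(N_t(1-\rho(\A)^\TK))$ (the paper writes this with $N$ in place of $N_t$ and uses $N_t\geq N/(2\TK)$, whereas you use $N_t$ directly and then bound $N_t\leq 2N/\TK$), verify the probability exponent exceeds $100m$, and collapse $\tau N_t\max\{\TK p,m\}$ into the stated $\max$ expression. The only cosmetic discrepancy is that Theorem \ref{decomposed bound} as stated requires $\tau\geq 1$ rather than $\tau\geq 2$, but your choice satisfies both so nothing is affected.
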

\begin{proof}$N\geq \TK$ implies $N_t\geq \lfloor N/\TK\rfloor\geq N/(2\TK)$. In Theorem \ref{decomposed bound}, pick $\tau= \max\{1,c_1\frac{m\TK}{N(1-\rho(\A)^{\TK})}\}$ for $c_1= 206/c$. The choice of $\tau$ guarantees the probability exponent $c\tau N_t(1-\rho(\A)^{\TK})-3m\geq 100m$. To conclude, observe that $c_0\sigma_u\sqrt{\tau \max\{\TK p,m\}  N_t \gamma}\leq c'\sigma_u\sqrt{\max\{1,\frac{m\TK}{N(1-\rho(\A)^{\TK})}\} \max\{p,m/\TK\}N\gamma}$ for an absolute constant $c'>0$.
\end{proof}
For completeness, we restate the subgaussian Martingale concentration lemma of Simchowitz et al. which is Lemma $4.2$ of \cite{lwom}.
\begin{lemma} \label{sig sub}Let $\{\Fc_t\}_{t\geq 1}$ be a filtration, $\{Z_t,W_t\}_{t\geq 1}$ be real valued processes adapted to $\Fc_t,\Fc_{t+1}$ respectively (i.e.~$Z_t\in\Fc_t,W_t\in\Fc_{t+1}$). Suppose $W_t\bgl \Fc_t$ is a $\sigma^2$-sub-gaussian random variable with mean zero. Then
\[
\Pro(\{\sum_{t=1}^\TK Z_tW_t\geq \alpha\}\bigcap \{\sum_{t=1}^TZ_t^2\leq \beta\})\leq \exp(-\frac{\alpha^2}{2\sigma^2\beta})
\]
\end{lemma}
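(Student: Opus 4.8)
The plan is to use the standard exponential-supermartingale (Chernoff) method, adapted to the self-normalized setting. For a fixed $\lambda>0$ I would introduce the process
\[
M_t=\exp\Big(\lambda\sum_{s=1}^t Z_sW_s-\frac{\lambda^2\sigma^2}{2}\sum_{s=1}^t Z_s^2\Big),\qquad M_0=1,
\]
and argue that $\{M_t\}$ is a supermartingale with $\E[M_t]\le 1$. The key structural observation is that both $M_{t-1}$ and $Z_t$ are $\Fc_t$-measurable: indeed $Z_t\in\Fc_t$, $Z_{t-1}\in\Fc_{t-1}\subseteq\Fc_t$, and $W_{t-1}\in\Fc_t$ because $W_s\in\Fc_{s+1}$. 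Thus, conditioned on $\Fc_t$, the only remaining source of randomness in the increment is $W_t$ itself.

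Given this, the supermartingale step follows from the conditional subgaussianity of $W_t$. Writing $M_t=M_{t-1}\exp(\lambda Z_tW_t-\tfrac{\lambda^2\sigma^2}{2}Z_t^2)$ and conditioning on $\Fc_t$, I would pull out the $\Fc_t$-measurable factors and apply $\E[\exp(\lambda Z_tW_t)\mid\Fc_t]\le\exp(\tfrac{\lambda^2\sigma^2 Z_t^2}{2})$, which holds because $W_t\mid\Fc_t$ is mean-zero $\sigma^2$-subgaussian and $Z_t$ is known. This cancels the compensator term exactly and yields $\E[M_t\mid\Fc_t]\le M_{t-1}$; the tower property then gives $\E[M_T]\le M_0=1$.

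The final step converts this into the advertised tail bound, and here the intersection with $\{\sum_{t=1}^T Z_t^2\le\beta\}$ is essential. On the event $\{\sum_t Z_tW_t\ge\alpha\}\cap\{\sum_t Z_t^2\le\beta\}$ one has, for any fixed $\lambda>0$, the deterministic lower bound $\lambda\sum_t Z_tW_t-\tfrac{\lambda^2\sigma^2}{2}\sum_t Z_t^2\ge\lambda\alpha-\tfrac{\lambda^2\sigma^2}{2}\beta$, i.e.\ $M_T\ge\exp(\lambda\alpha-\tfrac{\lambda^2\sigma^2}{2}\beta)$ on that event. Markov's inequality together with $\E[M_T]\le1$ then bounds the probability by $\exp(-\lambda\alpha+\tfrac{\lambda^2\sigma^2}{2}\beta)$, and minimizing over $\lambda$ (the minimizer is $\lambda=\alpha/(\sigma^2\beta)$) produces the claimed $\exp(-\alpha^2/(2\sigma^2\beta))$.

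The main obstacle --- and the reason the statement restricts attention to the intersection with $\{\sum_t Z_t^2\le\beta\}$ --- is the random normalization $\sum_t Z_t^2$ sitting in the compensator. A pointwise optimization of $\lambda$ would require the data-dependent choice $\lambda=\alpha/(\sigma^2\sum_t Z_t^2)$, which is not a legal fixed exponent for the supermartingale-plus-Markov argument. The deterministic upper bound $\sum_t Z_t^2\le\beta$ available on the relevant event is precisely what allows a single deterministic $\lambda$ to control both the martingale term and the compensator simultaneously, thereby sidestepping the method-of-mixtures machinery that an unconditional self-normalized bound would otherwise demand.
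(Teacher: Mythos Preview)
Your argument is correct and is the standard exponential-supermartingale proof of this self-normalized concentration bound. Note, however, that the paper does not supply its own proof of this lemma: it is merely restated from Simchowitz et al.\ (Lemma~4.2 of \cite{lwom}) ``for completeness,'' so there is no in-paper proof to compare against. Your write-up is essentially the argument one finds in that reference.
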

This lemma implies that $\sum_{t=1}^T Z_tW_t$ can essentially be treated as an inner product between a deterministic sequence $Z_t$ and an i.i.d.~subgaussian sequence $W_t$.

The following lemma is a slight modification of the standard covering arguments.
\begin{lemma} [Covering bound]\label{cover bound} Given matrices $\A\in\R^{n_1\times N},\B\in\R^{N\times n_2}$, let $\M=\A\B$. Let $\Cc_1$ be a $1/4$-cover of the unit sphere $\Sc^{n_1-1}$ and $\Cc_2$ be a $1/4$-cover of the unit sphere in the row space of $\B$ (which is at most $\min\{N,n_2\}$ dimensional). Suppose for all $\ab\in\Cc_1,\bb\in\Cc_2$, we have that $\ab^*\M\bb\leq \gamma$. Then, $\|\M\|\leq 2\gamma$.
\end{lemma}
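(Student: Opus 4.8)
The plan is to run the standard $1/4$-net discretization of the bilinear form $(\ub,\vb)\mapsto \ub^*\M\vb$, but exploiting the crucial structural fact that the right-hand vector only needs to range over the row space of $\B$. The starting observation is that since $\M=\A\B$, the product $\M\vb=\A\B\vb$ depends on $\vb$ only through $\B\vb$; hence $\M$ annihilates the null space of $\B$, and $\M\vb=\M\Pi\vb$ where $\Pi$ is the orthogonal projector onto $\text{row}(\B)=(\text{null}\,\B)^\perp$ inside $\R^{n_2}$. Writing any unit $\vb$ as $\Pi\vb+(\Iden-\Pi)\vb$ and using $\|\Pi\vb\|\le\|\vb\|=1$, I would conclude
\[
\|\M\|=\sup_{\|\ub\|=1,\ \|\vb\|=1}\ub^*\M\vb=\sup_{\|\ub\|=1,\ \vb\in\text{row}(\B),\ \|\vb\|=1}\ub^*\M\vb.
\]
This is exactly why the hypothesis only requires $\Cc_2$ to cover the unit sphere of the (at most $\min\{N,n_2\}$-dimensional, since $\rank(\B)\le\min\{N,n_2\}$) row space rather than all of $\Sc^{n_2-1}$.

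First I would fix maximizers $\bar\ub\in\Sc^{n_1-1}$ and a unit $\bar\vb\in\text{row}(\B)$ attaining $\|\M\|=\bar\ub^*\M\bar\vb$ (the supremum is attained by compactness and is nonnegative), and then pick net points $\ab\in\Cc_1$, $\bb\in\Cc_2$ with $\|\bar\ub-\ab\|\le 1/4$ and $\|\bar\vb-\bb\|\le 1/4$; the second approximation is available precisely because $\bar\vb$ lives in the row space that $\Cc_2$ covers. Next I would apply the algebraic identity
\[
\bar\ub^*\M\bar\vb=\ab^*\M\bb+(\bar\ub-\ab)^*\M\bar\vb+\ab^*\M(\bar\vb-\bb),
\]
bound the leading term by $\gamma$ via the hypothesis, and bound each cross term using the operator-norm inequality $|\x^*\M\y|\le\|\x\|\,\|\M\|\,\|\y\|$, which gives $(\bar\ub-\ab)^*\M\bar\vb\le\tfrac14\|\M\|$ and $\ab^*\M(\bar\vb-\bb)\le\tfrac14\|\M\|$. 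Summing these yields $\|\M\|\le\gamma+\tfrac12\|\M\|$, and rearranging gives the claim $\|\M\|\le 2\gamma$.

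The only genuinely subtle point, and the reason the lemma is phrased with a row-space cover, is the reduction in the first paragraph: I must check that restricting $\vb$ to $\text{row}(\B)$ loses none of the operator norm, which follows cleanly from $\M\vb=\M\Pi\vb$ together with $\|\Pi\vb\|\le\|\vb\|$. Everything after that is routine net bookkeeping. I would also flag that the hypothesis is one-sided ($\ab^*\M\bb\le\gamma$, not $|\ab^*\M\bb|$), which is harmless: the supremum defining $\|\M\|$ is attained at a nonnegative value, so only a one-sided bound on the leading term is needed, while the two cross terms are controlled by the two-sided operator-norm inequality irrespective of sign.
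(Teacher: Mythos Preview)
Your proof is correct and essentially identical to the paper's: reduce the right-hand maximizer to the row space of $\B$, then run the standard $1/4$-net telescoping to obtain $\|\M\|\le\gamma+\tfrac12\|\M\|$. The only cosmetic differences are that the paper phrases the row-space reduction via maximality (if $\y\notin\text{row}(\B)$ then its normalized projection onto $\text{row}(\B)$ does strictly better) rather than your projector identity $\M\vb=\M\Pi\vb$, and it uses the equivalent decomposition $\x^*\M\y=\ab^*\M\bb+(\x-\ab)^*\M\bb+\x^*\M(\y-\bb)$.
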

\begin{proof} Pick unit length vectors $\x,\y$ achieving $\x^*\M\y=\|\M\|$. Let $S$ be the row space of $\B$. Observe that $\y\in S$. Otherwise, its normalized projection on $S$, $\Pc_S(\y)/\tn{\Pc_S(\y)}$ achieves a strictly better inner product with $\x^*\M$. Pick $1/4$ close neighbors $\ab,\bb$ of $\x,\y$ from the covers $\Cc_1,\Cc_2$. Then,
\[
\x^*\M\y= \ab^*\M\bb+(\x-\ab)^*\M\bb+\x^*\M(\y-\bb)\leq\gamma+ \x^*\M\y/2,
\]
where we used the maximality of $\x,\y$. This yields $\x^*\M\y\leq 2\gamma$.
\end{proof}

\subsection{Bounding Inner Products with the Unknown State}\label{error mat sec}
In this section, we develop probabilistic upper bounds for the random variable $\Eb_t\ab$ where $\ab$ is a fixed vector and $\Eb_t$ is as defined in \eqref{decoupled}.
\begin{lemma} \label{thm e bound}Let $\Eb_t\in\R^{N_t\times m}$ be the matrix composed of the rows $\eb_{t+i\TK}=\Cb\A^{\TK-1}\x_{t+1+i\TK}$. Define 
\[
\gamma=\frac{\Phi(\A)^2\|\Cb\A^{\TK-1}\|^2}{1-\rho(\A)^{2\TK}}.
\]
Given a unit length vector $\ab\in\R^m$, for all $\tau\geq 2$ and for some absolute constant $c>0$, we have that
\[
\Pro(\tn{\Eb_t\ab}^2\geq \tau N_t\|\Ginf\|\gamma)\leq  2\exp(-c\tau N_t(1-\rho(\A)^{\TK})).
\]
\end{lemma}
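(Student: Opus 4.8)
The plan is to exploit that $\vb:=\Eb_t\ab\in\R^{N_t}$ is a zero-mean \emph{jointly Gaussian} vector: each coordinate $\vb_i=\ab^*\Cb\A^{\TK-1}\x_{s_i}$ is a linear functional of the Gaussian inputs/noise $\{\ub_\tau,\w_\tau\}$ (recall $\x_1=0$, so the state carries no extra randomness), and the relevant state indices $s_i$ are spaced exactly $\TK$ apart. Hence $\tn{\vb}^2$ is a Gaussian quadratic form, and everything reduces to controlling the covariance $\bSi(\vb)$ and feeding it into a standard generalized-$\chi^2$ (Hanson--Wright / Laurent--Massart) tail inequality of the form $\Pro(\tn{\vb}^2\ge \tr{\bSi(\vb)}+2\fronorm{\bSi(\vb)}\sqrt{x}+2\opnorm{\bSi(\vb)}x)\le e^{-x}$, valid for all $x>0$.

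For the diagonal, $\E[\vb_i^2]=\ab^*\Cb\A^{\TK-1}\bSi(\x_{s_i})(\Cb\A^{\TK-1})^*\ab\le \opnorm{\Cb\A^{\TK-1}}^2\opnorm{\Ginf}$ using $\bSi(\x_{s_i})\preceq\Ginf$, so $\tr{\bSi(\vb)}\le N_t\opnorm{\Ginf}\opnorm{\Cb\A^{\TK-1}}^2\le N_t\opnorm{\Ginf}\gamma$ (since $\gamma\ge\opnorm{\Cb\A^{\TK-1}}^2$). For the off-diagonal entries I would invoke the Markov structure of the chain: for $j>i$ one has $\x_{s_j}=\A^{(j-i)\TK}\x_{s_i}+\boldsymbol{\xi}$ with $\boldsymbol{\xi}$ independent of $\x_{s_i}$ (it only involves inputs/noise indexed after $s_i$), whence $\E[\vb_i\vb_j]=\ab^*\Cb\A^{\TK-1}\bSi(\x_{s_i})(\A^*)^{(j-i)\TK}(\Cb\A^{\TK-1})^*\ab$; bounding this by $\opnorm{\Cb\A^{\TK-1}}^2\opnorm{\Ginf}\Phi(\A)\rho(\A)^{(j-i)\TK}$ via $\opnorm{\A^{k\TK}}\le\Phi(\A)\rho(\A)^{k\TK}$ exposes a geometric off-diagonal decay.

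The operator norm then follows from a Schur / Gershgorin (maximum absolute row-sum) estimate for the symmetric matrix $\bSi(\vb)$: $\opnorm{\bSi(\vb)}\le\max_i\sum_j|\E[\vb_i\vb_j]|\le \opnorm{\Cb\A^{\TK-1}}^2\opnorm{\Ginf}\Phi(\A)\sum_{k\in\Z}\rho(\A)^{|k|\TK}\le 2\Phi(\A)\opnorm{\Cb\A^{\TK-1}}^2\opnorm{\Ginf}/(1-\rho(\A)^{\TK})$. Comparing with $\opnorm{\Ginf}\gamma=\opnorm{\Ginf}\Phi(\A)^2\opnorm{\Cb\A^{\TK-1}}^2/(1-\rho(\A)^{2\TK})$ and using $1-\rho(\A)^{2\TK}=(1-\rho(\A)^\TK)(1+\rho(\A)^\TK)$ together with $\Phi(\A)\ge1$ gives $\opnorm{\bSi(\vb)}\le 4\opnorm{\Ginf}\gamma$, while the ratio $\opnorm{\Ginf}\gamma/\opnorm{\bSi(\vb)}\gtrsim\Phi(\A)/(1-\rho(\A)^\TK)$ retains the crucial factor $(1-\rho(\A)^\TK)^{-1}$. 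This comparison is exactly where the shape of $\gamma$ (its $\Phi(\A)^2$ and $(1-\rho(\A)^{2\TK})^{-1}$) is engineered to dominate both the per-term variance and the correlation-induced inflation of the spectrum.

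Finally I would plug these into the tail inequality. Since $\tau\ge2$ and $\tr{\bSi(\vb)}\le N_t\opnorm{\Ginf}\gamma$, the target deviation obeys $\tau N_t\opnorm{\Ginf}\gamma-\tr{\bSi(\vb)}\ge\tfrac{\tau}{2}N_t\opnorm{\Ginf}\gamma$; absorbing the Frobenius term through $\fronorm{\bSi(\vb)}\le\sqrt{\opnorm{\bSi(\vb)}\tr{\bSi(\vb)}}$ and AM--GM, it suffices to take $x\asymp\tau N_t\opnorm{\Ginf}\gamma/\opnorm{\bSi(\vb)}$, which by the ratio above is $\gtrsim\tau N_t(1-\rho(\A)^\TK)$. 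This choice keeps the threshold at most $\tau N_t\opnorm{\Ginf}\gamma$ and yields the exponent $-c\tau N_t(1-\rho(\A)^\TK)$, with the leading factor $2$ covering the one-sided use of the concentration bound. The \emph{main obstacle} is precisely the operator-norm control of $\bSi(\vb)$ in the third step: the coordinates $\vb_i$ are genuinely dependent, and a crude bound would lose the $(1-\rho(\A)^\TK)^{-1}$ geometry; the Markov decomposition of the state plus the $\Phi(\A)$-controlled decay of $\A^{k\TK}$ is what keeps $\opnorm{\bSi(\vb)}$ on the scale $\opnorm{\Ginf}\gamma$.
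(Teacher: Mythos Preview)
Your proposal is correct and reaches the stated bound, but the route is dual to the paper's. The paper first decomposes the state into independent Gaussian increments $\g_j$ (so that $\x_{t+1+i\TK}=\sum_{j\le i}\A^{(i-j)\TK}\g_j$), writes $s_\ab=\bar{\g}^*\M\bar{\g}$ as a quadratic form in the high-dimensional vector $\bar{\g}\in\R^{N_t n}$, bounds the $n\times n$ blocks $\|\M_{j,k}\|\le\gamma\rho(\A)^{|j-k|\TK}$, and then invokes a Hanson--Wright inequality for vectors satisfying a concentration property (Adamczak). You instead stay in the $N_t$-dimensional output space: you bound the entries of the covariance $\bSi(\vb)$ directly via the Markov factorization $\E[\x_{s_i}\x_{s_j}^*]=\bSi(\x_{s_i})(\A^*)^{(j-i)\TK}$, then feed $\tr{\bSi(\vb)}$, $\fronorm{\bSi(\vb)}$, $\opnorm{\bSi(\vb)}$ into a Laurent--Massart Gaussian $\chi^2$ tail. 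Because the nonzero eigenvalues of the two quadratic-form matrices coincide, the spectral estimates are equivalent; your version is arguably more elementary since it avoids the sub-Gaussian Hanson--Wright and works in dimension $N_t$ rather than $N_t n$. One arithmetic slip: the ratio $\|\Ginf\|\gamma/\opnorm{\bSi(\vb)}$ is only $\gtrsim\Phi(\A)/(1+\rho(\A)^\TK)\ge\Phi(\A)/2$, not $\gtrsim\Phi(\A)/(1-\rho(\A)^\TK)$ as you claim; but since $\Phi(\A)\ge1\ge1-\rho(\A)^\TK$, your choice $x\asymp\tau N_t\|\Ginf\|\gamma/\opnorm{\bSi(\vb)}$ still delivers the exponent $-c\tau N_t(1-\rho(\A)^\TK)$ (indeed a slightly better one), so the argument goes through unchanged.
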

\begin{proof}
Let $\db_t=\x_{t}-\A^\TK\x_{t-\TK}$. By construction (i.e.~due to the state-space recursion \eqref{main rel}), $\db_t$ is independent of $\x_{t-\TK}$. We can write $\x_{t+i\TK}$ as
\begin{align}
\x_{t+i\TK}=\sum_{j=1}^i\A^{(i-j)\TK}\db_{t+j\TK}+\A^{i\TK}\x_t.\label{expand x}
\end{align}
We wish to understand the properties of the random variable $\tn{\Eb_t\ab}^2$ which is same as,
\[
s_{\ab}=\sum_{i=1}^{N_t}(\ab^*\e_{t+i\TK})^2=\sum_{i=0}^{N_t-1}((\ab^*\Cb\A^{\TK-1})\x_{t+1+i\TK})^2.
\]
Denote $\bar{\ab}=(\Cb\A^{\TK-1})^*\ab$, $\ab_j=(\A^{j\TK})^*\bar{\ab}$, $\g_0=\x_{t+1}$, and $\g_i=\db_{t+1+i\TK}$ for $N_t-1\geq i\geq 1$, all of which are $n$ dimensional vectors. Using these change of variables and applying the expansion \eqref{expand x}, the $i$th component of the sum $s_{\ab}$ is given by
\begin{align}
s_{\ab,i}=(\bar{\ab}^*\x_{t+1+i\TK})^2=(\bar{\ab}^*\sum_{j=0}^i\A^{(i-j)\TK}\g_j)^2=(\sum_{j=0}^i\ab_{i-j}^*\g_j)^2=\sum_{0\leq j,k\leq i}\ab_{i-j}^*\g_j\ab_{i-k}^*\g_k.\label{sabi eq}
\end{align}
Observe that, summing over all $s_{\ab,i}$ for $0\leq i\leq N_t-1$, the multiplicative coefficient of the $\g_j\g_k^*$ pair is given by the matrix,
\begin{align}
\M_{j,k}=\begin{cases}\underset{N_t>i\geq \max\{j,k\}}{\sum} \ab_{i-j}\ab_{i-k}^*~~~&\text{if}~~~j\neq k,\\\underset{N_t>i\geq j}{\sum} \ab_{i-j}\ab_{i-j}^*=\sum_{i=0}^{N_t-1-j} \ab_{i}\ab_{i}^*~~~&\text{if}~~~j= k\end{cases}\label{mjk eq}
\end{align}
Next, we show that these $\M_{j,k}$ submatrices have bounded spectral, Frobenius and nuclear norms (nuclear norm is the sum of the singular values of a matrix). This follows by writing each submatrix as a sum of rank $1$ matrices and using the fact that spectral radius of $\A$ is strictly bounded from above by $1$.
\begin{align*}
\|\M_{j,k}\|\leq \|\M_{j,k}\|_F\leq  \|\M_{j,k}\|_\star&\leq \sum_{i\geq \max\{j,k\}} \|\ab_{i-j}\ab_{i-k}^*\|_\star\\
&= \sum_{i\geq \max\{j,k\}} \|\ab_{i-j}\ab_{i-k}^*\|\\
&\leq \sum_{i\geq \max\{j,k\}} \|(\A^{(i-j)\TK})^*\bar{\ab}\bar{\ab}^*\A^{(i-k)\TK}\|\\
&\leq \sum_{i\geq \max\{j,k\}} \tn{\bar{\ab}}^2\|\A^{(i-j)\TK}\|\|\A^{(i-k)\TK}\|\\
&\leq \sum_{i=0}^{\infty} \tn{\bar{\ab}}^2\rho(\A)^{|j-k|\TK}\rho(\A)^{2i\TK}\Phi(\A)^2\\
&\leq  \frac{\Phi(\A)^2\tn{\bar{\ab}}^2}{1-\rho(\A)^{2\TK}}\rho(\A)^{|j-k|\TK}.
\end{align*}
To further simplify, observe that $\tn{\bar{\ab}}^2\leq \|\Cb\A^{\TK-1}\|^2$ as $\tn{\ab}=1$. Setting
\[
\gamma=\frac{\Phi(\A)^2\|\Cb\A^{\TK-1}\|^2}{1-\rho(\A)^{2\TK}},
\]
we have 
\begin{align}
\|\M_{j,k}\|,\|\M_{j,k}\|_F,\|\M_{j,k}\|_\star\leq \gamma \rho(\A)^{|j-k|\TK}.\label{mjk bound}
\end{align} Based on the submatrices $\M_{j,k}$, create the $N_tn\times N_tn$ matrix $\M$. Now we define the vector $\bar{\g}=[\g_0^*~\g_1^*~\dots~\g_{N_t-1}^*]^*$. Observe that, following \eqref{sabi eq} and \eqref{mjk eq}, by construction,
\begin{align}
s_{\ab}=\bar{\g}^*\M\bar{\g}=\sum_{0\leq j,k<N_t}\g_j^*\M_{j,k}\g_k.\label{quad form}
\end{align}
This puts $s_{\ab}$ in a form for which Hanson-Wright Theorem is applicable \cite{adamczak2015note,rudelson2013hanson}. To apply Hanson-Wright Theorem, let us first bound the expectation of $s_{\ab}$. Since $\{\g_i\}_{i=0}^{N_t-1}$'s are truncations of the state vector, we have that $\bSi(\g_i)\preceq\bSi(\x_{t+1+i\TK})\preceq \Ginf$. Write $\g_i=\bSi(\g_i)^{1/2}\h_i$ for some $\h_i\sim\Nn(0,\Iden_n)$. Using independence of $\h_i, \h_j$ for $i\neq j$ and $\bSi(\g_i)\preceq\Ginf$, we have that
\begin{align}
\E[s_{\ab}]&=\sum_{i=0}^{N_t-1}\E[\g_i^*\M_{i,i}\g_i]=\sum_{i=0}^{N_t-1}\E[\h_i^*\bSi(\g_i)^{1/2}\M_{i,i}\bSi(\g_i)^{1/2}\h_i]\\
&=\sum_{i=0}^{N_t-1}\tr{\bSi(\g_i)^{1/2}\M_{i,i}\bSi(\g_i)^{1/2}}\label{trace relation}\\
&\leq \sum_{i=0}^{N_t-1}\|\bSi(\g_i)\|\tr{\M_{i,i}}\leq \sum_{i=0}^{N_t-1}\|\Ginf\|\tr{\M_{i,i}}\\
&\leq  N_t\|\Ginf\|\gamma.\label{mean sa eq}
\end{align}
In \eqref{trace relation}, we utilized the fact that for positive semidefinite matrices trace is equal to the nuclear norm and then we used the fact that nuclear norm of the product obeys $\|\X\Y\|_\star\leq \|\X\|_\star\|\Y\|$ \cite{hogben2006handbook}. Finally, we upper bounded $\|\bSi(\g_i)\|$ by using the relation $\bSi(\g_i)\preceq \Ginf$. Bounded $\|\bSi(\g_i)\|$ also implies that the Gaussian vector $\g_i$ obeys the ``concentration property'' (Definition $2.1$ of \cite{adamczak2015note}) with $K=\order{\sqrt{\|\Ginf\|}}$ as Lipschitz functions of Gaussians concentrate. Recalling \eqref{quad form}, the Hanson-Wright Theorem of \cite{adamczak2015note} states that
\[
\Pro(s_{\ab}\geq \E[s_{\ab}]+t)\leq 2\exp(-c\min\{\frac{t^2}{\|\Ginf\|^2\|\M\|_F^2},\frac{t}{\|\Ginf\|\|\M\|}\}).
\]
To proceed, we upper bound $\|\M\|_F$ and $\|\M\|$. First, recall again that $\|\M_{i,j}\|_F\leq \gamma \rho(\A)^{|i-j|\TK}$. Adding these over all $i,j$ pairs, using \eqref{mjk bound} and the fact that there are at most $2N_t$ pairs with fixed difference $|i-j|=\tau$, we obtain
\[
\|\M\|_F^2=\sum_{i,j} \|\M_{i,j}\|_F^2\leq \sum_{0\leq i,j\leq N_t-1}\gamma^2\rho(\A)^{2|i-j|\TK}\leq 2N_t\gamma^2\sum_{\tau=0}^{N_t-1} \rho(\A)^{2\tau\TK}\leq \frac{2\gamma^2N_t}{1-\rho(\A)^{2\TK}}.
\]
To assess the spectral norm, we decompose $\M$ into $2N_t-1$ block permutation matrices $\{\M^{(i)}\}_{i=-N_t+1}^{N_t-1}$. $\M^{(0)}$ is the main diagonal of $\M$, and $\M^{(i)}$ is the $i$th off-diagonal that contains only the submatrices $\M_{j,k}$ with fixed difference $j-k=i$. By construction $\|\M^{(i)}\|\leq \gamma \rho(\A)^{|i|\TK}$ as each nonzero submatrix satisfies the same spectral norm bound. Hence using \eqref{mjk bound},
\[
\|\M\|\leq \sum_{i=-N_t+1}^{N_t-1}\|\M^{(i)}\|\leq \gamma(\frac{2}{1-\rho(\A)^{\TK}}-1)\leq \frac{2\gamma}{1-\rho(\A)^{\TK}}.
\]
With these, setting $t=\tau{N_t\|\Ginf\|\gamma}$ and using \eqref{mean sa eq} and bounds on $\tf{\M},\|\M\|$, for $\tau\geq 1$ and using $K=\order{\sqrt{\|\Ginf\|}}$, and applying Theorem $2.3$ of \cite{adamczak2015note}, we find the concentration bound
\begin{align}
\Pro(s_{\ab}\geq (\tau+1)N_t\|\Ginf\|\gamma)&\leq  2\exp(-2c\tau\min\{\frac{(N_t\|\Ginf\|\gamma)^2}{\|\Ginf\|^2\frac{2\gamma^2N_t}{1-\rho(\A)^{2\TK}}},\frac{N_t\|\Ginf\|\gamma}{\|\Ginf\|\frac{2\gamma}{1-\rho(\A)^{\TK}}}\})\\
&\leq 2\exp(-c\tau\min\{{N_t(1-\rho(\A)^{2\TK})},{N_t(1-\rho(\A)^{\TK})}\})\\
&= 2\exp(-c\tau N_t(1-\rho(\A)^{\TK})),
\end{align}
which is the desired result after $1+\tau\leftrightarrow \tau$ substitution and using the initial assumption of $\tau\geq 2$.
\end{proof}

\section{Proof of Theorem \ref{circ thm}}\label{sec circ thm}
This proof is a slight modification of the proof of Theorem $4.1$ of Krahmer et al. \cite{krahmer2014suprema} and we will directly borrow their notation and estimates. First, we restate their Theorem $3.1$. 
\begin{theorem} \label{thm 31}Let $\Ac$ be a set of matrices and let $\xi$ be a random vector whose entries $\xi_j$ are standard normal. Let $d_F,d_{2\rightarrow 2}$ be the Frobenius and spectral norm distance metrics respectively. Set
\begin{align}
&E=\gamma_2(\Ac,\|\cdot\|)(\gamma_2(\Ac,\|\cdot\|)+d_F(\Ac))+d_F(\Ac)d_{2\rightarrow 2}(\Ac),\\
&V=d_{2\rightarrow 2}(\Ac)(\gamma_2(\Ac,\|\cdot\|)+d_F(\Ac)),~\text{and}~U=d^2_{2\rightarrow 2}(\Ac).
\end{align}
Then, for some absolute constants $c_1,c_2>0$ and for all $t>0$,
\[
\Pro(\sup_{\A\in\Ac}|\tn{\A\xi}^2-\E\tn{\A\xi}^2|\geq c_1 E+t)\leq\exp(-c_2\min\{\frac{t^2}{V^2},\frac{t}{U}\}).
\]
\end{theorem}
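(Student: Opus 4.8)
The plan is to recognize $X_A := \tn{A\xi}^2 - \E\tn{A\xi}^2 = \xi^*(A^*A)\xi - \tr{A^*A}$ as a centered second-order Gaussian chaos indexed by $A\in\Ac$, and to control $\sup_{A}|X_A|$ by generic chaining. I would split the task into bounding the expectation $\E\sup_{A\in\Ac}|X_A|$ (which produces the $c_1E$ term) and the fluctuation of $\sup_A|X_A|$ around its mean (which produces the tail governed by $V$ and $U$). The natural first step is to understand the increments of the process, since everything in the chaining machinery is driven by how $X_A-X_B$ behaves.

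For a fixed pair $A,B\in\Ac$, the difference $X_A-X_B=\xi^*(A^*A-B^*B)\xi-\tr{A^*A-B^*B}$ is itself a single Gaussian quadratic form, so the Hanson--Wright inequality yields a mixed sub-gaussian/sub-exponential tail,
\[
\Pro(|X_A-X_B|\ge u)\le 2\exp\Big(-c\min\Big\{\tfrac{u^2}{\tf{A^*A-B^*B}^2},\tfrac{u}{\|A^*A-B^*B\|}\Big\}\Big).
\]
I would then translate these chaos-level distances into the index metrics $d_F,d_{2\rightarrow2}$ on $\Ac$ via the factorization $A^*A-B^*B=A^*(A-B)+(A-B)^*B$, which gives $\|A^*A-B^*B\|\le 2\,d_{2\rightarrow2}(\Ac)\,\|A-B\|$ at the sub-exponential scale and, using $\tf{XY}\le\min\{\|X\|\tf{Y},\tf{X}\|Y\|\}$ on each summand, $\tf{A^*A-B^*B}\le d_{2\rightarrow2}(\Ac)\,\tf{A-B}+d_F(\Ac)\,\|A-B\|$ at the sub-gaussian scale. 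This is precisely why the radii $d_{2\rightarrow2}(\Ac)$ and $d_F(\Ac)$ appear multiplicatively throughout $E,V,U$.

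With the increment bound in hand, the engine is the generic-chaining theorem for processes with mixed sub-gaussian/sub-exponential increments (Talagrand's majorizing-measure theory, packaged as Theorem 3.2 of Krahmer et al.\ and in Dirksen's refinements), which outputs chaining functionals $\gamma_2$ (sub-gaussian scale) and $\gamma_1$ (sub-exponential scale) together with the two-regime tail having variance parameter $V$ and exponential parameter $U$. The crucial and non-obvious point is that, although the sub-gaussian metric above looks Frobenius-like, the chaos structure permits chaining \emph{twice} in the operator-norm metric: at each level the Frobenius increment $\tf{A^*A-B^*B}$ is absorbed as an operator-norm increment times a running Frobenius radius accumulated along the chain. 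This two-level mechanism is exactly what converts the Frobenius-looking metric into $\gamma_2(\Ac,\|\cdot\|)$ and simultaneously produces the quadratic term $\gamma_2^2$ and the cross term $\gamma_2 d_F$ in $E$. The final step is bookkeeping: using $\gamma_1(\Ac,d_{2\rightarrow2})\lesssim \gamma_2(\Ac,\|\cdot\|)\,d_{2\rightarrow2}(\Ac)$ and the metric comparisons above, one collects terms into $E=\gamma_2(\gamma_2+d_F)+d_Fd_{2\rightarrow2}$, $V=d_{2\rightarrow2}(\gamma_2+d_F)$, and $U=d_{2\rightarrow2}^2$, tracking absolute constants carefully enough to cover the restricted-isometry-constant-exceeds-one regime that is the only modification over the borrowed statement.

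I expect the main obstacle to be this two-level chaos chaining: constructing an admissible sequence of partitions of $\Ac$ that simultaneously handles the sub-gaussian scale and the coarser sub-exponential scale, and summing the two-regime increment estimates along the chain without losing the correct form of $E$ — in particular recovering the quadratic term $\gamma_2(\Ac,\|\cdot\|)^2$ from the operator-norm chaining rather than a spurious Frobenius $\gamma_2$. By contrast, the Hanson--Wright increment estimate and the metric translations are comparatively routine once the abstract chaining theorem is available, so the real work is entirely in the Talagrand-style generic-chaining argument.
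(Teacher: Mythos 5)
You should first be aware that the paper does not prove this statement at all: it is imported verbatim as Theorem 3.1 of Krahmer, Mendelson and Rauhut \cite{krahmer2014suprema} (the paper's only original work in this section is the minor extension in Theorem \ref{circ thm} to the regime where the isometry constant exceeds one). So your proposal must be judged against the KMR proof, and there it has a genuine gap. Your route --- Hanson--Wright on increments plus an off-the-shelf mixed subgaussian/subexponential chaining theorem --- is exactly the classical chaos bound of Talagrand, and it provably cannot yield the stated $E$. With your increment metrics it delivers $\E\sup_{\A\in\Ac}|X_\A|\lesssim d_{2\rightarrow 2}(\Ac)\,\gamma_2(\Ac,\|\cdot\|_F)+d_F(\Ac)\,\gamma_2(\Ac,\|\cdot\|)+d_{2\rightarrow 2}(\Ac)\,\gamma_1(\Ac,\|\cdot\|)$, and your final bookkeeping step --- the claimed comparison $\gamma_1(\Ac,d_{2\rightarrow 2})\lesssim\gamma_2(\Ac,\|\cdot\|)\,d_{2\rightarrow 2}(\Ac)$ --- is false in general: $\gamma_1$ sums $2^k e_k$ rather than $2^{k/2}e_k$, and for instance a set of diameter $D$ augmented by a $(D/\sqrt{n})$-scaled $n$-dimensional operator-norm ball has $\gamma_2(\Ac,\|\cdot\|)\asymp D$ but $\gamma_1(\Ac,\|\cdot\|)\asymp D\sqrt{n}$. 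Had your comparison been true, KMR's theorem would be an immediate corollary of Talagrand's classical bound $\gamma_2(\Ac,d_F)+\gamma_1(\Ac,d_{2\rightarrow 2})$; replacing the $\gamma_1$ term by $\gamma_2(\Ac,\|\cdot\|)^2$ is precisely the point of their result, so the step you dismiss as ``bookkeeping'' is where the entire difficulty lives.

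The actual mechanism is different from the one you sketch: KMR never chain on the squared process with Hanson--Wright increments. They exploit the bilinear factorization $\tn{\A\xi}^2-\tn{\B\xi}^2=\langle(\A-\B)\xi,(\A+\B)\xi\rangle$ --- equivalently, they work with the square-root process $\A\mapsto\tn{\A\xi}$, whose increments are subgaussian with respect to the \emph{operator} norm by Gaussian concentration of the $\|\A\|$-Lipschitz map $\xi\mapsto\tn{\A\xi}$. A single subgaussian chaining at this level is where $\gamma_2(\Ac,\|\cdot\|)$ enters; the quadratic term $\gamma_2^2$ and the cross term $\gamma_2 d_F$ in $E$ arise upon multiplying by the magnitude $\sup_{\A}\tn{\A\xi}\lesssim\gamma_2(\Ac,\|\cdot\|)+d_F(\Ac)$ of the companion factor; the $d_F d_{2\rightarrow 2}$ term comes from centering a single quadratic form (this is the only place a Hanson--Wright/Bernstein bound is used); and the two-regime tail with $V=d_{2\rightarrow 2}(\gamma_2+d_F)$ and $U=d_{2\rightarrow 2}^2$ follows from concentration of the $d_{2\rightarrow 2}(\Ac)$-Lipschitz supremum together with squaring. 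Your ``two-level mechanism'' hunch correctly senses that naive chaining gives the wrong answer, but the missing idea is this factorization-plus-Lipschitz-concentration structure, not a more refined admissible-sequence construction inside an abstract $\gamma_1/\gamma_2$ theorem; as written, your argument stalls at the false $\gamma_1$--$\gamma_2$ comparison.
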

Theorem \ref{sec circ thm} is a variation of Theorem $4.1$ of \cite{krahmer2014suprema}. In light of Theorem \ref{thm 31}, we simply need to adapt the estimates developed during the proof of Theorem $4.1$ of \cite{krahmer2014suprema} for our purposes. We are interested in a fixed submatrix of size $m\times s$ compared to all $s$-column submatrices for fixed $m$-rows. This makes our set $\Ac$ a subset of their set and also makes their estimates an upper bound on our estimates. Following arguments of \cite{krahmer2014suprema}, for some constant $c_3>0$, we have
\[
d_F(\Ac)=1,~d_{2\rightarrow 2}(\Ac)\leq \sqrt{s/m},~\gamma_2(\Ac,\|\cdot\|)\leq c_3{\sqrt{s/m}\log(2s)\log(2d)}.
\]
To proceed, we will apply Theorem \ref{thm 31}. This will be done in two scenarios depending on whether isometry constant obeys $\delta\leq 1$ or not. Recall that $m_0=c_0s\log^2(2s)\log^2(2d)$. Below, we pick $c_0$ sufficiently large to compensate for $c_1,c_2,c_3$.

\noindent {\bf{$m\geq m_0$ case:}} We have that $\gamma_2(\Ac,\|\cdot\|)\leq c_3{\sqrt{s/m}\log (2s)\log (2d)}\leq 1$ so that $E\leq \sqrt{s/m}+2\gamma_2(\Ac,\|\cdot\|)\leq 3c_3\sqrt{s/m}\log(2s)\log(2d)$. Similarly, $V\leq 2\sqrt{s/m}$ and $U\leq s/m$. In this case, picking large $c_0$, observe that $c_1 E\leq \sqrt{m_0/(4m)}$. With this, we can pick $t=\sqrt{m_0/(4m)}$ to guarantee $c_1E+t\leq \sqrt{m_0/m}$. We have that $t^2/V^2\geq m_0/(16s),~t/U\geq t^2/U\geq m_0/(4s)$. Picking $c_0\geq 16/c_2$, we conclude with the desired probability $\exp(-\log^2(2d)\log^2(2s))$.

\noindent {\bf{$m< m_0$ case:}} In this case, we have
\[
\gamma_2(\Ac,\|\cdot\|)+d_F(\Ac)\leq c'{\sqrt{s/m}\log(2s)\log(2d)},
\]
where $c'=c_3+\sqrt{c_0}$. Hence, we find
\[
E\leq c'c_3(s/m)\log(2s)^2\log(2d)^2+\sqrt{s/m}.
\]
Observe that, we can ensure i) $c_1\sqrt{s/m}\leq \sqrt{m_0/m}/4\leq m_0/(4m)$ and ii) $c_1c'c_3(s/m)\log(2s)^2\log(2d)^2\leq m_0/(4m)$ for sufficiently large constant $c_0$. The latter one follows from the fact that $c'$ grows proportional to $\sqrt{c_0}$ whereas $m_0$ grows proportional to $c_0$. With this, we can pick $t=m_0/(2m)$ which guarantees $c_1E+t\leq \frac{m_0}{m}$.

To find the probability, we again pick $c_0$ to be sufficiently large to guarantee that i) $c_2t/U\geq c_2(m_0/(2m))/(s/m)\geq \log^2(2s)\log^2(2d)$ and ii)
\begin{align*}
t^2/V^2&\geq \frac{(m_0/m)^2}{4(s/m) (c'{\sqrt{s/m}\log(2s)\log(2d)})^2}\\
&= \frac{c_0^2s^2\log^4(2s)\log^4(2d)}{4s^2(c')^2\log^2(2s)\log^2(2d)}\\
&=\frac{c_0^2\log^2(2s)\log^2(2d)}{4(c_3+\sqrt{c_0})^2}\geq \log^2(2s)\log^2(2d)/c_2,
\end{align*}
which concludes the proof by yielding $\exp(-\log^2(2d)\log^2(2s))$ probability of success.

\end{document}